\def\1{\bm{1}}
\DeclareMathAlphabet{\mathsfit}{\encodingdefault}{\sfdefault}{m}{sl}
\SetMathAlphabet{\mathsfit}{bold}{\encodingdefault}{\sfdefault}{bx}{n}
\newcommand{\E}{\mathbb{E}}
\newcommand{\R}{\mathbb{R}}
\DeclareMathOperator*{\argmax}{arg\,max}
\DeclareMathOperator*{\argmin}{arg\,min}
\def\1{\bm{1}}
\newcommand{\x}{\mathbf{x}}
\newcommand{\y}{\mathbf{y}}
\theoremstyle{plain}
\newtheorem{theorem}{Theorem}[section]
\newtheorem{lemma}[theorem]{Lemma}
\newtheorem{corollary}[theorem]{Corollary}
\theoremstyle{definition}
\newtheorem{definition}[theorem]{Definition}
\theoremstyle{remark}
\newtheorem{remark}[theorem]{Remark}
\title{Gradient Boosting Performs Gaussian \\ Process Inference}
\author{%
  Aleksei Ustimenko\thanks{A major part of the work was done while working at Yandex Research.} \\
  ShareChat \\
  \texttt{research@aleksei.uk} \\
  \And
  Artem Beliakov \\
  Yandex Research, HSE University \\
  \texttt{belyakov.arteom2015@gmail.com} \\
  \And
  Liudmila Prokhorenkova \\
  Yandex Research \\
  \texttt{ostroumova-la@yandex.com}
}
\newcommand{\id}{\mathrm{Id}_{L_{2}}}
\newcommand\starge{\stackrel{\mathclap{\normalfont\mbox{*}}}{\ge}}
\begin{document}

\maketitle

\begin{abstract}
  This paper shows that gradient boosting based on symmetric decision trees can be equivalently reformulated as a kernel method that converges to the solution of a certain Kernel Ridge Regression problem. Thus, we obtain the convergence to a Gaussian Process' posterior mean, which, in turn, allows us to easily transform gradient boosting into a sampler from the posterior to provide better knowledge uncertainty estimates through Monte-Carlo estimation of the posterior variance. We show that the proposed sampler allows for better knowledge uncertainty estimates leading to improved out-of-domain detection.
\end{abstract}

\section{Introduction}

Gradient boosting~\citep{friedman2001greedy} is a classic machine learning algorithm successfully used for web search, recommendation systems, weather forecasting, and other problems~\citep{roe2005boosted,caruana2006empirical,richardson2007predicting,wu2010adapting,burges2010ranknet,zhang2015gradient}. In a nutshell, gradient boosting methods iteratively combine simple models (usually decision trees), minimizing a given loss function. Despite the recent success of neural approaches in various areas, gradient-boosted decision trees (GBDT) are still state-of-the-art algorithms for \emph{tabular} datasets containing heterogeneous features~\citep{gorishniy2021revisiting,katzir2021net}.

This paper aims at a better theoretical understanding of GBDT methods for regression problems assuming the widely used RMSE loss function.
First, we show that the gradient boosting with regularization can be reformulated as an optimization problem in some Reproducing Kernel Hilbert Space (RKHS) with implicitly defined kernel structure. After obtaining that connection between GBDT and kernel methods, we introduce a technique for sampling from prior Gaussian process distribution with the same kernel that defines RKHS so that the final output would converge to a sample from the Gaussian process posterior. Without this technique, we can view the output of GBDT as the mean function of the Gaussian process.

Importantly, our theoretical analysis assumes the regularized gradient boosting procedure (Algorithm~\ref{alg:KGB}) without any simplifications~--- we only need decision trees to be symmetric (oblivious) and properly randomized (Algorithm~\ref{alg:sampletree}). These assumptions are non-restrictive and are satisfied in some popular gradient boosting implementations, e.g., CatBoost~\citep{catboost}.

Our experiments confirm that the proposed sampler from the Gaussian process posterior outperforms the previous approaches~\citep{malinin2021uncertainty} and gives better knowledge uncertainty estimates and improved out-of-domain detection.

\section{Background}

Assume that we are given a distribution $\varrho$ over $X\times Y$, where $X \subset \mathbb{R}^d$ is called a feature space and $Y \subset \mathbb{R}$~--- a target space. Further assume that we are given a dataset $\mathbf{z} = \{(x_{i}, y_{i})\}_{i=1}^N\subset X\times Y$ of size $N \ge 1$ sampled i.i.d.~from $\varrho$. Let us denote by $\rho(\mathrm{d}x) = \int_{Y} \mathrm{d}\varrho(\mathrm{d}x,\mathrm{d}y)$. W.l.o.g., we also assume that $X = \mathrm{supp\,}\rho = \overline{\big\{x\in\mathbb{R}^d: \forall \epsilon>0, \rho(\{x'\in\mathbb{R}^d:\|x-x'\|_{\mathbb{R}^d}< \epsilon\}) > 0 \big\}}$ which is a closed subset of $\mathbb{R}^d$. Moreover, for technical reasons, we assume that $\frac{1}{2 N}\sum_{i=1}^N y_i^2 \le R^2$ for some constant $R > 0$ almost surely, which can always be enforced by clipping. Throughout the paper, we also denote by $\mathbf{x}_N$ and $\mathbf{y}_N$ the matrix of all feature vectors and the vector of targets.

\subsection{Gradient boosted decision trees}\label{sec:gbdt}

Given a loss function $L: \mathbb{R}^2 \rightarrow \mathbb{R}$, a classic gradient boosting algorithm~\citep{friedman2001greedy} iteratively combines weak learners (usually decision trees) to reduce the average loss over the train set $\mathbf{z}$: $\mathcal{L}(f) = \mathbb{E}_{\mathbf{z}} [L(f(x), y)]$. At each iteration $\tau$, the model is updated as: 
$f_{\tau}(x) = f_{\tau-1}(x) + \epsilon w_{\tau}(x)$,
where $w_{\tau}(\cdot) \in \mathcal{W}$ is a weak learner chosen from some family of functions $\mathcal{W}$, and $\epsilon$ is a learning rate. The weak learner $w_{\tau}$ is usually chosen to approximate the negative gradient of the loss function $- g_{\tau}(x,y):=-\frac{\partial L(s,y)}{\partial s}\big|_{s=f_{\tau-1}(x)}$:
\vspace{-4pt}
\begin{equation}\label{eq:gradient_step}
w_{\tau} = \argmin_{w\in \mathcal{W}} \mathbb{E}_{\mathbf{z}}\big[\big(-g_{\tau}(x,y)- w(x)\big)^2 \big].
\vspace{-1pt}
\end{equation}
The family $\mathcal{W}$ usually consists of decision trees. In this case, the algorithm is called GBDT (Gradient Boosted Decision Trees). A decision tree is a model that recursively partitions the feature space into disjoint regions called leaves. Each leaf $R_j$ of the tree is assigned to a value, which is the estimated response $y$ in the corresponding region. Thus, we can write $w(x) = \sum_{j=1}^{d} \theta_j \1_{\{x \in R_j\}}$, so the decision tree is a linear function of the leaf values $\theta_j$.

A recent paper \cite{SGLB} proposes a modification of classic stochastic gradient boosting (SGB) called \emph{Stochastic Gradient Langevin Boosting} (SGLB). SGLB combines gradient boosting with stochastic gradient Langevin dynamics to achieve global convergence even for non-convex loss functions. As a result, the obtained algorithm provably converges to some stationary distribution (invariant measure) concentrated near the global optimum of the loss function. We mention this method because it samples from similar distribution as our method but with a different kernel.

\subsection{Estimating uncertainty}

In addition to the predictive quality, it is often important to detect when the system is uncertain and can be mistaken. For this, different measures of \emph{uncertainty} can be used. There are two main sources of uncertainty: \emph{data uncertainty} (a.k.a.~aleatoric uncertainty) and \emph{knowledge uncertainty} (a.k.a.~epistemic uncertainty). Data uncertainty arises due to the inherent complexity of the data, such as additive noise or overlapping classes. For instance, if the target is distributed as $y|x \sim \mathcal{N}(f(x), \sigma^2(x))$, then $\sigma(x)$ reflects the level of data uncertainty. This uncertainty can be assessed if the model is probabilistic.

Knowledge uncertainty arises when the model gets input from a region either sparsely covered by or far from the training data. Since the model does not have enough data in this region, it will likely make a mistake. A standard approach to estimating knowledge uncertainty is based on \emph{ensembles}~\citep{galthesis,malinin-thesis}. Assume that we have trained an ensemble of several independent models. If all the models understand an input (low knowledge uncertainty), they will give similar predictions. However, for out-of-domain examples (high knowledge uncertainty), the models are likely to provide diverse predictions. For regression tasks, one can obtain knowledge uncertainty by measuring the variance of the predictions provided by multiple models~\citep{malinin-thesis}.

Such ensemble-based approaches are standard for neural networks~\citep{deepensemble2017}. Recently, ensembles were also tested for GBDT models~\citep{malinin2021uncertainty}. The authors consider two ways of generating ensembles: ensembles of independent SGB models and ensembles of independent SGLB models. While empirically methods are very similar, SGLB has better theoretical properties: the convergence of parameters to the stationary distribution allows one to sample models from a particular posterior distribution. One drawback of SGLB is that its convergence rate is unknown, as the proof is asymptotic. However, the convergence rate can be upper bounded by that of Stochastic Gradient Langevin Dynamics for log-concave functions, e.g., \citet{SGLD-fast}, which is not dimension-free. In contrast, our rate is dimension-free and scales linearly with inverse precision.

\subsection{Gaussian process inference}\label{sec:bayesian_linear_model}

In this section, we briefly describe the basics of Bayesian inference with Gaussian processes that is closely related to our analysis. A random variable $f$ with values in $L_{2}(\rho)$ is said to be a Gaussian process $f\sim \mathcal{GP}\big(f_{0}, \sigma^2\mathcal{K}+\delta^2 \id\big)$ with covariance defined via a kernel $\mathcal{K}(x, x') = \mathrm{cov\,}\big(f(x), f(x')\big)$ and mean value $f_{0}\in L_{2}(\rho)$ iff $\forall g\in L_{2}(\rho)$ we have
\begin{small}
\begin{equation*}
\int_{X} f(x) g(x)\rho(\mathrm{d}x) \sim \mathcal{N}\Big(\int_{X} f_0(x) g(x)\rho(\mathrm{d}x), \sigma^2\int_{X\times X} g(x) \mathcal{K}(x, x') g(x')\rho(\mathrm{d}x)\otimes \rho(\mathrm{d}x')+\delta^2\|g\|_{L_2}^2\Big)\,.
\end{equation*}
\end{small}
A typical Gaussian Process setup is to assume that the target $y|x$ is distributed as $\mathcal{GP}\big(\mathbb{0}_{L_{2}(\rho)}, \sigma^2 \mathcal{K}+\delta^2 \id\big)$ for some kernel function\footnote{$\mathcal{K}(x, x')$ is a kernel function if $\big[\mathcal{K}(x_i, x_j)\big]_{i=1, j=1}^{N, N} \ge 0$ for any $N\ge 1$ and any $x_i \in X$ almost surely.} $\mathcal{K}(x, x')$ with scales $\sigma > 0$ and $\delta > 0$:
\begin{equation*}
y|x = f(x), \,\,\,
f \sim \mathcal{GP}(\mathbb{0}_{L_{2}(\rho)},\sigma^2 \mathcal{K} + \delta^2 \id).
\end{equation*}
The posterior distribution $f(x)|x, \mathbf{x}_N, \mathbf{y}_N$ is again a Gaussian Process $\mathcal{GP}(f_{*}, \sigma^2\widetilde{\mathcal{K}}+\delta^2 \id)$ with mean and covariance given by (see \cite{williams2006gausprosses}):
\begin{align*}
f_{*}^{\lambda}(x) = { }&\mathcal{K}(x, \mathbf{x}_N)\big(\mathcal{K}(\mathbf{x}_N, \mathbf{x}_N)+ \lambda I_{N}\big)^{-1} \mathbf{y}_N, \\
\widetilde{\mathcal{K}}(x, x) =
{ }&\mathcal{K}(x, x) - \mathcal{K}(x, \mathbf{x}_N) \cdot \big(\mathcal{K}(\mathbf{x}_N, \mathbf{x}_N)+ \lambda I_{N}\big)^{-1} \mathcal{K}(\mathbf{x}_N, x).
\end{align*}
with $\lambda = \frac{\delta^2}{\sigma^2}$.
To estimate the posterior mean $f_{*}^{\lambda}(x) = \int_{\mathbb{R}} f p(f|x, \mathbf{x}_N, \mathbf{y}_N)\,\mathrm{d}f$, we can use the maximum a posteriori probability estimate (MAP)~--- the only solution for the following Kernel Ridge Regression (KRR) problem:
$$
L(f) = 
\frac{1}{2N} \sum_{i=1}^N \big(f(x_i) - y_i\big)^2 + \frac{\lambda}{2N} \|f\|_{\mathcal{H}}^2 \rightarrow \min_{f\in \mathcal{H}}\,,
$$ 
where $\mathcal{H} = \overline{\mathrm{span\,} \big\{\mathcal{K}(\cdot, x)\big|x\in X\big\}} \subset L_{2}(\rho)$ is the reproducing kernel Hilbert space (RKHS) for the kernel $\mathcal{K}(\cdot, \cdot)$ and $ L(f) \rightarrow \min_{f\in \mathcal{H}}$ means that we are seeking minimizers of this function $L(f)$. We refer to Appendix~\ref{appendix:KRR} for the details on KRR and RKHS.

To solve the KRR problem, one can apply the gradient descent (GD) method in the functional space:
\begin{equation}
\label{equation:gradient_descent}
f_{\tau + 1} = (1-\frac{\lambda \epsilon}{N}) f_{\tau} - \epsilon \frac{1}{N} \sum_{i=1}^N (f_{\tau}(x_i) - y_i) \mathcal{K}_{x_i}\,, \;
f_{0} = \mathbb{0}_{L_{2}(\rho)}\,,
\end{equation}
where $\epsilon>0$ is a learning rate.

Since this objective is strongly convex due to the regularization, the gradient descent rapidly converges to the unique optimum:
$$f_{*}^{\lambda} = \lim_{\tau \to \infty} f_{\tau} = \mathcal{K}(\cdot, \x_N)\big( \mathcal{K}(\x_N, \x_N)+\lambda I_{N}\big)^{-1} \y_N,$$
see Appendices~\ref{appendix:convex_optimization} and~\ref{appendix:gaussian_process_inference} for the details.

Gradient descent guides $f_{\tau}$ to the posterior mean $f_{*}^{\lambda}$ of the Gaussian Process with kernel $\sigma^2 \mathcal{K} + \delta^2 \id$. To obtain the posterior variance estimate $\widetilde{\mathcal{K}}(x, x)$ for any $x$, one can use sampling and introduce a source of randomness in the above iterative scheme as follows:
\begin{align*}
\label{scheme:KRRGD}
&\text{1. sample }f^{init} \sim \mathcal{GP}(\mathbb{0}_{L_{2}(\rho)}, \sigma^2 \mathcal{K}+\delta^2 \id); \nonumber \\
&\text{2. set new labels }\mathbf{y}_N^{new} = \mathbf{y}_N - f^{init}(\mathbf{x}_N); \nonumber \\
&\text{3. fit GD }f_{\tau}(\cdot) \text{ on }\mathbf{y}_N^{new}\text{ assuming }F_{0}(\cdot) = \mathbb{0}_{L_{2}(\rho)} ; \nonumber \\
&\text{4. output } f^{init}(\cdot) + f_{\tau}(\cdot) \text{ as final model}.
\end{align*}

This method is known as Sample-then-Optimize \citep{hron2017sample} and is widely adopted for Bayesian inference. As $\tau \to \infty$, we get $f^{init} + f_{\infty}$ distributed as a Gaussian Process posterior with the desired mean and variance. Formally, the following result holds:

\begin{lemma} 
\label{lemma:gaussian_posterior}
$f^{init} + f_{\infty}$ follows the Gaussian Process posterior $\mathcal{GP}(f_{*}^{\lambda}, \sigma^2\widetilde{\mathcal{K}}+\delta^2\id)$ with:
\begin{align*}
f_{*}^{\lambda}(x)  ={ }& \mathcal{K}(x, \x_N)\big(\mathcal{K}(\x_N, \x_N)+\lambda I_{N}\big)^{-1} \y_N\,, \\
\widetilde{\mathcal{K}}(x, x) ={ }& \mathcal{K}(x, x) - \mathcal{K}(x, \x_N) \big(\mathcal{K}(\x_N, \x_N)+\lambda I_{N}\big)^{-1} \mathcal{K}(\x_N, x) \,.
\end{align*}
\end{lemma}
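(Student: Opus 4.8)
The plan is to exploit that the entire Sample-then-Optimize output is an \emph{affine} image of the single Gaussian draw $f^{init}$, hence automatically a Gaussian process, so that it suffices to match the first two moments against $\mathcal{GP}(f_{*}^{\lambda}, \sigma^2\widetilde{\mathcal{K}}+\delta^2\id)$. First I would invoke the convergence of the functional gradient descent \eqref{equation:gradient_descent}, already established in the excerpt, run on the relabeled targets $\mathbf{y}_N^{new} = \y_N - f^{init}(\x_N)$ with zero initialization, to obtain the closed form
\begin{equation*}
f_{\infty} = \mathcal{K}(\cdot, \x_N)\bigl(\mathcal{K}(\x_N, \x_N) + \lambda I_{N}\bigr)^{-1}\bigl(\y_N - f^{init}(\x_N)\bigr).
\end{equation*}
Writing $A = \bigl(\mathcal{K}(\x_N, \x_N) + \lambda I_{N}\bigr)^{-1}$ and regrouping, the final model splits as $f^{init} + f_{\infty} = f_{*}^{\lambda} + g$, where $f_{*}^{\lambda} = \mathcal{K}(\cdot, \x_N)\, A\, \y_N$ is exactly the asserted posterior mean and $g = f^{init} - \mathcal{K}(\cdot, \x_N)\, A\, f^{init}(\x_N)$ is a centered linear functional of $f^{init}$. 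Since $f^{init}$ has zero mean, $\mathbb{E}[f^{init}+f_{\infty}] = f_{*}^{\lambda}$, which settles the mean; it remains only to show $g \sim \mathcal{GP}(\mathbb{0}_{L_{2}(\rho)}, \sigma^2\widetilde{\mathcal{K}} + \delta^2\id)$.

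For the covariance I would expand directly from the prior covariance $\sigma^2\mathcal{K}+\delta^2\id$ of $f^{init}$:
\begin{align*}
\mathrm{cov}\bigl(g(x), g(x')\bigr) &= \mathrm{cov}\bigl(f^{init}(x), f^{init}(x')\bigr) - 2\,\mathcal{K}(x, \x_N)\, A\, \mathrm{cov}\bigl(f^{init}(\x_N), f^{init}(x')\bigr) \\
&\quad + \mathcal{K}(x, \x_N)\, A\, \mathrm{cov}\bigl(f^{init}(\x_N)\bigr)\, A\, \mathcal{K}(\x_N, x'),
\end{align*}
where the two cross terms have been merged into the factor $2$ using the symmetry of $A$. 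The decisive algebraic identity is that the prior covariance of the training evaluations satisfies $\mathrm{cov}(f^{init}(\x_N)) = \sigma^2\mathcal{K}(\x_N,\x_N) + \delta^2 I_N = \sigma^2 A^{-1}$ (precisely because $\lambda = \delta^2/\sigma^2$). Substituting this collapses the quadratic term to $\sigma^2\mathcal{K}(x,\x_N)\,A\,\mathcal{K}(\x_N,x')$, which cancels half of the cross term, leaving $\sigma^2\bigl(\mathcal{K}(x,x') - \mathcal{K}(x,\x_N)\,A\,\mathcal{K}(\x_N,x')\bigr) = \sigma^2\widetilde{\mathcal{K}}(x,x')$ off the diagonal, while the extra $\delta^2$ coming from the $\id$ part of $\mathrm{cov}(f^{init}(x),f^{init}(x))$ survives only on the diagonal. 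This is exactly the kernel $\sigma^2\widetilde{\mathcal{K}}+\delta^2\id$.

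The step I expect to be the main obstacle is making the point evaluations $f^{init}(\x_N)$ and $f^{init}(x)$ rigorous, since the white-noise part $\delta^2\id$ of the prior has no pointwise realization in $L_{2}(\rho)$. I would handle this entirely at the level of the test-function definition of the Gaussian process from Section~\ref{sec:bayesian_linear_model}: work with the joint law of the finite-dimensional evaluation vector at the $N$ training locations together with any finite family of query points, where the $\id$ contribution reduces to an independent $\delta^2 I$ block and the cross-covariance between distinct points carries only the $\sigma^2\mathcal{K}$ part; carry out the moment matching above on these mutually consistent finite-dimensional marginals, and conclude by Kolmogorov consistency that $g$ is the claimed process. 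A final remark is that passing from $f_{\tau}$ to $f_{\infty}$ is legitimate because the $L_{2}(\rho)$ convergence of gradient descent transfers, through the continuity of the affine map, to convergence in distribution of $f^{init}+f_{\tau}$ to the Gaussian law with the matched mean and covariance.
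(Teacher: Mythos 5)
Your proof follows essentially the same route as the paper's: obtain the closed form $f_{\infty} = \mathcal{K}(\cdot,\x_N)(\mathcal{K}(\x_N,\x_N)+\lambda I_N)^{-1}(\y_N - f^{init}(\x_N))$ from the convergence of functional gradient descent, observe the output is an affine image of the Gaussian draw $f^{init}$, and match the mean and covariance using the identity $\mathrm{cov}(f^{init}(\x_N)) = \sigma^2\mathcal{K}(\x_N,\x_N)+\delta^2 I_N = \sigma^2 A^{-1}$. Your additional care about the pointwise meaning of the $\delta^2\id$ component (working through finite-dimensional marginals at query and training points) is a refinement the paper's proof silently skips, but it does not change the argument.
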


\section{Evolution of GBDT in RKHS}

\subsection{Preliminaries}

In our analysis, we assume that we are given a finite set $\mathcal{V}$ of weak learners used for the gradient boosting.\footnote{The finiteness of $\mathcal{V}$ is important for our analysis, and it
usually holds in practice, see Section~\ref{sec:sampletree}.} Each $\nu$ corresponds to a decision tree that defines a partition of the feature space into disjoint regions (leaves). For each $\nu\in \mathcal{V}$, we denote the number of leaves in the tree by $L_{\nu} \ge 1$. Also, let $\phi_{\nu}:X\rightarrow \{0, 1\}^{L_{\nu}}$ be a mapping that maps $x$ to the vector indicating its leaf index in the tree $\nu$. This mapping defines a decomposition of $X$ into the disjoint union:
$X = \cup_{j=1}^{L_{\nu}} \big\{x\in X\big| \phi_{\nu}^{(j)}(x) = 1\big\}$. Having $\phi_{\nu}$, we define a weak learner associated with it as $x\mapsto \langle \theta, \phi_{\nu}(x)\rangle_{\mathbb{R}^{L_{\nu}}}$ for any choice of $\theta\in \mathbb{R}^{L_{\nu}}$ which we refer to as `leaf values'. In other words, $\theta$ corresponds to predictions assigned to each region of the space defined by~$\nu$.

Let us define a linear space $\mathcal{F}\subset L_{2}(\rho)$ of all possible ensembles of trees from $\mathcal{V}$:
\[
\mathcal{F} = {\mathrm{span\,}\big\{\phi^{(j)}_{\nu}(\cdot):X\rightarrow \{0,1\}\big| \nu\in \mathcal{V},  j\in\{1,\ldots, L_{\nu}\}\big\}}\,.
\]
We note that the space $\mathcal{F}$ can be data-dependent since $\mathcal{V}$ may depend on $\mathbf{z}$, but we omit this dependence in the notation for simplicity. Note that we do not take the closure w.r.t.~the topology of $L_2(\rho)$ since we assume that $\mathcal{V}$ is finite and therefore $\mathcal{F}$ is finite-dimensional and thus topologically closed.

\subsection{GBDT algorithm under consideration}\label{sec:sampletree}

Our theoretical analysis holds for classic GBDT algorithms discussed in Section~\ref{sec:gbdt} equipped with regularization from~\cite{SGLB}. The only requirement we need is that the procedure of choosing each new tree has to be properly randomized. Let us discuss a tree selection algorithm that we assume in our analysis.

Each new tree approximates the gradients of the loss function with respect to the current predictions of the model. Since we consider the RMSE loss function, the gradients are proportional to the residuals $r_j = y_j - f(x_j)$, where $f$ is the currently built model. The tree structure is defined by the features and the corresponding thresholds used to split the space. The analysis in this paper assumes the \emph{SampleTree} procedure (see Algorithm~\ref{alg:sampletree}), which is a classic approach equipped with proper randomization. \emph{SampleTree} builds an \emph{oblivious} decision tree~\citep{catboost}, i.e., all nodes at a given level share the same splitting criterion (feature and threshold).\footnote{In fact, the procedure can be extended to arbitrary trees, but this would over-complicate formulation of the algorithm and would not change the space of tree ensembles as any non-symmetric tree can be represented as a sum of symmetric ones.} To limit the number of candidate splits, each feature is quantized into $n+1$ bins. In other words, for each feature, we have $n$ thresholds that can be chosen arbitrarily.\footnote{A standard approach is to quantize the feature such that all $n+1$ buckets have approximately the same number of training samples.} The maximum tree depth is limited by $m$. Recall that we denote the set of all possible tree structures by~$\mathcal{V}$.

We build the tree in a top-down greedy manner. At each step, we choose one split among all the remaining candidates based on the following \emph{score} defined for $\nu \in \mathcal{V}$ and residuals $r$:
\begin{equation}\label{eq:D}
	D(\nu, r) := \frac{1}{N}\sum_{j=1}^{L_v}  \frac{\left(\sum_{i=1}^N \phi_{\nu}^{(j)}(x_i) \,r_i\right)^2}{\sum_{i=1}^N \phi_{\nu}^{(j)}(x_i)} \,.
\end{equation}

\begin{minipage}{0.52\textwidth}
\begin{algorithm}[H]
   \caption{$\text{SampleTree}(r; m, n, \beta)$}
   \label{alg:sampletree}
\begin{algorithmic}
   \STATE {\bfseries input:} residuals $r = (r_i)_{i=1}^N$
   \STATE {\bfseries output:} oblivious tree structure $\nu \in \mathcal{V}$
   \STATE {\bfseries hyper-parameters:} number of feature splits $n$, max.~tree depth $m$, random strength  $\beta \in [0, \infty)$
   \STATE {\bfseries definitions:}
   \STATE $\mathcal{S} = \big\{(j,k)\big|j\in \{1,\ldots, d\}, k\in \{1,\ldots, n\}\big\}$~--- indices of all possible splits
   \vspace{3pt}
   \STATE {\bfseries instructions:}
   \STATE initialize $i = 0$, $\nu_0 = \emptyset$, $\mathcal{S}^{(0)} = \mathcal{S}$
   \REPEAT
   \STATE sample $(u_i(s))_{s\in \mathcal{S}^{(i)}} \sim \mathrm{U}\big([0,1]^{n d-i}\big)$
   \STATE choose next split as
   $\{s_{i+1}\} = \underset{s\in\mathcal{S}^{(i)}}{\argmax}\Big(D\big((\nu_i, s), r\big) - \beta \log (-\log u_{i}(s))\Big)$
   \STATE update tree: $\nu_{i+1} = (\nu_i, s_{i+1})$
    \STATE update candidate splits: $\mathcal{S}^{(i+1)} = \mathcal{S}^{(i)} \backslash 
   \{s_{i+1}\}$
   \STATE $i = i+1$
   \UNTIL{$i \ge m$} {\bfseries or } $\mathcal{S}^{(i)} = \emptyset$
   \STATE {\bfseries return:} $\nu_{i}$ 
\end{algorithmic}
\end{algorithm}
\end{minipage}
\hspace{0.02\textwidth}
\begin{minipage}{0.45\textwidth}
\begin{algorithm}[H]
   \caption{TrainGBDT$(\mathbf{z}; \epsilon, T, m, n, \beta, \lambda)$}
   \label{alg:KGB}
\begin{algorithmic}
   \STATE {\bfseries input:} dataset $\mathbf{z} = (\mathbf{x}_{N}, \mathbf{y}_{N})$  
   \STATE {\bfseries hyper-parameters: } learning rate $\epsilon > 0$, regularization $\lambda>0$, iterations of boosting $T$, parameters of SampleTree $m,n,\beta$
   \STATE {\bfseries instructions:}
   \STATE initialize $\tau = 0$, $f_0(\cdot) = \mathbb{0}_{L_{2}(\rho)}$
   \REPEAT
   \STATE $r_{\tau} = \mathbf{y}_N-f_{\tau}(\mathbf{x}_N)$~--- compute residuals
   \STATE $\nu_{\tau} = \text{SampleTree}(r_{\tau}; m,n, \beta)$~--- construct a tree
   \STATE $\theta_{\tau} = \Big(\frac{\sum_{i=1}^N \phi_{\nu_{\tau}}^{(j)}(x_{i}) r^{(i)}_{\tau}}{ \sum_{i=1}^N \phi_{\nu_{\tau}}^{(j)}(x_{i})}\Big)_{j=1}^{L_{\nu_{\tau}}}$~--- set values in leaves
   \STATE $f_{\tau+1}(\cdot) = (1-\frac{\lambda \epsilon}{N}) f_{\tau}(\cdot) +\epsilon \Big\langle\phi_{\nu_{\tau}}(\cdot), \theta_{\tau}\Big\rangle_{\mathbb{R}^{L_{\nu_{\tau}}}} $~--- update model
    \STATE $\tau = \tau + 1$
   \UNTIL{$\tau \ge T$}
   \STATE {\bfseries return:} $f_{T}(\cdot)$ 
\end{algorithmic}
\end{algorithm}
\end{minipage}

In classic gradient boosting, one builds a tree recursively by choosing such split $s$ that maximizes the score $D((\nu_i,s), r)$~\citep{ibragimov2019minimal}.\footnote{Maximizing~\eqref{eq:D} is equivalent to minimizing the squared error between the residuals and the mean values in the leaves.}
Random noise is often added to the scores to improve generalization. In \emph{SampleTree}, we choose a split that maximizes
\begin{equation}\label{eq:gumbel_noise}
	D\big((\nu_i, s), r\big) + \varepsilon, \text{ where } \varepsilon \sim \mathrm{Gumbel}(0, \beta)\,.
\end{equation}

Here $\beta$ is random strength: $\beta=0$ gives the standard greedy approach, while $\beta\to\infty$ gives the uniform distribution among all possible split candidates.

To sum up, \emph{SampleTree} is a classic oblivious tree construction but with added random noise. We do this to make the distribution of trees regular in a certain sense: roughly speaking, the distributions should stabilize with iterations by converging to some fixed distribution.

Given the algorithm \emph{SampleTree}, we describe the gradient boosting procedure assumed in our analysis in Algorithm~\ref{alg:KGB}. It is a classic GBDT algorithm but with 
the update rule $f_{\tau+1}(\cdot) = \left(1-\lambda\epsilon/N\right) f_{\tau}(\cdot) +\epsilon w_{\tau}(x)$. In other words, we shrink the model at each iteration, which serves as regularization~\citep{SGLB}. Such shrinkage is available, e.g., in the CatBoost library.

\subsection{Distribution of trees}
\label{section:distribution_of_trees}
The \emph{SampleTree} algorithm induces a local family of distributions $p(\cdot| f, \beta)$ for each $f\in \mathcal{F}$:
\begin{equation*}
p(\mathrm{d}\nu|f,\beta) = \mathbb{P}\Big(\Big\{\mathrm{SampleTree}\big(\mathbf{y}_{N} - f(\mathbf{x}_N); m, n, \beta\big)\in \mathrm{d}\nu\Big\}\Big).
\end{equation*}

\begin{remark}\label{rem:3.4} Lemma~\ref{lemma:orthogonality_kernel} ensure that such distribution coincides with the one where we use $f_{*}(\mathbf{x}_{N})$ instead of $\mathbf{y}_{N}$. This is due to the fact that $D\big(\nu, \mathbf{y}_N - f(\mathbf{x}_N)\big) = D\big(\nu, f_{*} - f(\mathbf{x}_N)\big) \,\forall \nu \in \mathcal{V}, \, f \in \mathcal{F}$.
\end{remark}
The following lemma describes the distribution $p(\mathrm{d}\nu|f,\beta)$, see Appendix~\ref{appendix:distribution_of_trees} for the proof. 

\begin{lemma}\label{lem:tree-probability} (Probability of a tree)\footnote{Note that for oblivious decision trees, changing the order of splits does not affect the obtained partition. Hence, we assume that each tree is defined by an unordered set of splits.}
$$p(\nu|f,\beta) =\sum_{\varsigma \in \mathcal{P}_{m}} \prod_{i=1}^{m} \frac{e^{\frac{D(\nu_{\varsigma, i}, r)}{\beta}}}{\sum_{s\in\mathcal{S}\backslash \nu_{\varsigma, i-1}} e^{\frac{ D((\nu_{\varsigma, i-1}, s), r)}\beta}}\,,$$
where the sum is over all permutations $\varsigma \in \mathcal{P}_{m}$,
$\nu_{\varsigma, i} = (s_{\varsigma(1)},\ldots, s_{\varsigma(i)})$, and $\nu = (s_{1},\ldots, s_{m})$.
\end{lemma}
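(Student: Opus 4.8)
The plan is to recognize the noise injection in \emph{SampleTree} as an instance of the Gumbel-max trick, reduce each single split choice to a Gibbs (softmax) distribution over the remaining candidates, assemble the probability of one ordered execution by a chain-rule product, and finally sum over all orderings that yield the same oblivious partition. Throughout I fix $f \in \mathcal{F}$ and write $r = \mathbf{y}_N - f(\mathbf{x}_N)$, so $D(\cdot, r)$ is the score in~\eqref{eq:D}. I first record the distributional fact behind the perturbation $-\beta \log(-\log u_i(s))$ with $u_i(s) \sim \mathrm{U}[0,1]$: since $-\log u \sim \mathrm{Exp}(1)$, the quantity $-\beta \log(-\log u)$ is distributed as $\mathrm{Gumbel}(0,\beta)$, matching~\eqref{eq:gumbel_noise}. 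Thus at iteration $i$ the algorithm returns $\argmax_{s \in \mathcal{S}^{(i)}}\big(D((\nu_i, s), r) + G_s\big)$ with $G_s$ i.i.d.\ $\mathrm{Gumbel}(0,\beta)$, and a \emph{fresh} independent noise vector is drawn at every iteration.

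The key step is the single-step Gumbel-max identity: for fixed scores $v_s := D((\nu_i, s), r)$ over a candidate set $C = \mathcal{S}^{(i)}$, the probability that a given $s^\ast$ attains the perturbed maximum equals $e^{v_{s^\ast}/\beta}\big/\sum_{s \in C} e^{v_s/\beta}$. I would prove this by conditioning on $G_{s^\ast} = g$ and writing the conditional probability that every competitor falls below as $\prod_{s \ne s^\ast}\exp(-e^{-(v_{s^\ast}-v_s+g)/\beta})$ using the Gumbel CDF. Multiplying by the density $\frac{1}{\beta} e^{-g/\beta}\exp(-e^{-g/\beta})$ of $G_{s^\ast}$, the extra factor $\exp(-e^{-g/\beta})$ completes the product over all of $C$, giving $\frac{1}{\beta} e^{-g/\beta}\exp(-e^{-g/\beta}\sum_{s\in C} e^{(v_s-v_{s^\ast})/\beta})$. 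The substitution $t = e^{-g/\beta}$ turns $\int_{\mathbb{R}}(\cdot)\,\mathrm{d}g$ into $\int_0^\infty \exp(-t\sum_{s\in C} e^{(v_s-v_{s^\ast})/\beta})\,\mathrm{d}t$, which evaluates to the stated softmax.

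Because the noise is resampled independently at each iteration, the sequence of chosen splits is a Markov chain whose transition at step $i$ is exactly this softmax over $\mathcal{S}^{(i)} = \mathcal{S}\setminus \nu_i$. By the chain rule, the probability of one \emph{ordered} execution producing $s_{\varsigma(1)},\ldots, s_{\varsigma(m)}$ is the product over $i$ of $e^{D(\nu_{\varsigma,i}, r)/\beta}$ divided by $\sum_{s \in \mathcal{S}\setminus \nu_{\varsigma, i-1}} e^{D((\nu_{\varsigma, i-1}, s), r)/\beta}$, where $\nu_{\varsigma,i} = (s_{\varsigma(1)},\ldots, s_{\varsigma(i)})$ and the numerator rewrites $D((\nu_{\varsigma,i-1}, s_{\varsigma(i)}), r) = D(\nu_{\varsigma,i}, r)$. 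Since for oblivious trees the partition depends only on the unordered set of splits, the event $\{\text{SampleTree} = \nu\}$ is the disjoint union over the orderings $\varsigma \in \mathcal{P}_m$ of these ordered executions; adding their probabilities yields the claimed formula.

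The main obstacle I anticipate is the single-step Gumbel-max computation itself — correctly folding the density factor into the product of Gumbel tail terms and justifying the substitution in the integral — together with the bookkeeping that the noise is genuinely resampled per iteration (so the Markov/chain-rule factorization is legitimate and each transition is an \emph{independent} softmax rather than one correlated across steps) and that distinct orderings correspond to disjoint sample paths, so that their probabilities simply add.
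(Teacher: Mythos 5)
Your proposal is correct and follows essentially the same route as the paper's proof: a per-step softmax transition obtained from the Gumbel-max trick, a chain-rule product over the $m$ ordered split choices, and a sum over the $m!$ orderings that yield the same unordered (oblivious) split set. The only difference is that you prove the single-step Gumbel-max identity explicitly (correctly, including the substitution $t=e^{-g/\beta}$), whereas the paper simply cites the Gumbel-SoftMax trick.
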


Let us define the stationary distribution of trees as $\pi(\cdot) = \lim_{\beta \to \infty} p(\cdot|f, \beta)$. It follows from Remark~\ref{rem:3.4} that we also have $\pi(\cdot) = p(\cdot|f_{*}, \beta)$.

\begin{corollary} (Stationary distribution is the uniform distribution over tree structures) \\
We have $\pi(\mathrm{d}\nu) = \big|\mathrm{d}\nu\big|\big/\binom{nd}{m}$, where $\binom{nd}{m} = \frac{(nd)!}{(nd - m)!m!}$.
\end{corollary}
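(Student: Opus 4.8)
The plan is to take the limit $\beta \to \infty$ directly in the formula for $p(\nu | f, \beta)$ provided by Lemma~\ref{lem:tree-probability}. Recall that the lemma expresses the probability as a sum over all permutations $\varsigma \in \mathcal{P}_m$ of products of $m$ ratios of the form
$$
\frac{e^{D(\nu_{\varsigma,i}, r)/\beta}}{\sum_{s \in \mathcal{S}\backslash \nu_{\varsigma, i-1}} e^{D((\nu_{\varsigma, i-1}, s), r)/\beta}}\,.
$$
The key observation is that each such ratio is precisely a softmax weight over the set of remaining candidate splits $\mathcal{S}\backslash \nu_{\varsigma, i-1}$. As $\beta \to \infty$, the argument $D/\beta$ of each exponential tends to $0$, so every $e^{D/\beta} \to 1$. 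Consequently each softmax ratio converges to $1/|\mathcal{S}\backslash \nu_{\varsigma, i-1}|$, i.e.~the uniform weight over whatever candidate splits remain at step $i$.

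Concretely, I would first fix a permutation $\varsigma$ and evaluate the limit of the $i$-th factor in its product. At step $i$ there are $|\mathcal{S}| - (i-1) = nd - (i-1)$ remaining candidate splits (since $|\mathcal{S}| = nd$ and $i-1$ splits have already been selected), so the denominator has $nd - i + 1$ summands, each tending to $1$. Therefore the $i$-th factor tends to $1/(nd - i + 1)$. Multiplying over $i = 1, \dots, m$ gives, for each single permutation,
$$
\prod_{i=1}^m \frac{1}{nd - i + 1} = \frac{(nd - m)!}{(nd)!}\,.
$$
This limit is the same for every permutation, so it does not depend on $\varsigma$.

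Next I would count the permutations. Since the tree $\nu$ consists of $m$ distinct splits and $\mathcal{P}_m$ ranges over all orderings of these $m$ splits, there are $m!$ terms in the sum, each contributing the identical limit $(nd-m)!/(nd)!$. Summing yields
$$
\pi(\nu) = \lim_{\beta\to\infty} p(\nu | f, \beta) = m! \cdot \frac{(nd-m)!}{(nd)!} = \frac{1}{\binom{nd}{m}}\,,
$$
which is exactly the claimed uniform distribution $\pi(\mathrm{d}\nu) = |\mathrm{d}\nu| / \binom{nd}{m}$. I would also remark that, because each of the finitely many factors converges and the sum over $\mathcal{P}_m$ is finite, interchanging the limit with the finite sum and finite product is automatic, so no uniformity or dominated-convergence argument is needed.

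The only point requiring mild care, and the place I would watch most closely, is the bookkeeping of how many candidate splits remain at each level. One must confirm that $|\mathcal{S}| = nd$ (there are $d$ features each with $n$ thresholds, matching the definition of $\mathcal{S}$ in Algorithm~\ref{alg:sampletree}) and that exactly $i-1$ splits have been removed before step $i$, so that the denominators produce the descending factorial $(nd)(nd-1)\cdots(nd-m+1)$. Given that this product equals $(nd)!/(nd-m)!$, the cancellation against the $m!$ permutation count is immediate, and the result follows. The identification $\pi(\cdot) = p(\cdot | f_*, \beta)$ noted after the corollary is not needed here, as the $\beta \to \infty$ limit already washes out all dependence on $r$ and hence on $f$.
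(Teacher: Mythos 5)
Your proof is correct and is essentially the argument the paper intends: the corollary is presented as a direct consequence of Lemma~\ref{lem:tree-probability}, obtained by letting $\beta\to\infty$ so that each softmax factor becomes uniform over the $nd-i+1$ remaining candidates, giving $\prod_{i=1}^m (nd-i+1)^{-1} = (nd-m)!/(nd)!$ per permutation and $m!$ identical terms in the sum. Your bookkeeping of $|\mathcal{S}|=nd$, the distinctness of the $m$ splits, and the harmlessness of exchanging the limit with the finite sum and product are all accurate, so there is nothing to add.
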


\subsection{RKHS structure}
\label{section:RKHS_structure}

In this section, we describe the evolution of GBDT in a certain Reproducing Kernel Hilbert Space (RKHS). Even though the problem is finite dimensional, treating it as functional regression is more beneficial as dimension of the ensembles space grows rapidly and therefore we want to obtain dimension-free constants which is impossible if we treat it as finite dimensional optimization problem. Let us start with defining necessary kernels. For convenience, we also provide a diagram illustrating the introduced kernels and relations between them in Appendix~\ref{app:notation}.

\begin{definition}\label{def:wlk} A weak learner's kernel $k_{\nu}(\cdot,\cdot)$ is a kernel function associated with a tree structure $\nu \in \mathcal{V}$ which can be defined as:
$$k_{\nu}(x, x') = \sum_{j=1}^{L_{\nu}} w_{\nu}^{(j)} \phi_{\nu}^{(j)}(x) \phi_{\nu}^{(j)}(x'), \text{ where } w_{\nu}^{(j)} = \frac{N}{\max\{N_{\nu}^{(j)},1\}}, \,N_{\nu}^{(j)} = \sum_{i=1}^N \phi_{\nu}^{(j)}(x_{i})\,.$$
\end{definition}
This weak learner's kernel is a building block for any other possible kernel in boosting and is used to define the iterations of the boosting algorithm analytically.

\begin{definition}\label{def:greedyk}
We also define a \emph{greedy} kernel of the gradient boosting algorithm as follows:
$$\mathcal{K}_{f}(x, x') = \sum_{\nu \in \mathcal{V}} k_{\nu}(x, x') 
p(\nu|f,\beta)\,.$$
\end{definition}
This greedy kernel is a kernel that guides the GBDT iterations, i.e., we can think of each iteration as SGD with a kernel from~\ref{def:greedyk}, and~\ref{def:wlk} is used as a stochastic gradient estimator of the Fréchet derivative in RKHS defined by the kernel from \ref{def:greedyk}.

\begin{definition}\label{def:sk}
Finally, there is a \emph{stationary} kernel $\mathcal{K}(x, x')$ that is independent from $f$:
$$\mathcal{K}(x,x') =  \sum_{\nu \in \mathcal{V}} k_{\nu}(x, x') \pi(\nu)\,,$$
which we call a \emph{prior} kernel of the gradient boosting.
\end{definition}
This kernel defines the limiting solution since the gradient projection on RKHS converges to zero, and thus \ref{def:greedyk} converges to \ref{def:sk}. 

Note that $\mathcal{F} = \mathrm{span\,} \big\{\mathcal{K}(\cdot, x) \mid x \in \mathcal{X}\big\}$. Having the space of functions $\mathcal{F}$, we define RKHS structure $\mathcal{H} = \big( \mathcal{F}, \langle \cdot, \cdot\rangle_{\mathcal{H}}\big)$ on it using a scalar product defined as 
$$\langle f, \mathcal{K}(\cdot, x)\rangle_{\mathcal{H}} = f(x).$$

Now, let us define the empirical error of a model $f$:
\begin{equation*}
L(f, \lambda) = \frac{1}{2N} \|y_N - f(\x_N)\|_{\mathbb{R}^N}^2 + \frac{\lambda}{2N} \|f\|_{\mathcal{H}}^2.
\end{equation*}
Then, we define $V(f, \lambda) = L(f, \lambda) - \inf_{f' \in \mathcal{F}} L(f', \lambda)$. Let us also define the following functions:
$
f_{*}^{\lambda} \in \argmin_{f\in\mathcal{F}} V(f, \lambda)
$
and 
\[f_{*} = \lim_{\lambda\rightarrow 0} f_{*}^{\lambda}\in \argmin_{f\in\mathcal{F}} V(f), \text{ where } V(f) = V(f,0).
\]
It is known that such $f_{*}$ exists and is unique since the set of all solutions is convex, and therefore there is a unique minimizer of the norm $\|\cdot\|_{\mathcal{H}}$. Finally, the following lemma gives the formula of the GBDT iteration in terms of kernels in Lemma~\ref{lem:boosting_iterations} which will be useful in proving our results. See Appendix~\ref{appendix:KRR} for the proofs.

\begin{lemma}\label{lem:boosting_iterations}
Iterations $f_{\tau}$ of Gradient Boosting (Algorithm~\ref{alg:KGB}) can be written in the form:
\begin{small}
\begin{equation*}
f_{\tau+1} 
=
(1-\frac{\lambda \epsilon}{N})f_{\tau} + 
\frac{\epsilon}{N}k_{\nu_{\tau}}(\cdot, \mathbf{x}_N)\big[f_{*}(\x_N) - f_{\tau}(\mathbf{x}_N)\big],
\nu_{\tau}\sim p(\nu|f_{\tau}, \beta)\,.
\end{equation*}
\end{small}
\end{lemma}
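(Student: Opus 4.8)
The plan is to show that the single boosting update of Algorithm~\ref{alg:KGB} coincides term-by-term with the claimed kernel expression. Both sides share the shrinkage term $(1-\tfrac{\lambda\epsilon}{N})f_{\tau}$, and the sampling claim $\nu_{\tau}\sim p(\nu|f_{\tau},\beta)$ is immediate: by the definition of the family $p(\cdot|f,\beta)$ specialized to $f=f_{\tau}$, running $\mathrm{SampleTree}$ on the residuals $r_{\tau}=\mathbf{y}_N-f_{\tau}(\mathbf{x}_N)$ exactly realizes a draw from $p(\nu|f_{\tau},\beta)$. Hence it suffices to prove the functional identity
\begin{equation*}
\epsilon\,\big\langle \phi_{\nu_{\tau}}(\cdot),\theta_{\tau}\big\rangle_{\mathbb{R}^{L_{\nu_{\tau}}}}
=\frac{\epsilon}{N}\,k_{\nu_{\tau}}(\cdot,\mathbf{x}_N)\big[f_{*}(\mathbf{x}_N)-f_{\tau}(\mathbf{x}_N)\big]
\end{equation*}
for the tree $\nu_{\tau}$ built at iteration $\tau$, after which the two update rules agree identically.

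First I would unfold both sides into leafwise sums. Substituting the leaf values $\theta_{\tau}^{(j)}=\big(\sum_i \phi_{\nu_{\tau}}^{(j)}(x_i)\,r_{\tau}^{(i)}\big)\big/N_{\nu_{\tau}}^{(j)}$ from the algorithm, with $N_{\nu_{\tau}}^{(j)}=\sum_i \phi_{\nu_{\tau}}^{(j)}(x_i)$, the left-hand side becomes $\sum_{j} \phi_{\nu_{\tau}}^{(j)}(\cdot)\,\big(\sum_i \phi_{\nu_{\tau}}^{(j)}(x_i)(y_i-f_{\tau}(x_i))\big)\big/N_{\nu_{\tau}}^{(j)}$. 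On the right-hand side, expanding $k_{\nu_{\tau}}$ via Definition~\ref{def:wlk} with $w_{\nu_{\tau}}^{(j)}=N/\max\{N_{\nu_{\tau}}^{(j)},1\}$ and canceling the factor $N$ against $1/N$ yields $\sum_{j} \phi_{\nu_{\tau}}^{(j)}(\cdot)\,\big(\sum_i \phi_{\nu_{\tau}}^{(j)}(x_i)(f_{*}(x_i)-f_{\tau}(x_i))\big)\big/\max\{N_{\nu_{\tau}}^{(j)},1\}$. For a nonempty leaf $N_{\nu_{\tau}}^{(j)}\ge 1$, so the two normalizations coincide; for an empty leaf both inner sums over $i$ vanish, so such leaves contribute zero on both sides and the $\max\{\cdot,1\}$ convention is harmless. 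Thus the identity reduces to the leafwise equality $\sum_i \phi_{\nu_{\tau}}^{(j)}(x_i)\,(y_i-f_{\tau}(x_i))=\sum_i \phi_{\nu_{\tau}}^{(j)}(x_i)\,(f_{*}(x_i)-f_{\tau}(x_i))$ for every leaf $j$.

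The crux is therefore the orthogonality relation $\sum_i \phi_{\nu}^{(j)}(x_i)\,(y_i-f_{*}(x_i))=0$ for all $\nu\in\mathcal{V}$ and all leaves $j$, which I would obtain from the first-order optimality of $f_{*}$. Since $f_{*}=\lim_{\lambda\to 0}f_{*}^{\lambda}$ minimizes the empirical square error $\tfrac{1}{2N}\|\mathbf{y}_N-f(\mathbf{x}_N)\|_{\mathbb{R}^N}^2$ over $f\in\mathcal{F}$, and every coordinate function $\phi_{\nu}^{(j)}$ lies in $\mathcal{F}=\mathrm{span}\{\phi_{\nu}^{(j)}\}$, the stationarity condition $\frac{d}{dt}\big|_{t=0}\tfrac{1}{2N}\|\mathbf{y}_N-(f_{*}+t\,\phi_{\nu}^{(j)})(\mathbf{x}_N)\|_{\mathbb{R}^N}^2=0$ gives precisely $\sum_i(y_i-f_{*}(x_i))\,\phi_{\nu}^{(j)}(x_i)=0$; i.e., the empirical residual $\mathbf{y}_N-f_{*}(\mathbf{x}_N)$ is orthogonal in $\mathbb{R}^N$ to each $\phi_{\nu}^{(j)}(\mathbf{x}_N)$. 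This is the same normal-equation fact underlying Remark~\ref{rem:3.4} and Lemma~\ref{lemma:orthogonality_kernel}. Subtracting the common quantity $\sum_i \phi_{\nu_{\tau}}^{(j)}(x_i)\,f_{\tau}(x_i)$ from both sides of this relation yields the desired leafwise equality, completing the argument.

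I expect the main obstacle to be a matter of care rather than depth: phrasing the orthogonality cleanly at the level of the individual leaf indicators $\phi_{\nu}^{(j)}$ (as opposed to the score $D$, where it is stated in Remark~\ref{rem:3.4}), and correctly disposing of the degenerate empty-leaf normalization $\max\{N_{\nu}^{(j)},1\}$. Once these two bookkeeping points are settled, matching the leafwise sums against Definition~\ref{def:wlk} is mechanical.
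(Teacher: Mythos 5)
Your proposal is correct and follows essentially the same route as the paper: unfold the algorithm's leaf values into the weak learner's kernel $k_{\nu_\tau}$ acting on the residual vector, and then replace $\mathbf{y}_N$ by $f_*(\mathbf{x}_N)$ using the orthogonality of $\mathbf{y}_N - f_*(\mathbf{x}_N)$ to every $\phi_\nu^{(j)}(\mathbf{x}_N)$ (the paper invokes Lemma~\ref{lemma:orthogonality_kernel} for this, which you re-derive equivalently from first-order optimality of $f_*$). Your explicit treatment of the empty-leaf normalization $\max\{N_\nu^{(j)},1\}$ is a small bookkeeping point the paper glosses over, but the argument is the same.
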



\subsection{Kernel Gradient Boosting convergence to KRR}\label{sec:KRR}
Consider the sequence $\{f_{\tau}\}_{\tau\in\mathbb{N}}
$ generated by the gradient boosting algorithm. Its evolution is described by Lemma~\ref{lem:boosting_iterations}. 
The following theorem estimates the expected (w.r.t.~the randomness of tree selection) empirical error of $f_T$ relative to the best possible ensemble. The full statement of the theorem and its proof can be found in Appendix~\ref{app:KRR_proof}. 

\begin{theorem}\label{thm:krr2} Assume that $\beta, T_{1}$ are sufficiently large and $\epsilon$ is sufficiently small (see Appendix~\ref{app:KRR_proof}). Then, $\forall T \ge T_{1}$,
$$\mathbb{E} V(f_{T}, \lambda) \le \mathcal{O}\Big(e^{-\mathcal{O}\big(\frac{\epsilon (T-T_{1})}{N}\big)} + \frac{\lambda^2\epsilon}{N^2} + \epsilon + \frac{\lambda}{\beta N^2}\Big).$$
\end{theorem}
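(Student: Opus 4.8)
The plan is to read the recursion of Lemma~\ref{lem:boosting_iterations} as a stochastic gradient descent step for the strongly convex objective $L(\cdot,\lambda)$ inside the fixed RKHS $\mathcal{H}$ generated by the \emph{prior} kernel $\mathcal{K}$, and then run a Lyapunov/contraction argument started from a burn-in time $T_1$. Note first that Lemma~\ref{lem:boosting_iterations} already writes the update in terms of $f_*(\mathbf{x}_N)$ rather than $\mathbf{y}_N$: this is legitimate because each leaf indicator $\phi^{(j)}_\nu$ lies in $\mathcal{F}$ and $\mathbf{y}_N-f_*(\mathbf{x}_N)$ is empirically orthogonal to $\mathcal{F}$ (the same orthogonality underlying Remark~\ref{rem:3.4}), so the leafwise residual sums are unchanged. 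Rewriting
\[
f_{\tau+1} = f_\tau - \frac{\epsilon}{N}\Big(\lambda f_\tau + k_{\nu_\tau}(\cdot,\mathbf{x}_N)\big[f_\tau(\mathbf{x}_N) - f_*(\mathbf{x}_N)\big]\Big),\qquad \nu_\tau\sim p(\nu|f_\tau,\beta),
\]
and taking $\mathbb{E}[\cdot\mid f_\tau]$, Definition~\ref{def:greedyk} gives $\mathbb{E}[k_{\nu_\tau}\mid f_\tau]=\mathcal{K}_{f_\tau}$, so the conditional-mean direction is exactly $\nabla_{\mathcal{H}}L(f_\tau,\lambda)$ but with $\mathcal{K}_{f_\tau}$ in place of $\mathcal{K}$. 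I would track the Lyapunov function $\Phi_\tau := \|f_\tau - f_*^{\lambda}\|_{\mathcal{H}}^2$; since $L(\cdot,\lambda)$ is strongly convex and smooth on $\mathcal{F}$, $\Phi_\tau$ and $V(f_\tau,\lambda)$ are comparable up to the condition number, so a bound on one transfers to the other.

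The core is a one-step inequality. Expanding $\Phi_{\tau+1}$ and conditioning on $f_\tau$, I would split the stochastic direction into three pieces: (i) the true $\mathcal{H}$-gradient with the prior kernel $\mathcal{K}$, which by strong convexity of $L(\cdot,\lambda)$ produces a per-step contraction $1-\mathcal{O}(\epsilon/N)$ (the dimension-free rate coming from the curvature of the data-fit term along the iterates); (ii) a \emph{bias} term $\big(\mathcal{K}_{f_\tau}-\mathcal{K}\big)(\cdot,\mathbf{x}_N)\big[f_\tau(\mathbf{x}_N)-f_*(\mathbf{x}_N)\big]$ from finite $\beta$; and (iii) a \emph{sampling-noise} term $\big(k_{\nu_\tau}-\mathcal{K}_{f_\tau}\big)(\cdot,\mathbf{x}_N)[\,\cdots\,]$ of zero conditional mean. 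Bounding the second moment of (iii) via the uniform boundedness of the weak-learner kernels $k_\nu$ (their weights satisfy $w^{(j)}_\nu\le N$ and the leaf indicators partition $X$) and of the residuals ($\le R$) yields the step-size floor $\mathcal{O}(\epsilon)$, while the deterministic shrinkage $\tfrac{\epsilon\lambda}{N}f_\tau$ contributes a floor of the $\lambda^2\epsilon/N^2$ type. This produces a recursion $\mathbb{E}[\Phi_{\tau+1}\mid f_\tau]\le (1-c\epsilon/N)\,\Phi_\tau + \mathcal{O}(\epsilon)+\mathcal{O}(\lambda^2\epsilon/N^2)+B_\tau$, with $B_\tau$ the bias contribution.

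The hard part is controlling $B_\tau$ and pinning down the burn-in threshold $T_1$. By Remark~\ref{rem:3.4}, $D\big(\nu,\mathbf{y}_N-f_\tau(\mathbf{x}_N)\big)=D\big(\nu,f_*(\mathbf{x}_N)-f_\tau(\mathbf{x}_N)\big)$, so the scores feeding the tree distribution vanish as $f_\tau\to f_*$ and are controlled by $V(f_\tau,\lambda)$. Expanding the probabilities of Lemma~\ref{lem:tree-probability} to first order in $1/\beta$ around the uniform law $\pi$ should give $\big|p(\nu|f_\tau,\beta)-\pi(\nu)\big|=\mathcal{O}(D/\beta)$, hence a small operator-norm bound on $\mathcal{K}_{f_\tau}-\mathcal{K}$; combined with the residual factor $f_\tau(\mathbf{x}_N)-f_*(\mathbf{x}_N)$ this yields a bias floor of order $\lambda/(\beta N^2)$. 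I expect this step to be the main obstacle: making the kernel-bias estimate simultaneously uniform in $\tau$, \emph{dimension-free} (controlled by $\Tr$ and operator norms of $\mathcal{K}$ rather than $\dim\mathcal{F}$), and tight in its $\beta$- and $N$-dependence. The role of $T_1$ is precisely to guarantee a regime in which $V(f_\tau,\lambda)$ (and thus the scores) has decayed enough for this linearization to be valid with the stated constants; a preliminary crude contraction argument establishes that such a $T_1=\mathcal{O}(\cdot)$ exists once $\epsilon$ is small enough.

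Finally I would unroll the recursion from $\tau=T_1$ to $T$. Compounding the contraction $1-c\epsilon/N$ over $T-T_1$ steps yields the $e^{-\mathcal{O}(\epsilon(T-T_1)/N)}$ term, while the geometric sum of the $\tau$-independent floors, divided by the contraction rate $\mathcal{O}(\epsilon/N)$, reproduces $\mathcal{O}(\epsilon)+\mathcal{O}(\lambda^2\epsilon/N^2)+\mathcal{O}(\lambda/(\beta N^2))$. Converting back from $\Phi_T$ to $V(f_T,\lambda)$ via the smoothness/strong-convexity comparison gives the claimed bound.
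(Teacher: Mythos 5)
Your proposal follows essentially the same route as the paper's Appendix~\ref{app:KRR_proof}: the update is read as SGD in the RKHS of the prior kernel, the mismatch between $\mathcal{K}_{f_\tau}$ and $\mathcal{K}$ is controlled through the closeness of $p(\cdot\mid f_\tau,\beta)$ to the uniform law $\pi$ as measured by $V(f_\tau)/\beta$ (the paper's Lemma~\ref{lemma:app_gamma_trees_distributions} gives the multiplicative form $\Gamma_\beta(f)\le e^{2mV(f)/\beta}$ rather than your first-order expansion, and Lemmas~\ref{lemma:gradient_product} and~\ref{lemma:gradient_bound_refined} turn it into the needed inner-product lower bound), and a two-stage burn-in at $T_1$ enables the refined bias estimate exactly as you describe; the paper merely tracks the objective gaps $V(f_\tau,M_\beta\lambda)$ and then $V(f_\tau,\lambda)$ directly instead of $\|f_\tau-f_*^{\lambda}\|_{\mathcal{H}}^2$, the two being equivalent by Lemma~\ref{lemma:v_via_f_star}. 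The one slip is in your floor bookkeeping: the per-step additive terms coming from the second moment and the shrinkage must be of order $\epsilon^2$ (times the appropriate $\lambda,N$ factors), not $\mathcal{O}(\epsilon)$, so that dividing by the contraction rate $\mathcal{O}(\epsilon/N)$ yields the stated $\mathcal{O}(\epsilon)$-type floors rather than $\mathcal{O}(N)$.
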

\begin{corollary} (Convergence to the solution of the KRR problem) Under the assumptions of the previous theorem, we have the following dimension-free bound:
    $$\mathbb{E}\|f_{T} - f_{*}^{\lambda}\|_{L_{2}}^2  \le  \mathcal{O}\Big(e^{-\mathcal{O}\big(\frac{\epsilon (T-T_{1})}{N}\big)} + \frac{\lambda^2\epsilon}{N} + N\epsilon + \frac{\lambda}{\beta N}\Big).$$
\end{corollary}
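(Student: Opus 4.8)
The plan is to convert the optimization-gap bound of Theorem~\ref{thm:krr2} into an $L_2(\rho)$-distance bound to the regularized minimizer $f_*^\lambda$. The argument has two ingredients: an exact expression for $V$ in terms of the error $g := f_T - f_*^\lambda$, and a comparison inequality that controls the population norm $\|g\|_{L_2}$ by the empirical and RKHS quantities that constitute $V$.

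First I would exploit that $f_*^\lambda$ is the unique minimizer of the quadratic, $\tfrac{\lambda}{N}$-strongly convex functional $L(\cdot,\lambda)$ on $\mathcal{F}$. Writing $L(f,\lambda)=\tfrac12\langle Af,f\rangle_{\mathcal H}-\langle b,f\rangle_{\mathcal H}+c$ with $A=\tfrac1N\,\mathrm{ev}^{*}\mathrm{ev}+\tfrac{\lambda}{N}\,\mathrm{Id}_{\mathcal H}$, where $\mathrm{ev}\colon\mathcal{H}\to\mathbb{R}^N$ is the evaluation $f\mapsto f(\x_N)$, the first-order optimality of $f_*^\lambda$ annihilates the linear term, so that $V(f_T,\lambda)=L(f_T,\lambda)-L(f_*^\lambda,\lambda)=\tfrac{1}{2N}\|g(\x_N)\|_{\mathbb{R}^N}^2+\tfrac{\lambda}{2N}\|g\|_{\mathcal H}^2$. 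This identity is clean and yields immediately $\|g(\x_N)\|^2\le 2N\,V$ and $\|g\|_{\mathcal H}^2\le\tfrac{2N}{\lambda}V$.

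Second — and this is the substance — I would prove a comparison inequality of the form $\|g\|_{L_2(\rho)}^2\le C\,N\,V(f_T,\lambda)$ with $C$ dimension-free (independent of $\dim\mathcal{F}$, which may be exponentially large). The mechanism is the weak-learner kernel weights $w_{\nu}^{(j)}=N/\max\{N_{\nu}^{(j)},1\}$ from Definition~\ref{def:wlk}: decomposing $\|g\|_{L_2}^2$ leaf-wise and applying Cauchy--Schwarz on each leaf region, the population mass $\rho(R_j)$ is paired against the empirical count $N_{\nu}^{(j)}$ through $w_{\nu}^{(j)}$, so the controlling quantity is the population/empirical mass ratio $N\rho(R_j)/N_{\nu}^{(j)}$. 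For a single tree this already gives $\|g\|_{L_2}^2\le\big(\max_j N\rho(R_j)/N_{\nu}^{(j)}\big)\tfrac1N\|g(\x_N)\|^2\le\|g(\x_N)\|^2\le 2N\,V$, since the worst-case ratio is $\mathcal{O}(N)$ (whence the explicit factor $N$). For the general multi-tree $g\in\mathcal{F}$, leaves that are empty inside one tree but populated overall are not captured by the empirical term, and their contribution must be routed through the regularization term $\tfrac{\lambda}{N}\|g\|_{\mathcal H}^2$; keeping the resulting constant free of the number of trees and leaves is exactly where the weight design pays off.

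Finally I would take expectations, $\mathbb{E}\|f_T-f_*^\lambda\|_{L_2}^2\le C\,N\,\mathbb{E}V(f_T,\lambda)$, and substitute Theorem~\ref{thm:krr2}. Multiplying its bound by $N$ reproduces the three algebraic terms $\tfrac{\lambda^2\epsilon}{N}$, $N\epsilon$, and $\tfrac{\lambda}{\beta N}$ verbatim; for the exponential term the polynomial prefactor $N$ is absorbed into the rate by halving the constant inside the exponent, which is legitimate for $T-T_1$ large, as assumed. The main obstacle is the comparison inequality of the third paragraph: establishing $\|g\|_{L_2(\rho)}^2\le C\,N\,V$ with a dimension-free $C$ requires carefully relating population leaf measures to empirical counts via $w_{\nu}^{(j)}$ and showing that the regularization term absorbs precisely the part of $g$ that the data do not see. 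Everything else reduces to the optimality identity and bookkeeping with $\mathcal{O}$-notation.
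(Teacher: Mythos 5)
Your first ingredient is exactly the paper's route: Lemma~\ref{lemma:v_via_f_star} gives $V(f_T,\lambda)=\tfrac{1}{2N}\|g(\x_N)\|_{\mathbb{R}^N}^2+\tfrac{\lambda}{2N}\|g\|_{\mathcal H}^2$ with $g=f_T-f_*^\lambda$, and the final step of multiplying Theorem~\ref{thm:krr2}'s bound by $N$ and absorbing constants is also the same. The gap is in your middle step, which you correctly identify as the substance but do not actually establish. The leaf-wise comparison of population masses $\rho(R_j)$ to empirical counts $N_\nu^{(j)}$ is not how the paper proceeds, and as sketched it has a real problem: for a general multi-tree $g$ you propose to route the ``unseen'' leaves through the regularization term $\tfrac{\lambda}{2N}\|g\|_{\mathcal H}^2$, but extracting $\|g\|_{\mathcal H}^2$ from $V$ costs a factor $\tfrac{2N}{\lambda}$, so any contribution controlled that way picks up a $1/\lambda$ that is absent from the target bound (it would turn the $N\epsilon$ term into $N\epsilon/\lambda$). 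You would need to show the unseen part contributes nothing, or find another mechanism, and nothing in the proposal does that.

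The paper's comparison inequality is much simpler and sidesteps leaf combinatorics entirely. Both $f_T$ (by the iteration formula, Lemma~\ref{lem:boosting_iterations}) and $f_*^\lambda$ (by the representer form, Lemma~\ref{lemma:min_V_lambda}) lie in $\mathrm{span}\{\mathcal{K}(\cdot,x_i)\mid i=1,\dots,N\}$, so $g$ does too. On that span the empirical covariance operator satisfies $\Sigma\succeq\tfrac{1}{N}\mathrm{Id}$ (Lemma~\ref{lemma:app_covariation_majorization}), which gives $\|g\|_{\mathcal H}^2\le\|g(\x_N)\|_{\mathbb{R}^N}^2$ (Corollary~\ref{cor:app_RN_RKHS_norm_inequality}); and $\|g\|_{L_2(\rho)}^2\le\max_{x}\mathcal{K}(x,x)\,\|g\|_{\mathcal H}^2$ (Corollary~\ref{cor:app_l2_majorization}). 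Chaining these with the quadratic identity yields $\|g\|_{L_2}^2\le 2N\max_x\mathcal{K}(x,x)\,V(f_T,\lambda)$ using only the empirical term of $V$, with the factor $N$ coming from the $\tfrac{1}{2N}$ normalization rather than from a worst-case mass ratio, and with no recourse to the regularizer. I recommend replacing your third paragraph with this chain; the rest of your argument then goes through.
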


This bound is dimension-free thanks to functional treatment and exponentially decaying to small value with iterations and therefore justifies the observed rapid convergence of gradient boosting algorithms in practice even though dimension of space $\mathcal{H}$ is enormous.

\section{Gaussian inference}
So far, the main result of the paper proved in Section~\ref{sec:KRR} shows that Algorithm~\ref{alg:KGB} solves the Kernel Ridge Regression problem, which can be interpreted as learning Gaussian Process posterior mean $f_{*}^{\lambda}$ under the assumption that $f \sim \mathcal{GP}(0, \sigma^2 \mathcal{K} + \delta^2 \mathrm{Id}_{L_{2}})$ where $\lambda = \frac{\sigma^2}{\delta^2}$. I.e., Algorithm \ref{alg:KGB} does not give us the posterior variance. Still, as mentioned earlier, we can estimate the posterior variance through Monte-Carlo sampling in a sample-then-optimize way. For that, we need to somehow sample from the prior distribution $\mathcal{GP}(0, \sigma^2 \mathcal{K} + \delta^2 \mathrm{Id}_{L_{2}})$.
\subsection{Prior sampling}
\begin{minipage}{0.48\textwidth}
We introduce Algorithm~\ref{alg:sampleprior} for sampling from the prior distribution. \emph{SamplePrior} generates an ensemble of random trees (with random splits and random values in leaves). Note that while being random, the tree structure depends on the dataset features $\mathbf{x}_N$ since candidate splits are based on $\mathbf{x}_N$.

We first note that the process $h_{T}(\cdot)$ is centered with covariance operator $\mathcal{K}$:
\begin{equation}\label{eq:prior_expectation}
\begin{aligned}
\mathbb{E}h_{T}(x) &= 0 \,\,\, \forall x\in X\,, \\
\mathbb{E} h_{T}(x) h_{T}(y) &= \mathcal{K}(x, y) \,\,\, \forall x,y\in X\,.
\end{aligned}
\end{equation}

Then, we show that $h_T(\cdot)$ converges to the Gaussian Process in the limit.

\begin{lemma}\label{lem:prior}
The following convergence holds almost surely in $x\in X$:
$$h_{T}(\cdot) \xrightarrow[T \rightarrow \infty]{} \mathcal{GP}\big(\mathbb{0}_{L_{2}(\rho)}, \mathcal{K}\big)\,.$$
\end{lemma}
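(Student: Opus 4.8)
The plan is to recognize $h_T$ as a normalized sum of independent, identically distributed random ensembles and invoke a central limit theorem, using crucially that the ambient space $\mathcal{F}$ is finite-dimensional. Concretely, \emph{SamplePrior} produces a process of the form $h_T = \frac{1}{\sqrt{T}}\sum_{\tau=1}^{T} g_\tau$, where each $g_\tau(\cdot) = \langle \theta_\tau, \phi_{\nu_\tau}(\cdot)\rangle$ is a single random tree: its structure $\nu_\tau$ is drawn from the stationary distribution $\pi$, and its leaf values $\theta_\tau$ are independent mean-zero variables with $\mathrm{Var}(\theta_\tau^{(j)}) = w_{\nu_\tau}^{(j)}$ (Definition~\ref{def:wlk}). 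Conditioning on $\nu_\tau$ gives $\mathbb{E}[g_\tau(x)g_\tau(x')\mid \nu_\tau] = k_{\nu_\tau}(x,x')$, so averaging over $\nu_\tau \sim \pi$ yields $\mathbb{E}\,g_\tau = 0$ and $\mathbb{E}[g_\tau(x)g_\tau(x')] = \mathcal{K}(x,x')$ by Definition~\ref{def:sk}, which is exactly the content of~\eqref{eq:prior_expectation}. Conditionally on the training features $\mathbf{x}_N$ the $g_\tau$ are i.i.d., and since $w_{\nu}^{(j)} = N/\max\{N_\nu^{(j)},1\} \le N$ is bounded, each $g_\tau$ has finite second moments in $L_2(\rho)$.

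First I would establish convergence of the finite-dimensional distributions. Fixing any points $x_1,\ldots,x_k \in X$ and setting $G_\tau = \big(g_\tau(x_1),\ldots,g_\tau(x_k)\big)\in\mathbb{R}^k$, these are i.i.d.\ mean-zero vectors with covariance matrix $\Sigma = \big[\mathcal{K}(x_i,x_j)\big]_{i,j=1}^k$ and finite second moments, so the multivariate Lindeberg--L\'evy CLT gives
$$
\big(h_T(x_1),\ldots,h_T(x_k)\big) = \frac{1}{\sqrt{T}}\sum_{\tau=1}^T G_\tau \xrightarrow[T\to\infty]{d} \mathcal{N}(\mathbb{0},\Sigma)\,.
$$
Since $\Sigma$ is precisely the finite-dimensional covariance of $\mathcal{GP}(\mathbb{0}_{L_2(\rho)},\mathcal{K})$, all finite-dimensional marginals of $h_T$ converge to those of the target Gaussian Process.

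Finally I would upgrade this to convergence of the $L_2(\rho)$-valued random element (whose evaluations $h_T(x)$ are $\rho$-a.e.\ defined, as the phrasing ``almost surely in $x$'' indicates). Here the decisive simplification is that $\mathcal{V}$ is finite, hence $\mathcal{F}$ is finite-dimensional and $h_T \in \mathcal{F}$ for every $T$: choosing a finite basis $\{e_1,\ldots,e_D\}$ of $\mathcal{F}$, weak convergence in $\mathcal{F} \subset L_2(\rho)$ is equivalent to convergence in distribution of the coordinate vectors, which is a special case of the finite-dimensional convergence already proved. I expect the main obstacle to be the point that would dominate the infinite-dimensional version of this statement, namely establishing tightness in addition to finite-dimensional convergence; finite-dimensionality of $\mathcal{F}$ makes tightness automatic, since a sequence bounded in probability in a finite-dimensional space is tight, so this obstacle disappears. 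The only genuine verifications are then the i.i.d.\ structure and integrability of the summands, both of which follow from the boundedness of the leaf weights, together with the identification of $\Sigma$ with $\mathcal{K}$ via Definition~\ref{def:sk} and~\eqref{eq:prior_expectation}.
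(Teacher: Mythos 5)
Your proof is correct and follows essentially the same route as the paper: decompose $h_T$ as $\frac{1}{\sqrt{T}}\sum_{\tau=1}^T g_\tau$ with i.i.d.\ centered random trees, identify the covariance with $\mathcal{K}$ via~\eqref{eq:prior_expectation}, and apply the central limit theorem. The paper states only the pointwise univariate CLT, whereas you additionally handle the multivariate marginals and tightness via finite-dimensionality of $\mathcal{F}$ — a more complete rendering of the same argument.
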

\end{minipage}
\hspace{0.02\textwidth}
\begin{minipage}{0.5\textwidth}
\begin{algorithm}[H]
   \caption{$\text{SamplePrior}(T, m, n)$}\label{alg:sampleprior}
\begin{algorithmic}
   \STATE {\bfseries hyper-parameters: } number of iterations $T$, parameters of SampleTree $m, n$
   \vspace{4pt}
   \STATE {\bfseries instructions:}
   \STATE initialize $\tau = 0$, $h_0(x) = 0$
   \REPEAT
   \STATE $\nu_{\tau}  = \text{SampleTree}(\mathbb{0}_{\mathbb{R}^N}; m,n,1)$~--- sample random tree
   \STATE $\theta_{\tau}
   \sim \mathcal{N}\Big(\mathbb{0}_{\mathbb{R}^{L_{\nu_{\tau}}}}, \mathrm{diag}\Big(\frac{N}{\max\{N_{\nu_{\tau}}^{(j)}, 1\}}: j\in \{1,\ldots, L_{\nu_{\tau}}\}\Big)\Big)$~--- generate random values in leaves
   \STATE $h_{\tau+1}(\cdot) =  h_{\tau}(\cdot) + \frac{1}{\sqrt{T}} \big\langle\phi_{\nu_{\tau}}(\cdot), \theta_{\tau}\big\rangle_{\mathbb{R}^{L_{\nu_{\tau}}}} $~--- update model
    \STATE $\tau = \tau + 1$
   \UNTIL{$\tau \ge T$}
   \STATE {\bfseries return:} $h_{T} (\cdot)$
\end{algorithmic}
\end{algorithm}
\end{minipage}

\subsection{Posterior sampling}
Now we are ready to introduce Algorithm~\ref{alg:sample-posterior} for sampling from the posterior. The procedure is simple: we first perform $T_0$ iterations of SamplePrior to obtain a function $h_{T_0}(\cdot)$ and then we train a standard GBDT model $f_{T_1}(\cdot)$ approximating $\mathbf{y}_{N} - \sigma h_{T_0}(\mathbf{x}_N)+\mathcal{N}(\mathbb{0}_{N}, \delta^2 I_{N})$. Our final model is $\sigma h_{T_0}(\cdot) + f_{T_1}(\cdot)$.

\begin{minipage}{0.4\textwidth}
We further refer to this procedure as \emph{SamplePosterior} or \emph{KGB} (Kernel Gradient Boosting) for brevity.
Denote 
\begin{align*}
\begin{split}
h_{\infty} = \lim_{T_0 \rightarrow \infty} h_{T_0} \,,
f_{\infty} = \lim f_{T_1}\,,
\end{split}
\end{align*}
where the first limit is with respect to the point-wise convergence of stochastic processes and the second one with respect to $L_2(\rho)$ convergence.

The following theorem shows that KGB indeed samples from the desired posterior. The proof directly follows from Lemmas~\ref{lem:prior} and~\ref{lemma:gaussian_posterior}.
\end{minipage}\vspace{0.02\textwidth}
\hspace{0.02\textwidth}
\begin{minipage}{0.58\textwidth}
\vspace{-10pt}
\begin{algorithm}[H]
   \caption{SamplePosterior$(\mathbf{z}; \epsilon, T_{1}, T_{0}, m, n, \beta, \sigma, \delta)$}
   \label{alg:sample-posterior}
\begin{algorithmic}
   \STATE {\bfseries input:} dataset $\mathbf{z} = (\mathbf{x}_{N}, \mathbf{y}_{N})$  
   \STATE {\bfseries hyper-parameters: } learning rate $\epsilon > 0$, boosting iteration $T_1$, SamplePrior iterations $T_{0}$, parameters of SampleTree $m,n,\beta$, kernel scale $\sigma > 0$ (default $\sigma = 1$), noise scale $\delta > 0$ (default: $\delta = 0.01$)
   \STATE {\bfseries instructions:}
   \STATE $h_{T_0}(\cdot) = \mathrm{SamplePrior}(T_{0}, m,n)$
    \STATE $\mathbf{y}_{N}^{new}= \mathbf{y}_{N} - \sigma h_{T_0}(\mathbf{x}_N)+ \mathcal{N}(\mathbb{0}_{N}, \delta^2 I_{N}) $
   \STATE $f_{T_1}(\cdot) = \mathrm{TrainGBDT}\big((\mathbf{x}_N, \mathbf{y}_{N}^{new}); \epsilon, T_1, m, n, \beta, \frac{\delta^2}{\sigma^2}\big)$
   \STATE {\bfseries return:} $\sigma h_{T_0}(\cdot) + f_{T_1}(\cdot)$
\end{algorithmic}
\end{algorithm}
\end{minipage}

\begin{theorem}
In the limit, the output of Algorithm~\ref{alg:sample-posterior} follows the Gaussian process posterior:
$$\sigma h_{\infty}(\cdot) + f_{\infty}(\cdot) + \mathcal{N}(0, \delta^2) \sim \mathcal{GP}\big(\widetilde{f}, \widetilde{\mathcal{K}}\big)$$
with mean $\widetilde{f}(x)  = \mathcal{K}(x, \x_N)\big(\mathcal{K}(\x_N, \x_N)+\lambda I_{N}\big)^{-1} \y_N$ and covariance
$ \widetilde{\mathcal{K}}(x, x) = \delta^2 +
\sigma^2 \big(\mathcal{K}(x, x) - \mathcal{K}(x, \x_N) \big(\mathcal{K}(\x_N, \x_N)+\lambda I_{N}\big)^{-1} \mathcal{K}(\x_N, x) \big)\,.$
\end{theorem}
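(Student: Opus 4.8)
The plan is to recognize Algorithm~\ref{alg:sample-posterior} as a concrete instantiation of the abstract Sample-then-Optimize scheme underlying Lemma~\ref{lemma:gaussian_posterior}, and then to verify its two hypotheses separately: that the initial process is a genuine draw from the prior $\mathcal{GP}(\mathbb{0}_{L_{2}(\rho)},\sigma^2\mathcal{K}+\delta^2\id)$, and that the boosting stage converges to the functional gradient-descent limit appearing in step 3 of that scheme. Once both are in place, Lemma~\ref{lemma:gaussian_posterior} applies verbatim and produces the claimed posterior.

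First I would construct the prior sample. By Lemma~\ref{lem:prior}, $h_{T_0}\to h_\infty\sim\mathcal{GP}(\mathbb{0}_{L_{2}(\rho)},\mathcal{K})$, so $\sigma h_\infty\sim\mathcal{GP}(\mathbb{0}_{L_{2}(\rho)},\sigma^2\mathcal{K})$. The $\delta^2\id$ part of the prior is a white-noise component, which cannot be drawn as a smooth function but is realized pointwise by independent $\mathcal{N}(0,\delta^2)$ contributions: at the training locations through the term $\mathcal{N}(\mathbb{0}_N,\delta^2 I_N)$ folded into $\mathbf{y}_N^{new}$, and at any test location $x$ through the explicit $\mathcal{N}(0,\delta^2)$ added in the statement. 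Setting $f^{init}:=\sigma h_\infty+(\text{this white noise})$, these independent additive pieces give exactly $f^{init}\sim\mathcal{GP}(\mathbb{0}_{L_{2}(\rho)},\sigma^2\mathcal{K}+\delta^2\id)$, matching step 1. The sign with which the noise enters $\mathbf{y}_N^{new}$ is immaterial: a centered Gaussian is symmetric and independent of $h_\infty$ and of the test-point noise, so the joint law over $\{\x_N,x\}$ is unchanged.

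Next I would identify the optimization stage. By construction $\mathbf{y}_N^{new}=\mathbf{y}_N-f^{init}(\x_N)$, which is precisely step 2. The $\mathrm{TrainGBDT}$ call uses regularization $\lambda=\delta^2/\sigma^2$ on these relabeled targets; invoking Lemma~\ref{lem:boosting_iterations} together with Theorem~\ref{thm:krr2} and its corollary, the iterates $f_{T_1}$ converge in $L_2(\rho)$ (as $T_1\to\infty$) to the KRR minimizer $f_*^\lambda$ of the relabeled problem, which is exactly the limit $f_\infty$ of the functional gradient descent~\eqref{equation:gradient_descent} in step 3. Hence the theorem's limiting quantity $\sigma h_\infty+f_\infty+\mathcal{N}(0,\delta^2)$ coincides in law with the scheme's output $f^{init}+f_\infty$, and Lemma~\ref{lemma:gaussian_posterior} yields $\mathcal{GP}(f_*^\lambda,\sigma^2\widetilde{\mathcal{K}}+\delta^2\id)$. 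Identifying $\widetilde{f}=f_*^\lambda$ and expanding $\sigma^2\widetilde{\mathcal{K}}(x,x)+\delta^2=\delta^2+\sigma^2\big(\mathcal{K}(x,x)-\mathcal{K}(x,\x_N)(\mathcal{K}(\x_N,\x_N)+\lambda I_N)^{-1}\mathcal{K}(\x_N,x)\big)$ reproduces the stated mean and covariance.

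The step requiring the most care --- and the only genuinely substantive one, since the heavy lifting already sits in Theorem~\ref{thm:krr2} --- is the bookkeeping of the white-noise $\delta^2\id$ term: one must check that the $\mathcal{N}(0,\delta^2)$ added at the training points (through the labels) and a fresh, independent $\mathcal{N}(0,\delta^2)$ added at the test point together reconstitute the single $\id$ operator of the prior, and that this is compatible with taking the two independent limits $T_0\to\infty$ and $T_1\to\infty$. Because a Gaussian process is determined by its finite-dimensional marginals, it suffices to verify the mean and covariance on the finite set $\{x\}\cup\{\x_N\}$; there the almost-sure/point-wise convergence of $h_{T_0}$ and the $L_2(\rho)$ convergence of $f_{T_1}$ upgrade to convergence of the relevant marginals, completing the proof.
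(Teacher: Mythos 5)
Your proposal is correct and takes essentially the same route as the paper, whose proof is simply the observation that the result ``directly follows from Lemmas~\ref{lem:prior} and~\ref{lemma:gaussian_posterior}'' (with the appendix version of Lemma~\ref{lemma:gaussian_posterior} supplying the $\delta^2+\sigma^2(\cdot)$ covariance computation). Your explicit bookkeeping of the $\delta^2\id$ white-noise component across training and test points, and your appeal to Theorem~\ref{thm:krr2} to justify identifying $f_{\infty}$ with the KRR minimizer of the relabeled problem, are welcome elaborations of steps the paper leaves implicit rather than a different argument.
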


\section{Experiments}\label{sec:experiments}

This section empirically evaluates the proposed KGB algorithm and shows that it indeed allows for better knowledge uncertainty estimates.

\paragraph{Synthetic experiment}

To illustrate the KGB algorithm in a controllable setting, we first conduct a synthetic experiment. For this, we defined the feature distribution as uniform over $D = \{(x, y)\in[0,1]^2: \frac{1}{10}\le (x-\frac{1}{2})^2 - (y-\frac{1}{2})^2 \le \frac{1}{4} \wedge (x\le \frac{2}{5} \vee x\ge \frac{3}{5})\wedge (y\le \frac{2}{5} \vee y\ge \frac{3}{5})\}$. We sample 10K points from $U([0,1]^2)$ and take into the train set only those that fall into $D$. The target is defined as $f(x,y) = x + y$. Figure~\ref{fig:heart_a} illustrates the training dataset colored with the target values. For evaluation, we take the same 10K points without restricting them to $D$.

For KGB, we fix $\epsilon = 0.3$, $T_0 = 100$, $T_1 = 900$, $\sigma = 10^{-2}$, $\delta = 10^{-4}$ $\beta=0.1$, $m=4$, $n=64$, and sampled $100$ KGB models. Figure~\ref{fig:heart_b} shows the estimated by Monte-Carlo posterior mean $\widetilde{\mu}$. On Figure~\ref{fig:heart_c}, we show $\log \widetilde{\sigma}$, where $\widetilde{\sigma}^2$ is the posterior variance estimated by Monte-Carlo. One can see that the posterior variance is small in-domain and grows when we move outside the dataset $D$, as desired.

\begin{figure}
    \centering
    \subfigure[$D$]{\includegraphics[width = 0.25\textwidth]{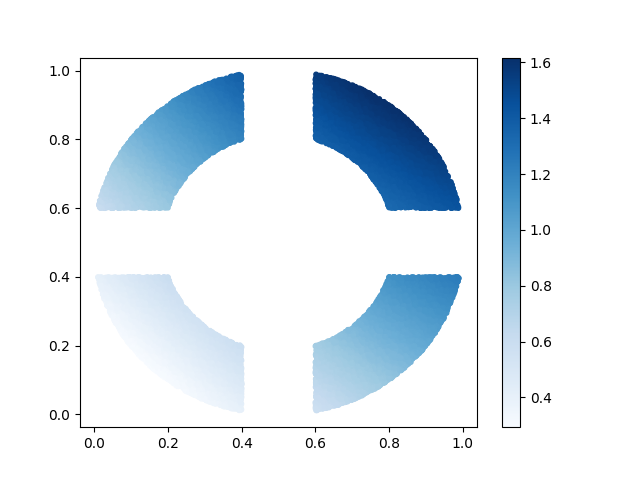}\label{fig:heart_a}}
    \subfigure[$\widetilde{\mu}$]{\includegraphics[width = 0.25\textwidth]{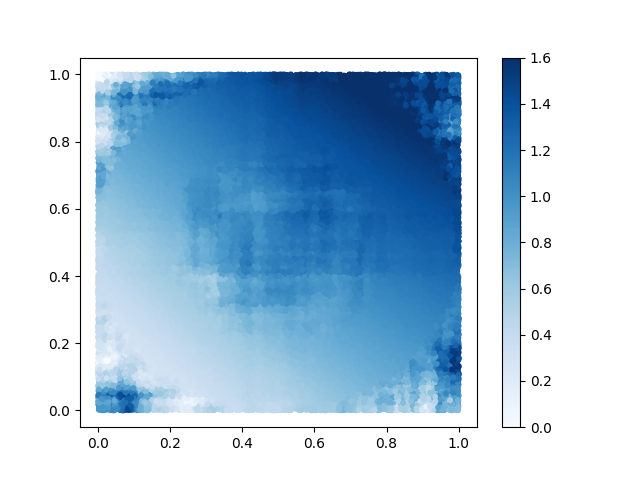}\label{fig:heart_b}}
    \subfigure[$\log \widetilde{\sigma}$]{\includegraphics[width = 0.25\textwidth]{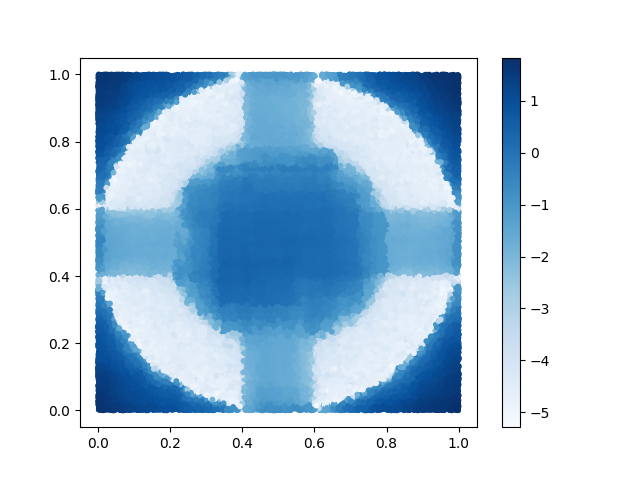}\label{fig:heart_c}}
    \vspace{-6pt}
    \caption{KGB on a synthetic dataset.}
    \label{fig:heart}
\end{figure} 

\paragraph{Experiment on real datasets}

Uncertainty estimates for GBDTs have been previously analyzed by~\cite{malinin2021uncertainty}. Our experiments on real datasets closely follow their setup, and we compare the proposed KGB with SGB, SGLB, and their ensembles. For the experiments, we use several standard regression datasets~\citep{Gal2016Dropout}. The implementation details can be found in Appendix~\ref{sec:implementation}. The code of our experiments can be found on GitHub.\footnote{\href{https://github.com/TakeOver/Gradient-Boosting-Performs-Gaussian-Process-Inference}{https://github.com/TakeOver/Gradient-Boosting-Performs-Gaussian-Process-Inference}}

\begin{minipage}{0.49\textwidth}
  \captionof{table}{Predictive performance, RMSE}
  \label{tab-apn:prediction}
  \centering
  \begin{scriptsize}
  \vspace{4pt}
  \begin{tabular}{l|rrr|rrr}
\hline
    \multirow{2}{*}{Dataset} & \multicolumn{3}{c|}{Single} &
    \multicolumn{3}{c}{Ensemble} \\
    & SGB & SGLB & \multicolumn{1}{c|}{KGB} & SGB & SGLB & \multicolumn{1}{c}{KGB}\\ 
    \hline
Boston & {3.06} & {3.12} & \textbf{2.81} & {3.04} & {3.10} & \textbf{2.82} \\
Concrete & 5.21 &5.11 & \textbf{4.36} & 5.21 &5.10 & \textbf{4.30} \\
Energy & 0.57 &0.54 & \textbf{0.33} & 0.57 &0.54 & \textbf{0.33} \\
Kin8nm & 0.14 &0.14 & \textbf{0.11} & 0.14& 0.14 & \textbf{0.10} \\
Naval & 0.00 &\textbf{0.00} & 0.00 & 0.00 &\textbf{0.00} & 0.00 \\
Power & \textbf{3.55} &\textbf{3.56} & \textbf{3.48} & \textbf{3.52} &\textbf{3.54} & \textbf{3.43} \\
Protein & 3.99 &3.99 & \textbf{3.79} & 3.99 &3.99 & \textbf{3.76} \\
Wine & 0.63 &0.63 & \textbf{0.61} & 0.63 &0.63 & \textbf{0.60} \\
Yacht & 0.82 &0.84 & \textbf{0.52} & 0.83 &0.84 & \textbf{0.50} \\
Year & \textbf{8.99} & \textbf{8.96} & \textbf{8.97} & \textbf{8.97} &\textbf{8.93} & \textbf{8.94}\\
\hline
  \end{tabular}
  \end{scriptsize}
  \vspace{4pt}
\end{minipage}
\hspace{0.02\textwidth}
\begin{minipage}{0.49\textwidth}
  \captionof{table}{Error and OOD detection}
  \label{tab-apn:auc}
  \centering
  \begin{scriptsize}
  \vspace{6pt}
  \begin{tabular}{l|ccc|ccc}
\hline
    \multirow{2}{*}{Dataset} & \multicolumn{3}{c|}{PRR} &
    \multicolumn{3}{c}{AUC} \\
    & SGB & SGLB & \multicolumn{1}{c|}{KGB} & SGB & SGLB & \multicolumn{1}{c}{KGB}\\ 
    \hline
Boston & {36} & {37} & \textbf{43} & 80 & 80 & \textbf{88} \\
Concrete & 29 & 29 & \textbf{37} & 92 & 92 & \textbf{93} \\
Energy & 36 & 31 & \textbf{60} & \textbf{100} &  \textbf{100} & 99 \\
Kin8nm & 18 & \textbf{19} & \textbf{20} & \textbf{45} & \textbf{45} & 41 \\
Naval & \textbf{55} & \textbf{56} & 35 & 100 & 100 & \textbf{100} \\
Power & 8 & 9 & \textbf{31} & {72} & 73 & \textbf{76} \\
Protein & 30 & 29 & \textbf{35} & 99 & 99 & \textbf{100} \\
Wine & 25 & 19 & \textbf{37} & 74 & 72 & \textbf{87} \\
Yacht & {74} & {78} & \textbf{86} & 62 & 60 & \textbf{69} \\
Year & 30 & 30 & \textbf{32} & 67 & 57 & \textbf{71} \\
\hline
  \end{tabular}
  \end{scriptsize}
  \vspace{4pt}
\end{minipage}

We note that in our setup, we \textit{cannot} compute likelihoods as kernel $\mathcal{K}$ is defined implicitly, and its evaluation requires summing up among all possible trees structures number of which grows as $(nd)^m$ which is unfeasible, not to mention the requirement to inverse the kernel which requires $\mathcal{O}(N^{2+\omega})$ operations which additionally rules out the applicability of classical Gaussian Processes methods with our kernel. Therefore, a typical Bayesian setup is not applicable, and we resort to the uncertainty estimation setup described in \cite{malinin2021uncertainty}. Also, the intractability of the kernel does not allow us to treat $\sigma, \delta$ in a fully Bayesian way, as it will require estimating the likelihood. Therefore, we fix them as constants, but we note that this will not affect the evaluation metrics for our setup as they are scale and translation invariant.

First, we compare KGB with SGLB since they both sample from similar posterior distributions. Thus, this comparison allows us to find out which of the algorithms does a better sampling from the posterior and thus provides us with more reliable estimates of knowledge uncertainty. Moreover, we consider the SGB approach as the most ``straightforward'' way to generate an ensemble of models.

In Table~\ref{tab-apn:prediction}, we compare the predictive performance of the methods. Interestingly, we obtain improvements on almost all the datasets. Here we perform cross-validation to estimate statistical significance with paired $t$-test and highlight the approaches that are insignificantly different from the best one ($\text{p-value} >0.05$). Then, we check whether uncertainty measured as the variance of the model's predictions can be used to detect errors and out-of-domain inputs. Detecting errors can be evaluated via the Prediction-Rejection Ratio (PRR)~\citep{malinin-thesis, malinin-endd-2020}. PRR measures how well uncertainty estimates correlate with errors and rank-order them. Out-of-domain (OOD) detection is usually assessed via the area under the ROC curve (AUC-ROC) for the binary task of detecting whether a sample is OOD~\citep{baselinedetecting}. For this, one needs an OOD test set. We use the same OOD test sets (sampled from other datasets) as~\cite{malinin2021uncertainty}. The results of this experiment are given in Table~\ref{tab-apn:auc}. We can see that the proposed method significantly outperforms the baselines for out-of-domain detection. These improvements can be explained by the theoretical soundness of KGB: convergence properties are theoretically grounded and non-asymptotic. In contrast, for SGB, there are no general results applicable in our setting, while for SGLB the guarantees are asymptotic. In summary, these results show that our approach is superior to SGB and SGLB, achieving smaller values of RMSE and having better knowledge uncertainty estimates.

\section{Conclusion}\label{sec:conclusion}

This paper theoretically analyses the classic gradient boosting algorithm. In particular, we show that GBDT converges to the solution of a certain Kernel Ridge Regression problem. We also introduce a simple modification of the classic algorithm allowing one to sample from the Gaussian posterior. The proposed method gives much better knowledge uncertainty estimates than the existing approaches. 

We highlight the following important directions for future research. First, to explore how one can control the kernel and use it for better knowledge uncertainty estimates. Also, we do not analyze generalization in the current work, which is another important research topic. Finally, we need to establish universal approximation property which further justifies need for functional formalism.



\newpage

\bibliography{references}
\bibliographystyle{iclr2023_conference}

\newpage

\appendix

\section{Notation used in the paper}\label{app:notation}

For convenience, let us list some frequently used notation:
\begin{itemize}
    \item $X \subset \mathbb{R}^d$~--- feature space;
    \item $d$~--- dimension of feature vectors; 
    \item $Y \subset \mathbb{R}$~--- target space;
    \item $\rho$~--- distribution of features;
    \item $N$~--- number of samples; 
    \item $\mathbf{z} = (\x_N, \y_N)$~--- dataset;
    \item $\mathcal{V}$~--- set of all possible tree structures;
    \item $L_{\nu}: \mathcal{V}\rightarrow \mathbb{N}$~--- number of leaves for $\nu \in \mathcal{V}$;
    \item $D(\nu, r)$~--- score used to choose a split~\eqref{eq:D};
    \item $\mathcal{S}$~--- indices of all possible splits;
    \item $n$~--- number of borders in our implementation of SampleTree;
    \item $m$~--- depth of the tree in our implementation of SampleTree;
    \item $\beta$~--- random strength;
    \item $\epsilon$~--- learning rate;
    \item $\lambda$~--- regularization;
    \item $\mathcal{F}$~--- space of all possible ensembles of trees from $\mathcal{V}$;
    \item $\phi_{\nu}:X\rightarrow \{0, 1\}^{L_{\nu}}$~--- tree structure; 
    \item $\phi_{\nu}^{(j)}$~--- indicator of $j$-th leaf;
    \item $V(f)$~--- empirical error of a model $f$ relative to the best possible $f' \in \mathcal{F}$;
    \item $k_{\nu}(\cdot,\cdot)$~--- single tree kernel;
    \item $\mathcal{K}(\cdot,\cdot)$~--- stationary kernel of the gradient boosting;
    \item $p(\cdot|f, \beta)$~--- distribution of trees, $f\in \mathcal{F}$;
    \item $\pi(\cdot) = \lim_{\beta \to \infty} p(\cdot|f, \beta) = p(\cdot|f_{*}, \beta)$~--- stationary distribution of trees;
    \item $\sigma$~--- kernel scale.
\end{itemize}

The following diagram illustrates the kernels introduced in our paper:
\begin{small}
\[
 \begin{tikzcd}
\text{Weak learner' kernel}\, k_{\nu_{\tau}}:\mathcal{X}\times\mathcal{X}\rightarrow \mathbb{R}_{+}\arrow[dashed, r, "\text{acts as }f\mapsto k_{\nu_{\tau}}f(\mathbf{x}_{N})"]\arrow[d, "\text{expected value}"] & \Sigma_{\nu_{\tau}}:\mathcal{F}\rightarrow \mathcal{F}\arrow[r, "\text{expected value}f_{\tau}"] & \Sigma_{f_{\tau}}:\mathcal{F}\rightarrow \mathcal{F}\arrow[dashed, lld]\\
\text{Iteration kernel }\mathcal{K}_{f_{\tau}}:\mathcal{X}\times\mathcal{X}\rightarrow \mathbb{R}_{+}\arrow[r, "\tau\rightarrow\infty"]\arrow[dashed, rru] & \text{Stationary Kernel }\mathcal{K}:\mathcal{X}\times\mathcal{X}\rightarrow \mathbb{R}_{+}\arrow[d, "\text{used in dot product}"]\\
& \mathcal{H} = \big(\mathcal{F}, \langle\cdot, \cdot\rangle_{\mathcal{H}}\big) \arrow[dashed, u] & \\
\end{tikzcd}
\]
\end{small}

\section{Additional related work}

Let us briefly discuss some additional related work. Mondrian forest method~\citep{balog2016mondrian} and Generalized Random Forests~\citep{athey2019generalized}, besides having links to the kernel methods, in fact, have the underlying limiting RKHS space that is much smaller than the space of all possible ensembles built on the same weak learners due to the independence of the trees that are added to the ensemble. Therefore, there is an issue of high bias when comparing plain gradient boosting with the plain random forest method. Also, these two methods are built from scratch to obtain an RKHS interpretation while we provide a link between the existing standard gradient boosting approaches to the kernel methods, i.e., we do not create a novel gradient boosting algorithm but rather show that the existing ones already have such a link to derive convergence rates and to exploit such linkage to obtain formal Gaussian process interpretation of the gradient boosting learning to get uncertainty estimates using well-established gradient boosting libraries.

Let us mention that there are approaches that study kernels induced by tree ensembles through the perspective of Neural Tangent Kernel \citep{kanoh2022a}, though this analysis is not applicable for classical gradient boosting, while ours is.

Let us also briefly discuss the papers on Neural Tangent Kernel, e.g., \citet{ntk1,ntk2,ntk3,ntk4}, that study deep learning convergence through the perspective of kernel methods. Though such works share similarities with what we do, there are fundamental differences. First, our work is not in the over-parametrization regime, i.e., our kernel method correspondence works for tree ensembles with fixed parameters, but the correspondence is achieved as the number of iterations goes to infinity. It is worth noting that the kernel method perspective on deep learning basically establishes that each trained deep learning model is a sample from Gaussian Process posterior \citep{ntkgp1,ntkgp2,ntkgp3, ntkgp4}, i.e., is sample-then-optimize. For boosting, we achieve this only by introducing Algorithm~\ref{alg:sample-posterior} relying on Algorithm~\ref{alg:sampleprior}, which in its essence, is random initialization for gradient boosting. The classic gradient boosting (Algorithm~\ref{alg:KGB}) can be considered as the mean value of the Gaussian Process, which has no analogs in the world of deep learning, and to achieve convergence to posterior mean there, one needs to average among many trained models. This can be considered as an advantage of gradient boosting over deep learning that we derive in our paper.

\section{Convex optimization in functional spaces}
\label{appendix:convex_optimization}

In this section, we formulate basic definitions of differentiability in functional spaces and the theorem on the convergence of gradient descent in functional spaces. For the proof of the theorem and further details on convex optimization in functional space, the reader can consult \citet{luenberger1969convex}. 

We consider $\mathcal{H}$ to be a Hilbert space with some scalar product $\langle\cdot, \cdot\rangle_{\mathcal{H}}$.

\begin{definition}
\label{definition:frechet_differential}
We say that $F:\mathcal{H}\rightarrow \mathbb{R}$ is Fréchet differentiable if for any $f\in\mathcal{H}$ there exists a bounded linear functional $L_f:\mathcal{H}\rightarrow \mathbb{R}$ such that $\forall h\in\mathcal{H}$ 
$$
F(f+h) = F(f) + L_f[h] + o(\|h\|)\,.
$$
The value of $L_f:\mathcal{H}\rightarrow \mathbb{R}$ is denoted by $\mathcal{D}_f F(f)$ and is called a Fréchet differential of $F$ at point $f$. So, Fr\'echet differential is a functional $\mathcal{D}_f F: \mathcal{H} \rightarrow \mathcal{B}(\mathcal{H}, \R)$, where $\mathcal{B}(X, Y)$ denotes a normed space of linear bounded functionals from $X$ to $Y$.
\end{definition}

\begin{definition}
Let $F: \mathcal{H} \rightarrow \R$ be Fr\'echet differentiable with a Fr\'echet differential $\mathcal{D}_{f}F(f)$ that is a bounded linear functional. Then, by the Riesz theorem there exists a unique $h_f$ such that $\big(\mathcal{D}_f F(f)\big)[h] = \langle h_f, h\rangle_{\mathcal{H}} \,\forall h\in\mathcal{H}$. We call such element a \emph{gradient} of $F$ in $\mathcal{H}$ at $f\in\mathcal{H}$ and denote it by $\nabla_{\mathcal{H}} F(f) = h_f \in \mathcal{H}$.
\end{definition}

\begin{definition}
$F : \mathcal{H} \rightarrow \R$ is said to be twice Fr\'echet differentiable if $\mathcal{D}_f F$ is Fr\'echet differentiable, where the definition of Fr\'echet differential is analogous to Definition~\ref{definition:frechet_differential} with the only difference that $\mathcal{D}_f F$ takes values in $\mathcal{B}(\mathcal{H}, \R)$. The second Fr\'echet differential is denoted by $\mathcal{D}_f^{2} F: \mathcal{H} \rightarrow \mathcal{B}(\mathcal{H}, \mathcal{B}(\mathcal{H}, \R))$. As there is an isomorphism between $\mathcal{B}(\mathcal{H}, \mathcal{B}(\mathcal{H}, \R))$  and $\mathcal{B(\mathcal{H} \times \mathcal{H}, \R)}$, we can consider the second Fr\'echet differential to take values in $\mathcal{B(\mathcal{H} \times \mathcal{H}, \R)}$. Henceforth, we will not differentiate between $\mathcal{B}(\mathcal{H}, \mathcal{B}(\mathcal{H}, \R))$  and $\mathcal{B(\mathcal{H} \times \mathcal{H}, \R)}$.
\end{definition} 

\begin{definition}
A linear operator $P: \mathcal{H} \times \mathcal{H} \rightarrow \R$ is said to be semi-positive definite (denoted by $P \succeq 0$) if $\forall f \in \mathcal{H}$ we have $P(f, f) \geq 0$. $P$ is said to be positive definite ($P \succ 0$) if $\forall f \in \mathcal{H} \setminus \{0\}$ we have  $P(f, f) > 0$.
\end{definition}

\begin{definition}
Given two linear operators $P, G: \mathcal{H} \times \mathcal{H} \rightarrow \R$ we write $P \succeq G$ if $P - G \succeq 0$ and $P \succ G$ if $P - G \succ 0$. 
\end{definition}

Let $I \in B(\mathcal{H} \times \mathcal{H}, \R)$ be a linear operator defined as $I(g, h) = (g, h)_{\mathcal{H}}$.

\begin{theorem}
\label{theorem:gradient_descent}
Let $F$ be bounded below and twice Fr\'echet differentiable functional on a Hilbert space $\mathcal{H}$. Assume that $\mathcal{D}_f^{2} F(f)$ satisfies $0 \prec m I \preceq \mathcal{D}_f^{2} F(f) \preceq \mu I$. Then the gradient descent scheme: 
$$f_{k + 1} = f_{k} - \epsilon \nabla_{\mathcal{H}} F(f_k)$$
converges to $f_{*}$ that minimizes F.
\end{theorem}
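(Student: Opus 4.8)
The plan is to run the textbook convergence analysis of gradient descent for a $\mu$-smooth, $m$-strongly convex objective, lifted verbatim to the Hilbert space $\mathcal{H}$; the hypothesis $0 \prec m I \preceq \mathcal{D}_f^2 F(f) \preceq \mu I$ is precisely the statement that $F$ enjoys both properties. First I would settle existence and uniqueness of the minimizer $f_*$. The lower bound $m I \preceq \mathcal{D}_f^2 F$ makes $F$ strongly convex, so any minimizing sequence is Cauchy: if $F(f_k) \to \inf F =: F_*$, the strong-convexity midpoint estimate $F(\tfrac{f_j+f_k}{2}) \le \tfrac12 F(f_j) + \tfrac12 F(f_k) - \tfrac{m}{8}\|f_j - f_k\|_{\mathcal{H}}^2$ forces $\|f_j - f_k\|_{\mathcal{H}} \to 0$. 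Completeness of $\mathcal{H}$ then produces the limit $f_*$, and strict convexity makes it the unique minimizer, with $\nabla_{\mathcal{H}} F(f_*) = 0$.

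The two workhorse inequalities both come from the mean-value form of Taylor's theorem for twice Fr\'echet differentiable functionals, $F(g) = F(f) + \langle \nabla_{\mathcal{H}} F(f), g-f\rangle_{\mathcal{H}} + \tfrac12\, \mathcal{D}_f^2 F(\xi)(g-f, g-f)$ for some $\xi$ on the segment $[f,g]$. Feeding in the upper bound $\mathcal{D}_f^2 F \preceq \mu I$ together with the iterate $g = f_{k+1} = f_k - \epsilon\,\nabla_{\mathcal{H}} F(f_k)$ yields the descent lemma $F(f_{k+1}) \le F(f_k) - \epsilon\bigl(1 - \tfrac{\mu\epsilon}{2}\bigr)\|\nabla_{\mathcal{H}} F(f_k)\|_{\mathcal{H}}^2$, a strict decrease provided $\epsilon < 2/\mu$. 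Feeding in the lower bound $m I \preceq \mathcal{D}_f^2 F$ and minimizing the resulting quadratic lower bound over $g$ gives the gradient-domination (Polyak--\L{}ojasiewicz) inequality $\|\nabla_{\mathcal{H}} F(f)\|_{\mathcal{H}}^2 \ge 2m\,(F(f) - F(f_*))$.

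Combining the two inequalities produces the geometric contraction $F(f_{k+1}) - F(f_*) \le \bigl(1 - 2m\epsilon(1 - \tfrac{\mu\epsilon}{2})\bigr)\,(F(f_k) - F(f_*))$; for small enough $\epsilon$ the contraction factor lies in $(0,1)$, so $F(f_k) \to F(f_*)$. Strong convexity then gives $\tfrac{m}{2}\|f_k - f_*\|_{\mathcal{H}}^2 \le F(f_k) - F(f_*) \to 0$, whence $f_k \to f_*$ in $\mathcal{H}$, which is the claim. The step needing the most care is the implicit hypothesis on the learning rate: the statement as written does not quantify $\epsilon$, yet convergence genuinely requires $\epsilon$ to be bounded in terms of $\mu$ (the descent lemma fails otherwise), so I would make the condition $\epsilon < 2/\mu$ explicit. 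The only other delicate point is justifying the mean-value Taylor formula in the Hilbert setting, which is routine once one integrates $\mathcal{D}_f^2 F$ along the segment and should be cited from the convex-analysis reference already invoked.
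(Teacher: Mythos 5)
Your proof is correct and is the standard strong-convexity/smoothness argument; the paper itself gives no proof at all for this theorem, deferring entirely to the citation of Luenberger's book, so there is nothing to contrast it with beyond noting that your argument is exactly what that reference supplies. All the individual steps check out: existence and uniqueness of $f_*$ via a Cauchy minimizing sequence and completeness, the descent lemma from the upper curvature bound, the Polyak--\L{}ojasiewicz inequality from the lower bound, the geometric contraction, and the final passage from $F(f_k)\to F(f_*)$ to $f_k\to f_*$ via strong convexity. Your observation that the theorem as stated omits the necessary step-size restriction $\epsilon < 2/\mu$ is a genuine and worthwhile catch --- without it the descent lemma fails and gradient descent can diverge, so the hypothesis should indeed be made explicit (and note that the paper's later corollary in Appendix E does compute explicit $m = \lambda/N$ and $\mu = \lambda/N + \max_x \mathcal{K}(x,x)/N$ but never imposes the corresponding bound on $\epsilon$ when invoking this theorem).
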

\begin{proof}
For the proof see \citet{luenberger1969convex}.
\end{proof}

\section{Kernel ridge regression and RKHS}
\label{appendix:KRR}
\begin{definition}
$\mathcal{K}: X \times X \rightarrow \R$ is called a kernel function if it is positive semi-definite, i.e., $\forall N \in \mathbb{N}^{+}\,\,\forall \x_N \in \mathcal{X}^{N}:\,\, \mathcal{K}(\x_N, \x_N) \succeq 0$. 
\end{definition}
\begin{definition}
For any kernel function we can define a Reproducing Kernel Hilbert Space (RKHS)
$$\mathcal{H}(\mathcal{K}) = \mathrm{span\,} \big\{\mathcal{K}(\cdot, x) \big| x \in X \big\}$$
with a scalar product such that
$$\langle f, \mathcal{K}(\cdot, x)\rangle_{\mathcal{H}(\mathcal{K})} = f(x)\,.$$
\end{definition}

Consider the following Kernel Ridge Regression problem:
\begin{equation*}
V(f, \lambda) = \frac{1}{2N} \| \y_N - f(\x_N)\|_{\R^{N}}^2 + \frac{\lambda}{2N}\|f\|_{\mathcal{H}(K)}^2 - \min_{f\in\mathcal{H}(\mathcal{K})} V(f, \lambda) \rightarrow \min_{f \in \mathcal{H}(\mathcal{K})}
\end{equation*}
and the following Kernel Ridgeless Regression problem:
$$V(f) = \frac{1}{2N} \|\y_N - f(\x_N)\|_{\R^{N}}^2-\min_{f\in\mathcal{H}(\mathcal{K})} V(f) \rightarrow \min_{f \in \mathcal{H}(\mathcal{K})}\,.$$
\begin{lemma} \label{lemma:min_V_lambda}
$\min_{\mathcal{H}(\mathcal{K})} V(f, \lambda)$ has the only solution 
$$f_{*}^{\lambda} = \mathcal{K}(\cdot, \x_N)(\mathcal{K}(\x_N, \x_N) + \lambda I)^{-1} \y_N\,.$$
\end{lemma}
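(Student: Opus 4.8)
The plan is to reduce the infinite-dimensional problem over $\mathcal{H}(\mathcal{K})$ to a finite-dimensional convex quadratic via the representer theorem, solve the quadratic explicitly, and then argue uniqueness at the level of the \emph{function} rather than its coefficients. First I would observe that subtracting the constant $\min_{\mathcal{H}(\mathcal{K})} V(f,\lambda)$ does not move the minimizer, so it suffices to minimize $J(f) = \frac{1}{2N}\|\y_N - f(\x_N)\|_{\R^N}^2 + \frac{\lambda}{2N}\|f\|_{\mathcal{H}(\mathcal{K})}^2$. Since $\lambda > 0$, the regularizer $f \mapsto \frac{\lambda}{2N}\|f\|_{\mathcal{H}}^2$ is strongly convex (its second Fréchet differential is $\frac{\lambda}{N} I$) while the data term is convex, so $J$ is strongly convex and admits a unique minimizer $f_*^\lambda$; formally this is Theorem~\ref{theorem:gradient_descent} with $m = \lambda/N$.

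Next I would apply the representer theorem. Writing any $f = f_\parallel + f_\perp$ with $f_\parallel \in \mathcal{H}_0 := \mathrm{span\,}\{\mathcal{K}(\cdot, x_i)\}_{i=1}^N$ and $f_\perp$ orthogonal to $\mathcal{H}_0$, the reproducing property gives $f(x_i) = \langle f, \mathcal{K}(\cdot, x_i)\rangle_{\mathcal{H}} = \langle f_\parallel, \mathcal{K}(\cdot, x_i)\rangle_{\mathcal{H}} = f_\parallel(x_i)$, so the data term depends only on $f_\parallel$, whereas $\|f\|_{\mathcal{H}}^2 = \|f_\parallel\|_{\mathcal{H}}^2 + \|f_\perp\|_{\mathcal{H}}^2 \ge \|f_\parallel\|_{\mathcal{H}}^2$. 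Hence the minimizer has $f_\perp = 0$, i.e.\ $f_*^\lambda = \mathcal{K}(\cdot, \x_N)\alpha$ for some $\alpha \in \R^N$.

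I would then substitute this ansatz. Setting $G := \mathcal{K}(\x_N, \x_N)$ for the Gram matrix, the reproducing property yields $f(\x_N) = G\alpha$ and $\|f\|_{\mathcal{H}}^2 = \alpha^\top G \alpha$, so $J$ becomes the convex quadratic $\frac{1}{2N}\|\y_N - G\alpha\|_{\R^N}^2 + \frac{\lambda}{2N}\alpha^\top G \alpha$ in $\alpha$. Differentiating and setting the gradient to zero gives the stationarity condition $G[(G + \lambda I_N)\alpha - \y_N] = 0$. Because $G \succeq 0$ and $\lambda > 0$, the matrix $G + \lambda I_N$ is positive definite, hence invertible, and $\alpha = (G + \lambda I_N)^{-1}\y_N$ satisfies the stronger equation $(G+\lambda I_N)\alpha = \y_N$ and therefore the stationarity condition. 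This produces exactly $f_*^\lambda(x) = \mathcal{K}(x, \x_N)(\mathcal{K}(\x_N, \x_N) + \lambda I_N)^{-1}\y_N$.

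The main obstacle is that when $G$ is singular the stationarity condition $G[(G+\lambda I_N)\alpha - \y_N] = 0$ does not pin down $\alpha$ uniquely, so uniqueness of the coefficient vector fails and must be decoupled from uniqueness of the function. I would resolve this by arguing uniqueness at the level of $f$: the strong convexity established in the first step already guarantees $f_*^\lambda$ is the unique minimizer of $J$, and since the explicit candidate above is a minimizer, it must coincide with it. (If one prefers to avoid invoking strong convexity directly here, the same conclusion follows by noting that for any two stationary $\alpha_1,\alpha_2$ one has $G(G+\lambda I_N)(\alpha_1-\alpha_2)=0$, whence $\alpha_1-\alpha_2 \in \ker G$ since $G+\lambda I_N$ is invertible, so $\|\mathcal{K}(\cdot,\x_N)(\alpha_1-\alpha_2)\|_{\mathcal{H}}^2 = (\alpha_1-\alpha_2)^\top G(\alpha_1-\alpha_2) = 0$ and the two functions agree.)
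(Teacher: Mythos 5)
Your proof is correct and follows essentially the same route as the paper's: restrict to $\mathrm{span}\{\mathcal{K}(\cdot,x_i)\}$ via orthogonal projection (the paper's projector $P$ is exactly your representer-theorem step), reduce to the finite-dimensional quadratic in $\alpha$, obtain the stationarity condition $G[(G+\lambda I_N)\alpha-\y_N]=0$, and observe that the $\ker G$ ambiguity in the coefficients does not affect the resulting function. Your additional upfront appeal to strong convexity of the full functional is a slightly cleaner way to package uniqueness, but it is not a different argument in substance.
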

\begin{proof}
First, let us show that $f_{*}^{\lambda} \in \mathrm{span\,} \big\{\mathcal{K}(\cdot, x_i)\big\}$. Let $\mathcal{H}(\mathcal{K}) = \mathrm{span\,} \big\{\mathcal{K}(\cdot, x_i)\big\} \oplus \mathrm{span\,} \big\{\mathcal{K}(\cdot, x_i)\big\}^{\perp}$ and consider the projector $P: \mathcal{H}(\mathcal{K}) \rightarrow \mathcal{H}(\mathcal{K})$ onto the space $\mathrm{span\,} \big\{\mathcal{K}(\cdot, x_i)\big\}$. 

It is easy to show that $P(f)(\x_N) = f(\x_N)$ for any $f \in \mathcal{H}(\mathcal{K})$. Indeed,
$$(f - P(f))[\x_N] = \langle f - P(f), \mathcal{K}(\cdot, \x_N)\rangle = 0\,.$$
If $f_{*}^{\lambda}$ does not lie in $\mathrm{span\,} \big\{\mathcal{K}(\cdot, x_i)\big\}$, then $\|f_{*}^{\lambda}\|_{\mathcal{H}(\mathcal{K})} > \|P(f_{*}^{\lambda})\|_{\mathcal{H}(\mathcal{K})}$ and $V(P(f_{*}^{\lambda}), \lambda) < V(f_{*}^{\lambda}, \lambda)$. We get a contradiction with the minimality of $f_{*}^{\lambda}$.

Now, let us prove the existence of $f_{*}^{\lambda}$. Consider $f = \mathcal{K}(\cdot, \x_N) c$, where $c \in \R^N$. Then, we find the optimal $c$ by taking a derivative of $V(f, \lambda)$ with respect to $c$ and equating it to zero:
$$\mathcal{K}(\x_N, \x_N)(\mathcal{K}(\x_N, \x_N)c - y_N) + \lambda \mathcal{K}(\x_N, \x_N)c = 0\,.$$
Then, 
$$c_v = (\mathcal{K}(\x_N, \x_N) + \lambda I)^{-1} (y_N + v)\,,$$
where $v \in \mathrm{ker\,} \mathcal{K}(\x_N, \x_N)$.
Note that all $\mathcal{K}(\cdot, \x_N) c_v$ are equal. Then, we have the only solution of the KRR problem:
$$f_{*}^{\lambda} = \mathcal{K}(\cdot, \x_N)(\mathcal{K}(\x_N, \x_N) + \lambda I)^{-1} \y_N\,.$$ 
\end{proof}

\begin{lemma}
\label{lemma:KRidgelessR}
$\min_{\mathcal{H}(\mathcal{K})}V(f)$ has the only solution in $\mathrm{span\,} \big\{\mathcal{K}(\cdot, \x_i)\big\}$ and it is the solution of minimum RKHS norm:
$$f_{*} = \mathcal{K}(\cdot, \x_N)\mathcal{K}(\x_N, \x_N)^{\dagger}\y_N\,.$$
\end{lemma}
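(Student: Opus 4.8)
The plan is to mirror the structure of the proof of Lemma~\ref{lemma:min_V_lambda}, adapting each step to the degenerate case $\lambda = 0$, where $\mathcal{K}(\x_N, \x_N)$ may be singular and the data-fitting problem no longer has a unique minimizer. First I would establish the representer-type reduction: writing $\mathcal{H}(\mathcal{K}) = \mathrm{span}\{\mathcal{K}(\cdot, x_i)\} \oplus \mathrm{span}\{\mathcal{K}(\cdot, x_i)\}^{\perp}$ and letting $P$ be the orthogonal projector onto the span, the reproducing property gives $(f - P f)(\x_N) = \langle f - Pf, \mathcal{K}(\cdot, \x_N)\rangle_{\mathcal{H}(\mathcal{K})} = 0$, so $Pf$ and $f$ agree on the data and hence give the same value of $V$, while $\|Pf\|_{\mathcal{H}} \le \|f\|_{\mathcal{H}}$ by Pythagoras. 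Consequently the minimum-norm minimizer must lie in $\mathrm{span}\{\mathcal{K}(\cdot, x_i)\}$, and any orthogonal component strictly increases the norm; this already identifies $f_*$ as the minimum-norm element among all minimizers once we locate it inside the span.

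Next I would parametrize $f = \mathcal{K}(\cdot, \x_N) c$ with $c \in \R^N$, so that $V(f)$ depends on $c$ only through $\frac{1}{2N}\|\y_N - \mathcal{K}(\x_N, \x_N) c\|_{\R^N}^2$. Differentiating and setting the gradient to zero yields the normal equations $\mathcal{K}(\x_N, \x_N)\big(\mathcal{K}(\x_N, \x_N) c - \y_N\big) = 0$. Since $\mathcal{K}(\x_N, \x_N)$ is symmetric positive semi-definite, its range and kernel are orthogonal complements, and this is equivalent to $\mathcal{K}(\x_N, \x_N) c = \mathcal{K}(\x_N, \x_N)\mathcal{K}(\x_N, \x_N)^{\dagger}\y_N$, i.e. the evaluation $f(\x_N)$ equals the orthogonal projection of $\y_N$ onto $\mathrm{range}\,\mathcal{K}(\x_N, \x_N)$. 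The full solution set is then $c = \mathcal{K}(\x_N, \x_N)^{\dagger}\y_N + v$ with $v \in \mathrm{ker}\,\mathcal{K}(\x_N, \x_N)$.

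The observation that resolves the apparent non-uniqueness is that all these $c$ produce the same element of the RKHS: for $v \in \mathrm{ker}\,\mathcal{K}(\x_N, \x_N)$ we have $\|\mathcal{K}(\cdot, \x_N) v\|_{\mathcal{H}}^2 = v^{\top}\mathcal{K}(\x_N, \x_N) v = 0$, so $\mathcal{K}(\cdot, \x_N) v = 0$ in $\mathcal{H}(\mathcal{K})$. Hence every stationary $c$ gives the single function $f_* = \mathcal{K}(\cdot, \x_N)\mathcal{K}(\x_N, \x_N)^{\dagger}\y_N$, which by the first step is the unique minimum-norm minimizer lying in the span.

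The main obstacle, relative to the $\lambda > 0$ case, is handling the singularity of $\mathcal{K}(\x_N, \x_N)$ cleanly: one must replace the inverse by the Moore--Penrose pseudoinverse and verify the two facts that make it work --- that $\mathcal{K}(\x_N, \x_N)\mathcal{K}(\x_N, \x_N)^{\dagger}$ is the orthogonal projector onto $\mathrm{range}\,\mathcal{K}(\x_N, \x_N)$ (so the normal equations are genuinely solved), and that the kernel directions contribute nothing to the function via the zero-norm computation above. As a cross-check I would verify consistency with the definition $f_* = \lim_{\lambda \to 0} f_*^{\lambda}$: expanding $f_*^{\lambda} = \mathcal{K}(\cdot, \x_N)(\mathcal{K}(\x_N, \x_N) + \lambda I)^{-1}\y_N$ in an eigenbasis of $\mathcal{K}(\x_N, \x_N)$, the eigendirections with positive eigenvalue converge to the pseudoinverse action, while the null directions, though their coefficients blow up like $1/\lambda$, multiply zero-norm functions and drop out, recovering exactly $f_*$.
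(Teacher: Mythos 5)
Your proposal is correct and follows essentially the same route as the paper: restrict to $f=\mathcal{K}(\cdot,\x_N)c$, derive the normal equations, solve them with the Moore--Penrose pseudoinverse, and observe that kernel directions of $\mathcal{K}(\x_N,\x_N)$ yield the zero function so all stationary $c$ give the same $f_*$. You additionally make explicit the projection argument (which the paper only carries out for the regularized case) and the zero-norm computation $v^{\top}\mathcal{K}(\x_N,\x_N)v=0$, which are welcome clarifications rather than deviations.
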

\begin{proof}
Consider $f = \mathcal{K}(\cdot, \x_N) c$, where $c \in \R^{N}$. Now consider $V(f)$ and differentiate it with respect to $c$. If we equate the derivative to zero, we get:
$$\frac{1}{N} \mathcal{K}(\x_N, \x_N) (\mathcal{K}(\x_N, \x_N)c - \y_N) = 0\,.$$
Then, $\mathcal{K}(\x_N, \x_N) c - (\y_N + v) = 0$ for some $v \in \mathrm{ker\,}\mathcal{K}(\x_N, \x_N)$. Note that $\y_N + \mathrm{ker\,}\mathcal{K}(\x_N, \x_N) \cap \mathrm{Im\,} \mathcal{K}(\x_N, \x_N) \neq \emptyset$. Then, for any $v$ such that $\y_N + v \in \mathrm{Im\,} \mathcal{K}(\x_N, \x_N)$ there exists a solution $c_v = \mathcal{K}(\x_N, \x_N)^{\dagger}(\y_N + v) + \mathrm{ker\,} \mathcal{K}(\x_N, \x_N)$. This follows from the fact that $\mathcal{K}(\x_N, \x_N) \mathcal{K}(\x_N, \x_N)^{\dagger}$ is an orthoprojector onto $\mathrm{Im\,}\mathcal{K}(\x_N, \x_N)$.
Note that 
\begin{equation*}
f_{*} =
\mathcal{K}(\cdot, \x_N)(\mathcal{K}(\x_N, \x_N)^{\dagger}(\y_N + v) + \mathrm{ker\,} \mathcal{K}(\x_N, \x_N))
= \mathcal{K}(\cdot, \x_N) \mathcal{K}(\x_N, \x_N)^{\dagger} \y_N.
\end{equation*}
Then, the existence and uniqueness of $f_{*}$ follow.
\end{proof}

Now, consider a linear space $\mathcal{F}\subset L_{2}(\rho)$ of all possible ensembles of trees from $\mathcal{V}$:
\[
\mathcal{F} = {\mathrm{span\,}\big\{\phi^{(j)}_{\nu}(\cdot):\mathcal{X}\rightarrow \{0,1\}\big| \nu\in \mathcal{V},  j\in\{1,\ldots, L_{\nu}\}\big\}}\,.
\]

Define the unique function:
$$
\{f_{*}\} = \lim_{\lambda \rightarrow 0_+} \argmin_{f\in\mathcal{F}} V(f,\lambda)\subset \argmin_{f\in\mathcal{F}} V(f).
$$
Then following two lemmas hold:
\begin{lemma}\label{lemma:orthogonality_kernel}
$\langle \y_N - f_{*}(\x_N), f(\x_N) \rangle_{\R^{N}} = 0\,$ 
for any $f \in \mathcal{F}$.
\end{lemma}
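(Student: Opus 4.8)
The plan is to exploit that $f_{*}$ is, by its very definition as $\lim_{\lambda\rightarrow 0_+}\argmin_{f\in\mathcal{F}} V(f,\lambda)$, an element of $\argmin_{f\in\mathcal{F}} V(f)$, i.e.\ a minimizer of the convex quadratic functional $V$ over the \emph{linear} space $\mathcal{F}$, and then to read off the asserted orthogonality as the first-order optimality condition. First I would compute the Fréchet derivative of $V(f)=\frac{1}{2N}\|\y_N-f(\x_N)\|_{\R^N}^2-\min_{\mathcal{F}} V$ at $f_{*}$. Since $f\mapsto f(\x_N)$ is a bounded linear map on the finite-dimensional $\mathcal{F}$, a direct expansion of $V(f_{*}+th)$ in $t$ gives, for every $h\in\mathcal{F}$,
$$
\mathcal{D}_f V(f_{*})[h]=-\tfrac{1}{N}\langle \y_N-f_{*}(\x_N),\,h(\x_N)\rangle_{\R^{N}}\,.
$$
Because $\mathcal{F}$ is a linear subspace, both $+h$ and $-h$ are admissible directions at the minimizer $f_{*}$, so the directional derivative must vanish in every direction $h$; hence $\langle \y_N-f_{*}(\x_N),\,h(\x_N)\rangle_{\R^{N}}=0$ for all $h\in\mathcal{F}$. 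Specializing to $h=f$ yields the claim.

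Alternatively, I would give a purely linear-algebraic argument using the explicit formula of Lemma~\ref{lemma:KRidgelessR}, namely $f_{*}(\x_N)=\mathcal{K}(\x_N,\x_N)\mathcal{K}(\x_N,\x_N)^{\dagger}\y_N$. As noted in that proof, $\mathcal{K}(\x_N,\x_N)\mathcal{K}(\x_N,\x_N)^{\dagger}$ is the orthoprojector onto $\mathrm{Im}\,\mathcal{K}(\x_N,\x_N)$, so the residual $\y_N-f_{*}(\x_N)=\big(I-\mathcal{K}(\x_N,\x_N)\mathcal{K}(\x_N,\x_N)^{\dagger}\big)\y_N$ lies in $\big(\mathrm{Im}\,\mathcal{K}(\x_N,\x_N)\big)^{\perp}=\ker\mathcal{K}(\x_N,\x_N)$. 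On the other hand, every $f\in\mathcal{F}$ admits the decomposition $f=P(f)+(f-P(f))$ from the proof of Lemma~\ref{lemma:min_V_lambda}, where $P(f)=\mathcal{K}(\cdot,\x_N)c$ for some $c\in\R^{N}$ and $(f-P(f))(\x_N)=0$; hence $f(\x_N)=\mathcal{K}(\x_N,\x_N)c\in\mathrm{Im}\,\mathcal{K}(\x_N,\x_N)$. Orthogonality of the two subspaces then gives $\langle \y_N-f_{*}(\x_N),\,f(\x_N)\rangle_{\R^{N}}=0$.

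The main technical point in the second route is the identification of the achievable evaluation subspace $\{f(\x_N):f\in\mathcal{F}\}$ with $\mathrm{Im}\,\mathcal{K}(\x_N,\x_N)$, together with the fact that the residual sits in its orthogonal complement $\ker\mathcal{K}(\x_N,\x_N)$; both reduce to the reproducing/projection property already established in Lemmas~\ref{lemma:min_V_lambda} and~\ref{lemma:KRidgelessR}, so there is essentially no new obstacle. The only conceptual care needed is that $f_{*}$ is introduced as a limit of ridge solutions rather than directly as a least-squares minimizer; but the excerpt already records that this limit lies in $\argmin_{f\in\mathcal{F}} V(f)$, so we may legitimately treat $f_{*}$ as a genuine minimizer and the first-order argument of the first paragraph applies verbatim. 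I would present the first (variational) argument as the main proof, since it is the shortest and matches the functional-analytic framework set up in the paper.
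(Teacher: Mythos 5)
Your main (variational) argument is essentially the paper's own proof: the paper establishes the same first-order optimality condition by expanding $\|\y_N-(f_{*}+\alpha f)(\x_N)\|_{\R^N}^2$ in $\alpha$ and deriving a contradiction with the minimality of $f_{*}$ for small $\alpha>0$, which is precisely the statement that the directional derivative you compute must vanish on the linear space $\mathcal{F}$. Both versions are correct, and your alternative pseudo-inverse route via Lemma~\ref{lemma:KRidgelessR} is also sound but brings in more machinery than needed.
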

\begin{proof}
Assume that $\langle \y_N - f_{*}(\x_N), f(\x_N) \rangle_{\R^{N}} \neq 0$ for some $f \in \mathcal{F}$. Then, for some $f \in \mathcal{F},$ $\langle \y_N - f_{*}(\x_N), f(\x_N) \rangle_{\R^{N}} > 0$. We have:
\begin{multline*}
\|\y_N - (f_{*} + \alpha f)(\x_N)\|_{\R^{N}}^2 \\ 
= \|\y_N - f_{*}(\x_N)\|_{\R^{N}}^{2} - 2 \alpha \langle \y_N - f_{*}(\x_N), f(\x_N) \rangle_{\R^{N}} + \alpha^2 \|f(\x_N)\|_{\R^N}^2 \\
< \|\y_N - f_{*}(\x_N)\|_{\R^N}^2
\end{multline*}
for small enough $\alpha > 0$, which contradicts with the definition of $f_{*}$.

\end{proof}
\begin{lemma}
$$V(f,\lambda) = \frac{1}{2N} \|f_{*}(\x_N) - f(\x_N)\|_{\R^{N}}^{2} + \frac{\lambda}{2N}\|f\|_{\mathcal{H}}^2 - C_{\lambda}$$
where $C_{\lambda} = \inf_{f \in F} L(f, \lambda) - \inf_{f \in F} L(f) \geq 0$.
\end{lemma}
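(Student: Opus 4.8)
The plan is to reduce the whole identity to the orthogonality relation already established in Lemma~\ref{lemma:orthogonality_kernel}. First I would expand the quadratic residual term appearing in $L(f,\lambda)$ by inserting $f_{*}(\x_N)$:
\begin{equation*}
\|\y_N - f(\x_N)\|_{\R^N}^2 = \|\y_N - f_{*}(\x_N)\|_{\R^N}^2 + 2\langle \y_N - f_{*}(\x_N), f_{*}(\x_N) - f(\x_N)\rangle_{\R^N} + \|f_{*}(\x_N) - f(\x_N)\|_{\R^N}^2.
\end{equation*}
Since $\mathcal{F}$ is a linear space and both $f$ and $f_{*}$ belong to $\mathcal{F}$, the difference $f_{*} - f$ lies in $\mathcal{F}$, so Lemma~\ref{lemma:orthogonality_kernel} forces the cross term to vanish. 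This is the crucial step: it collapses the residual into the Pythagorean identity $\|\y_N - f(\x_N)\|_{\R^N}^2 = \|\y_N - f_{*}(\x_N)\|_{\R^N}^2 + \|f_{*}(\x_N) - f(\x_N)\|_{\R^N}^2$.

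Next I would substitute this back and identify the leading constant. By definition $f_{*}$ minimizes $V(\cdot,0)$, hence it minimizes $L(\cdot)=L(\cdot,0)$, so $\frac{1}{2N}\|\y_N - f_{*}(\x_N)\|_{\R^N}^2 = L(f_{*}) = \inf_{f'\in\mathcal{F}} L(f')$. Adding the regularizer back this gives
\begin{equation*}
L(f,\lambda) = \inf_{f'\in\mathcal{F}} L(f') + \frac{1}{2N}\|f_{*}(\x_N) - f(\x_N)\|_{\R^N}^2 + \frac{\lambda}{2N}\|f\|_{\mathcal{H}}^2.
\end{equation*}

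Finally I would invoke the two definitions $V(f,\lambda) = L(f,\lambda) - \inf_{f'\in\mathcal{F}} L(f',\lambda)$ and $C_{\lambda} = \inf_{f'\in\mathcal{F}} L(f',\lambda) - \inf_{f'\in\mathcal{F}} L(f')$, i.e.\ $\inf_{f'} L(f',\lambda) = \inf_{f'} L(f') + C_{\lambda}$. Subtracting this from the displayed identity cancels the $\inf_{f'} L(f')$ term and leaves exactly the claimed formula. Nonnegativity $C_{\lambda}\ge 0$ is then immediate, since $\frac{\lambda}{2N}\|f\|_{\mathcal{H}}^2 \ge 0$ gives $L(f,\lambda)\ge L(f)$ pointwise and hence $\inf_{f'} L(f',\lambda)\ge \inf_{f'} L(f')$. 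I do not expect a real obstacle here; the only subtleties worth verifying are that $f_{*}-f\in\mathcal{F}$ so that the orthogonality lemma genuinely applies, and that both infima defining $C_{\lambda}$ are taken over $\mathcal{F}$ (not the ambient RKHS), so that the cancellation in the last step is exact.
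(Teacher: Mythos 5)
Your proof is correct and follows essentially the same route as the paper's: both reduce the identity to the Pythagorean decomposition $\|\y_N - f(\x_N)\|^2 = \|\y_N - f_{*}(\x_N)\|^2 + \|f_{*}(\x_N) - f(\x_N)\|^2$, obtained by killing the cross term via Lemma~\ref{lemma:orthogonality_kernel} applied to $f - f_{*} \in \mathcal{F}$, after which the claim is purely definitional bookkeeping with $C_{\lambda}$. Your version is slightly more explicit about the constants and the nonnegativity of $C_{\lambda}$, but there is no substantive difference.
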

\begin{proof}
We need to prove it only for $V(f)$ without regularization as for regularized it follows immediately (note that $f_{*}$ is minimizer of not regularized objective). By definition,
$$V(f) = \frac{1}{2N} \|\y_N - f(\x_N)\|_{\R^N}^2 - \frac{1}{2N} \|\y_N - f_{*}(\x_N)\|_{\R^N}^2\,.$$
Now, let us prove that 
\begin{equation*}
\|\y_N - f(\x_N)\|_{\R^N}^2 - \|\y_N - f_{*}(\x_N)\|_{\R^N}^{2} -
\|f_{*}(\x_N) - f(\x_N)\|_{\R^N}^{2} = 0\,.
\end{equation*}
Indeed,
\begin{multline*}
\|\y_N - f(\x_N)\|_{\R^N}^2 - \|\y_N - f_{*}(\x_N)\|_{\R^N}^{2} - \|f_{*}(\x_N) - f(\x_N)\|_{\R^N}^{2} \\ 
= -2 \langle f_{*}(\x_N), f_{*}(\x_N)\rangle_{\R^N} -2 \langle \y_N, f(\x_N) - f_{*}(\x_N)\rangle_{\R^N} + 2 \langle f_{*}(\x_N), f(\x_N)\rangle_{\R^N} \\ 
=
-2 \langle \y_N - f_{*}(\x_N), f(\x_N) - f_{*}(\x_N)\rangle_{\R^N} = 0\,,
\end{multline*}
where the last equality follows from the previous lemma.
\end{proof}

\begin{lemma}
\label{lemma:v_via_f_star}
$V(f, \lambda) = \frac{1}{2N} \|f_{*}^{\lambda}(\x_N) - f(\x_N)\|_{\mathbb{R}^N}^2 + \frac{\lambda}{2N} \|f_{*}^{\lambda} - f\|_{\mathcal{H}}^2$.
\end{lemma}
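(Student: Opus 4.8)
The plan is to treat $L(\cdot, \lambda)$ as an exact quadratic functional on $\mathcal{F}$ and expand it around its minimizer $f_{*}^{\lambda}$. Writing $g = f - f_{*}^{\lambda}$ and substituting $f = f_{*}^{\lambda} + g$ into $L(f, \lambda) = \frac{1}{2N}\|\y_N - f(\x_N)\|_{\R^N}^2 + \frac{\lambda}{2N}\|f\|_{\mathcal{H}}^2$, both the data-fitting term and the regularizer expand with no higher-order remainder, since each is quadratic. This produces exactly three pieces: a constant $L(f_{*}^{\lambda}, \lambda)$, a term linear in $g$, and a term quadratic in $g$.

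The constant term equals $\inf_{f' \in \mathcal{F}} L(f', \lambda)$ by definition of $f_{*}^{\lambda}$, so it cancels when forming $V(f, \lambda) = L(f, \lambda) - \inf_{f'} L(f', \lambda)$. The quadratic-in-$g$ part is precisely $\frac{1}{2N}\|g(\x_N)\|_{\R^N}^2 + \frac{\lambda}{2N}\|g\|_{\mathcal{H}}^2$, which, after substituting back $g = f - f_{*}^{\lambda}$, is exactly the right-hand side of the claimed identity. Thus everything reduces to showing that the linear-in-$g$ term disappears.

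The linear term is $-\frac{1}{N}\langle \y_N - f_{*}^{\lambda}(\x_N), g(\x_N)\rangle_{\R^N} + \frac{\lambda}{N}\langle f_{*}^{\lambda}, g\rangle_{\mathcal{H}}$, which is nothing but the Fréchet differential $\mathcal{D}_f L(f_{*}^{\lambda}, \lambda)[g]$. Since $f_{*}^{\lambda}$ minimizes $L(\cdot, \lambda)$ over $\mathcal{F}$, first-order optimality forces this to vanish for every $g \in \mathcal{F}$; this is the regularized counterpart of the orthogonality relation in Lemma~\ref{lemma:orthogonality_kernel}. If one prefers an explicit verification, I would decompose $g = g_{\parallel} + g_{\perp}$ with $g_{\parallel} \in \mathrm{span}\{\mathcal{K}(\cdot, x_i)\}$: the component $g_{\perp}$ satisfies $g_{\perp}(\x_N) = 0$ by the reproducing property and is $\mathcal{H}$-orthogonal to $f_{*}^{\lambda}$ (which lies in that span by Lemma~\ref{lemma:min_V_lambda}), so both inner products annihilate $g_{\perp}$. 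On $g_{\parallel} = \mathcal{K}(\cdot, \x_N)c$ one uses $\langle \mathcal{K}(\cdot, x_i), \mathcal{K}(\cdot, x_j)\rangle_{\mathcal{H}} = \mathcal{K}(x_i, x_j)$ together with $f_{*}^{\lambda} = \mathcal{K}(\cdot, \x_N)a$, $a = (\mathcal{K}(\x_N, \x_N) + \lambda I)^{-1}\y_N$, reducing the vanishing to $\mathcal{K}(\x_N, \x_N)\big[(\mathcal{K}(\x_N, \x_N) + \lambda I)a - \y_N\big] = 0$, which holds because $a$ solves the normal equation.

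The only genuine subtlety, and thus the main obstacle, is justifying the first-order optimality condition rigorously in the functional RKHS setting rather than a finite-dimensional one: I must confirm that minimality of $f_{*}^{\lambda}$ over $\mathcal{F}$ implies the directional derivative along every $g \in \mathcal{F}$ vanishes, and handle the orthogonal-complement split so that it suffices to test $g$ in the span of the kernel sections at the data points. Everything else is routine quadratic bookkeeping.
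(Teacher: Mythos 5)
Your proposal is correct and follows essentially the same route as the paper: an exact second-order Taylor expansion of the quadratic functional around $f_{*}^{\lambda}$, with the constant term cancelling in $V$ and the linear term vanishing by first-order optimality (the paper invokes $\mathcal{D}_f V(f_{*}^{\lambda},\lambda)=0$ and the explicit Hessian from Lemma~\ref{lemma:V_hessian}). Your extra verification of the optimality condition via the normal equations is more detail than the paper provides but not a different argument, and the finite-dimensionality of $\mathcal{F}$ disposes of the functional-analytic subtlety you flag.
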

\begin{proof}
$f_{*}^{\lambda}$ is optimum for $V(f, \lambda)$. Then Fr\'echet derivative at $f_{*}^{\lambda}$ equals 0:
$$\mathcal{D}_{f} V(f_{*}^{\lambda}, \lambda) = 0.$$
Consider then writing:
\begin{multline*}
    V(f, \lambda) = V(f_{*}^{\lambda}, \lambda) + \mathcal{D}_{f} V(f_{*}^{\lambda}, \lambda)[f - f_{*}^{\lambda}] + \frac{1}{2} \mathcal{D}^{2}_{f} V(f_{*}^{\lambda}, \lambda)[f - f_{*}^{\lambda}, f - f_{*}^{\lambda}] \\
    = \frac{1}{2N} \|f_{*}^{\lambda}(\x_N) - f(\x_N)\|^2_{\mathbb{R}^N} + \frac{\lambda}{2N} \|f_{*}^{\lambda}(\x_N) - f(\x_N)\|_{\mathbb{R}^N}^{2}.
\end{multline*}
The explicit formula for the Fr\'echet Derivative of $V(f, \lambda)$ can be found in Appendix \ref{appendix:gaussian_process_inference}.   
\end{proof}

\section{Gaussian Process inference}
\label{appendix:gaussian_process_inference}
In this section, we prove Lemma \ref{lemma:gaussian_posterior} from Section~\ref{sec:bayesian_linear_model} of the main text.

Firstly, consider the following regularized error functional:
$$
V(f, \lambda) = 
\frac{1}{2N} \sum_{i=1}^N \big(f(x_i) - y_i\big)^2 + \frac{\lambda}{2N}\|f\|_{\mathcal{H}}^2 - \min_{f\in\mathcal{H}(\mathcal{K})} \Big(\frac{1}{2N} \sum_{i=1}^N \big(f(x_i) - y_i\big)^2 + \frac{\lambda}{2N}\|f\|_{\mathcal{H}}^2\Big).
$$ 
With this functional we can consider the following optimization problem:
$$\min_{f\in \mathcal{H}(\mathcal{K})} V(f, \lambda),$$
which is called as Kernel Ridge Regression.

We will show that this functional satisfies the conditions needed for Theorem~\ref{theorem:gradient_descent}. We will also deduce the formula of the gradient of $V$ in order to show that gradient descent takes the form~\eqref{equation:gradient_descent}.

\begin{lemma}
\label{lemma:frechet_differential}
$V(f, \lambda)$ is Fr\'echet differentiable with the differential given by:
$$
\mathcal{D}_{f}V(f, \lambda) = \frac{\lambda}{N} \langle f, \cdot\rangle_{\mathcal{H}(\mathcal{K})} + \frac{1}{N} \sum_{i=1}^N \big(f(x_i) - y_i \big) ev_{x_{i}}\,,
$$
where $ev_{x_i}: \mathcal{H}(\mathcal{K}) \rightarrow \R$ is a bounded linear functional such that $ev_{x_i}(f) = f(x_i) = (f, \mathcal{K}(x_i, \cdot))_{\mathcal{H}(\mathcal{K})}$.\footnote{We further use the notation $\mathcal{K}_{x_i} := \mathcal{K}(\cdot, x_i)$.} 
\end{lemma}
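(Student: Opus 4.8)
The plan is to apply Definition~\ref{definition:frechet_differential} directly: expand $V(f+h, \lambda)$, collect the terms that are linear in $h$ to identify the candidate differential, and then verify that the collected remainder is $o(\|h\|_{\mathcal{H}(\mathcal{K})})$. Since the subtracted $\min$ term in the definition of $V$ does not depend on $f$, it contributes nothing to the differential, so I only need to differentiate the quadratic data-fit term and the regularization term.

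First I would write out
\begin{align*}
V(f+h, \lambda) - V(f, \lambda) ={ }& \frac{1}{2N}\sum_{i=1}^N\Big[2\big(f(x_i)-y_i\big)h(x_i) + h(x_i)^2\Big] \\
{ }&+ \frac{\lambda}{2N}\Big[2\langle f, h\rangle_{\mathcal{H}(\mathcal{K})} + \|h\|_{\mathcal{H}(\mathcal{K})}^2\Big],
\end{align*}
obtained simply by expanding the squares and the norm. The terms linear in $h$ are exactly $\frac{1}{N}\sum_{i=1}^N\big(f(x_i)-y_i\big)h(x_i) + \frac{\lambda}{N}\langle f, h\rangle_{\mathcal{H}(\mathcal{K})}$; rewriting $h(x_i)=ev_{x_i}(h)$ gives precisely the claimed functional $\mathcal{D}_f V(f,\lambda)$. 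The remaining terms form the remainder $R(h) = \frac{1}{2N}\sum_{i=1}^N h(x_i)^2 + \frac{\lambda}{2N}\|h\|_{\mathcal{H}(\mathcal{K})}^2$.

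The crux of the argument — the step I expect to be the only nontrivial point — is controlling $R(h)$. The regularization part is manifestly $O(\|h\|_{\mathcal{H}(\mathcal{K})}^2)$. For the data part I would invoke the reproducing property $h(x_i)=\langle h, \mathcal{K}_{x_i}\rangle_{\mathcal{H}(\mathcal{K})}$ together with Cauchy--Schwarz to get $|h(x_i)| \le \|h\|_{\mathcal{H}(\mathcal{K})}\sqrt{\mathcal{K}(x_i,x_i)}$, hence $h(x_i)^2 \le \mathcal{K}(x_i,x_i)\,\|h\|_{\mathcal{H}(\mathcal{K})}^2$. Summing over the finitely many training points shows $R(h) = O(\|h\|_{\mathcal{H}(\mathcal{K})}^2) = o(\|h\|_{\mathcal{H}(\mathcal{K})})$, which is exactly the order condition required by Definition~\ref{definition:frechet_differential}.

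Finally I would confirm that $\mathcal{D}_f V(f,\lambda)$ is a \emph{bounded} linear functional, as required. Linearity is immediate; boundedness follows because the Riesz term $\frac{\lambda}{N}\langle f, \cdot\rangle_{\mathcal{H}(\mathcal{K})}$ has norm $\frac{\lambda}{N}\|f\|_{\mathcal{H}(\mathcal{K})}$, and each evaluation functional $ev_{x_i}$ is bounded by $\sqrt{\mathcal{K}(x_i,x_i)}$ via the same Cauchy--Schwarz estimate. Thus the candidate functional is the genuine Fréchet differential, completing the proof. The only place where the RKHS structure is essential is the boundedness of the point-evaluation functionals, which is precisely the defining feature of a reproducing kernel Hilbert space.
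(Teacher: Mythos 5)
Your proof is correct, but it takes a different route from the paper's. The paper handles the data-fit term by viewing each summand $(f(x_i)-y_i)^2$ as a composition $G\circ F$ with $F = ev_{x_i} - y_i$ (an affine map whose differential is $ev_{x_i}$) and $G(x)=x^2$, and then invokes the chain rule for Fr\'echet differentials to read off $2(f(x_i)-y_i)\,ev_{x_i}$; the regularizer and the verification of the $o(\|h\|)$ remainder are left implicit. You instead expand $V(f+h,\lambda)-V(f,\lambda)$ directly, isolate the linear part, and explicitly bound the remainder $\frac{1}{2N}\sum_i h(x_i)^2 + \frac{\lambda}{2N}\|h\|_{\mathcal{H}(\mathcal{K})}^2$ as $O(\|h\|_{\mathcal{H}(\mathcal{K})}^2)$ via the reproducing property and Cauchy--Schwarz, and you also check boundedness of the candidate functional. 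Your version is more elementary and more self-contained: it makes explicit exactly where the RKHS structure is used (boundedness of point evaluations), whereas the paper's chain-rule argument is shorter but defers the remainder estimate to the general composition rule. Both establish the same differential; there is no gap in either.
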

\begin{proof}
As Fr\'echet differential is linear, we only need to find Fr\'echet differential for $(f(x_i) - y_i)^2$.

Note that $(f(x_i) - y_i)^2$ is a composition of two functions:
$$ F: \mathcal{H}(\mathcal{K}) \rightarrow \R, \; F = ev_{x_i} - y_i\,,$$
$$ G: \R \rightarrow \R, \; G(x) = x^2\,. $$
The differential of the composition can be found as:
$$\mathcal{D}_f G(F(f)) = \frac{\partial}{\partial x} G(F(f)) \mathcal{D}_f F(f)\,,$$
$$\mathcal{D}_f G(F(f)) = 2 (f(x_i) - y_i) ev_{x_i}\,,$$
where $\mathcal{D}_f F(f) = ev_{x_i}$ because 
$$ev_{x_i}(f + h) - y_i = ev_{x_i}(f) - y_i + ev_{x_i}(h)\,.$$
\end{proof}

\begin{lemma}
\label{lemma:gradientKRR}
The gradient of $V(f, \lambda)$, Riesz representative of the functional above, is given by:
$$
\nabla_{f} V(f, \lambda) = \frac{\lambda}{N} f + \frac{1}{N} \sum_{i=1}^N (f(x_i) - y_i) \mathcal{K}_{x_i}\,.
$$
\end{lemma}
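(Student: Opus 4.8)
The plan is to read off the gradient directly from the Fréchet differential established in Lemma~\ref{lemma:frechet_differential} by invoking the Riesz representation theorem. By definition, $\nabla_f V(f,\lambda)$ is the unique element $g \in \mathcal{H}(\mathcal{K})$ for which $\mathcal{D}_f V(f,\lambda)[h] = \langle g, h\rangle_{\mathcal{H}(\mathcal{K})}$ holds for every $h \in \mathcal{H}(\mathcal{K})$. Since the assignment of a Riesz representative to a bounded linear functional is itself linear, I would identify the representative of each summand in
$$\mathcal{D}_{f}V(f, \lambda) = \frac{\lambda}{N} \langle f, \cdot\rangle_{\mathcal{H}(\mathcal{K})} + \frac{1}{N} \sum_{i=1}^N \big(f(x_i) - y_i \big)\, ev_{x_{i}}$$
separately and then add them.

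The first term is already written in inner-product form, so the functional $h \mapsto \frac{\lambda}{N}\langle f, h\rangle_{\mathcal{H}(\mathcal{K})}$ has representative $\frac{\lambda}{N} f$ by inspection. For each evaluation functional $ev_{x_i}$, I would apply the reproducing property of the kernel, $ev_{x_i}(h) = h(x_i) = \langle \mathcal{K}_{x_i}, h\rangle_{\mathcal{H}(\mathcal{K})}$, which exhibits $\mathcal{K}_{x_i} = \mathcal{K}(\cdot, x_i)$ as the representative of $ev_{x_i}$. Combining by linearity then yields
$$\nabla_{f} V(f, \lambda) = \frac{\lambda}{N} f + \frac{1}{N}\sum_{i=1}^N (f(x_i) - y_i)\,\mathcal{K}_{x_i},$$
which is precisely the claimed identity.

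There is no substantive obstacle here; the entire content is the reproducing property together with linearity of the Riesz map. The only point meriting brief care is verifying that each $ev_{x_i}$ is a bounded linear functional so that the Riesz theorem applies. This is immediate from Cauchy--Schwarz: $|ev_{x_i}(h)| = |\langle \mathcal{K}_{x_i}, h\rangle_{\mathcal{H}(\mathcal{K})}| \le \|\mathcal{K}_{x_i}\|_{\mathcal{H}(\mathcal{K})}\,\|h\|_{\mathcal{H}(\mathcal{K})}$, with $\|\mathcal{K}_{x_i}\|_{\mathcal{H}(\mathcal{K})}^2 = \mathcal{K}(x_i, x_i) < \infty$. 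Since the sum over $i$ is finite, the full differential is bounded, so its unique representative exists and equals the stated expression.
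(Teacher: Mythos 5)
Your argument is correct and is exactly the route the paper takes: the paper's proof of this lemma is the one-line ``Follows from the previous lemma,'' and your write-up simply spells out that step --- reading the representative of each term of the Fr\'echet differential from Lemma~\ref{lemma:frechet_differential} via the reproducing property $ev_{x_i}(h) = \langle \mathcal{K}_{x_i}, h\rangle_{\mathcal{H}(\mathcal{K})}$ and linearity of the Riesz map. The added check that each $ev_{x_i}$ is bounded is a harmless (and welcome) bit of extra care.
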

\begin{proof}
Follows from the previous lemma. 
\end{proof}

\begin{lemma} \label{lemma:V_hessian}
$V(f, \lambda)$ is twice Fr\'echet differentiable with the differential given by:
$$\mathcal{D}_f^{2} V \colon \mathcal{H}(\mathcal{K}) \rightarrow B(\mathcal{H}(\mathcal{K}), B(\mathcal{H}(\mathcal{K}), \R))\,,$$
$$\mathcal{D}_f^2 V(f, \lambda)[h] = \frac{\lambda}{N} \langle h, \cdot \rangle_{\mathcal{H}(\mathcal{K})} + \frac{1}{N} \sum_{i=1}^N h(x_i) ev_{x_i}\,.$$
\end{lemma}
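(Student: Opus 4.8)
The plan is to differentiate the first Fréchet differential from Lemma~\ref{lemma:frechet_differential} once more, exploiting the crucial fact that $f\mapsto \mathcal{D}_f V(f,\lambda)$ is \emph{affine}. Write
$$\Phi(f) := \mathcal{D}_f V(f,\lambda) = \frac{\lambda}{N}\langle f, \cdot\rangle_{\mathcal{H}(\mathcal{K})} + \frac{1}{N}\sum_{i=1}^N \big(f(x_i)-y_i\big) ev_{x_i} \in B(\mathcal{H}(\mathcal{K}), \R).$$
First I would observe that $\Phi$ is affine in $f$: the term $\frac{\lambda}{N}\langle f,\cdot\rangle$ is linear in $f$, and since each evaluation $f\mapsto f(x_i)=ev_{x_i}(f)$ is linear, the term $\frac{1}{N}\sum_i(f(x_i)-y_i)ev_{x_i}$ decomposes into a part linear in $f$ plus the $f$-independent constant $-\frac{1}{N}\sum_i y_i\, ev_{x_i}$.

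Next I would compute the increment directly. For any $h\in\mathcal{H}(\mathcal{K})$ one gets, after cancelling the terms depending only on $f$,
$$\Phi(f+h)-\Phi(f) = \frac{\lambda}{N}\langle h,\cdot\rangle_{\mathcal{H}(\mathcal{K})} + \frac{1}{N}\sum_{i=1}^N h(x_i)\, ev_{x_i}.$$
The right-hand side is \emph{exactly} linear in $h$ and the remainder is identically zero, so the $o(\|h\|)$ condition in the definition of the Fréchet differential holds trivially. Hence $\mathcal{D}_f^2 V(f,\lambda)[h]$ equals this expression, which is precisely the claimed formula; note in particular that it does not depend on $f$, as one expects for an affine first differential.

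It then remains to verify the boundedness demanded by the definition, i.e.\ that $\mathcal{D}_f^2 V(f,\lambda)$ is a genuine element of $B(\mathcal{H}(\mathcal{K}), B(\mathcal{H}(\mathcal{K}),\R))$. For this I would invoke the reproducing property: $|ev_{x_i}(g)| = |g(x_i)| = |\langle g, \mathcal{K}_{x_i}\rangle_{\mathcal{H}(\mathcal{K})}| \le \|\mathcal{K}_{x_i}\|_{\mathcal{H}(\mathcal{K})}\,\|g\|_{\mathcal{H}(\mathcal{K})}$, so each $ev_{x_i}$ is bounded with norm $\sqrt{\mathcal{K}(x_i,x_i)}$, and $h\mapsto\langle h,\cdot\rangle$ has norm $\|h\|$. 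The triangle inequality then bounds the operator norm of $\mathcal{D}_f^2 V(f,\lambda)[h]$ linearly in $\|h\|$, showing $\mathcal{D}_f^2 V(f,\lambda)$ is a bounded operator in $h$.

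The only mild subtlety — not a real obstacle — is the bookkeeping of the isomorphism between $B(\mathcal{H}(\mathcal{K}), B(\mathcal{H}(\mathcal{K}),\R))$ and the space of bilinear forms on $\mathcal{H}(\mathcal{K})$ fixed when defining the second Fréchet differential; under that identification the computation above yields the symmetric bilinear form $(h,g)\mapsto \frac{\lambda}{N}\langle h,g\rangle_{\mathcal{H}(\mathcal{K})} + \frac{1}{N}\sum_i h(x_i)g(x_i)$. Because $\Phi$ is affine there are no higher-order terms, so nothing further needs to be estimated and the proof closes.
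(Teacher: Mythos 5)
Your proposal is correct and follows essentially the same route as the paper: both exploit that the first Fréchet differential from Lemma~\ref{lemma:frechet_differential} is affine in $f$, so the increment $\mathcal{D}_{f+h}V - \mathcal{D}_f V$ is exactly linear in $h$ with identically zero remainder (the paper does this term by term for $S(f) = (f(x_i)-y_i)ev_{x_i}$, you do it for the whole differential at once). Your added verification of boundedness via the reproducing property is a welcome detail the paper leaves implicit, but it does not change the argument.
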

\begin{proof}
Due to the linearity of Fr\'echet differential and lemma \ref{lemma:frechet_differential} we need to find only Fr\'echet differential for $(f(x_i) - y_i) ev_{x_i}$.

Consider $S(f) = (f(x_i) - y_i) ev_{x_i}$. Then we need to find $V_f \in B(\mathcal{H}(\mathcal{K}), B(\mathcal{H}(\mathcal{K}), \R))$ such that $S(f + h) = S(f) + V_f[h] + o(\|h\|)$.

It is easy to show that $h \mapsto h(x_i) ev_{x_i} \in B(\mathcal{H}(\mathcal{K}), B(\mathcal{H}(\mathcal{K}), \R))$ and $S(f + h) = S(f) + h(x_i) ev_{x_i}$. Thus, we get that $\mathcal{D}_f S(f)[h] = h(x_i) ev_{x_i}$. From this the statement of the lemma follows.
\end{proof}

Given all the above lemmas, as a corollary of Theorem~\ref{theorem:gradient_descent}, we have the following.

\begin{corollary}
Gradient descent, defined by the following iterative scheme
\begin{equation*}
f_{\tau+1} = \big(1-\frac{\lambda\epsilon}{N}\big)f_{\tau} - \epsilon \frac{1}{N} \sum_{i=1}^N (f_{\tau}(x_i) - y_i) \mathcal{K}_{x_i}\,, \,\,\,
f_{0} = \mathbb{0}_{L_{2}(\rho)}
\end{equation*}
converges to the optimum of $V(f, \lambda)$. Thus,
\begin{equation}
\label{equation:posterior_gradient_descent}
f_{*}^{\lambda} = \lim_{\tau \to \infty} f_{\tau} = \mathcal{K}(\cdot, \x_N) \Big(\mathcal{K}(\x_N, \x_N)+\lambda I_{N}\Big)^{-1} \y_N\,.
\end{equation}
\end{corollary}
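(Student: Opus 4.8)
The plan is to recognize the stated scheme as functional gradient descent on $V(\cdot,\lambda)$ and then invoke Theorem~\ref{theorem:gradient_descent}. Substituting the gradient from Lemma~\ref{lemma:gradientKRR},
$$\nabla_{f} V(f,\lambda) = \frac{\lambda}{N} f + \frac{1}{N}\sum_{i=1}^N \big(f(x_i)-y_i\big)\mathcal{K}_{x_i},$$
into the abstract update $f_{\tau+1} = f_{\tau} - \epsilon \nabla_f V(f_\tau,\lambda)$ reproduces the displayed iteration verbatim: the term $-\epsilon\tfrac{\lambda}{N} f_\tau$ combines with $f_\tau$ to give the factor $\big(1-\tfrac{\lambda\epsilon}{N}\big)$, while the remaining sum is exactly the data term. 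So it suffices to check that $V(\cdot,\lambda)$ satisfies the hypotheses of Theorem~\ref{theorem:gradient_descent}.

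First I would verify boundedness below: by construction $V(f,\lambda)$ is the regularized objective with its own infimum subtracted, hence $V(f,\lambda)\ge 0$ for all $f$. Twice Fréchet differentiability and the Hessian are already supplied by Lemma~\ref{lemma:V_hessian}, which gives the $f$-independent form
$$\mathcal{D}_f^{2} V(f,\lambda)[h,h] = \frac{\lambda}{N}\|h\|_{\mathcal{H}(\mathcal{K})}^2 + \frac{1}{N}\sum_{i=1}^N h(x_i)^2.$$
It then remains to sandwich this operator between $mI$ and $\mu I$ with $0<m\le\mu<\infty$.

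The crux — and the step I expect to require the most care — is precisely this sandwiching. The lower bound cannot come from the data term, which vanishes on any $h$ that is zero at every $x_i$ (possible when $\mathcal{H}(\mathcal{K})$ is large); it must come entirely from the regularizer, giving $m=\lambda/N>0$, so strong convexity hinges on $\lambda>0$. For the upper bound I would use the reproducing property $h(x_i)=\langle h,\mathcal{K}_{x_i}\rangle_{\mathcal{H}(\mathcal{K})}$ together with Cauchy--Schwarz, $h(x_i)^2\le \|h\|_{\mathcal{H}(\mathcal{K})}^2\,\mathcal{K}(x_i,x_i)$, whence $\mathcal{D}_f^{2} V(f,\lambda)[h,h]\le \mu\|h\|_{\mathcal{H}(\mathcal{K})}^2$ with $\mu=\tfrac{1}{N}\big(\lambda+\sum_{i=1}^N \mathcal{K}(x_i,x_i)\big)<\infty$; finiteness is immediate since $N$ is finite and each $\mathcal{K}(x_i,x_i)$ is finite. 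One should also confirm the step-size restriction implicit in Theorem~\ref{theorem:gradient_descent} (e.g.\ $\epsilon<2/\mu$, as in the cited Luenberger reference), which makes the discrete iteration contractive rather than merely the continuous flow.

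With these verifications in hand, Theorem~\ref{theorem:gradient_descent} yields $f_\tau\to f_*$, the unique minimizer of $V(\cdot,\lambda)$. Finally I would identify this limit via Lemma~\ref{lemma:min_V_lambda}, which computes that minimizer explicitly as $f_*^{\lambda}=\mathcal{K}(\cdot,\x_N)\big(\mathcal{K}(\x_N,\x_N)+\lambda I\big)^{-1}\y_N$, giving the claimed formula~\eqref{equation:posterior_gradient_descent}.
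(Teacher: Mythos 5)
Your proposal is correct and follows essentially the same route as the paper's own proof: both identify the iteration as functional gradient descent via Lemma~\ref{lemma:gradientKRR}, take $m=\lambda/N$ from the regularizer, bound the Hessian above using the reproducing property and Cauchy--Schwarz (Lemma~\ref{lemma:V_hessian}), and conclude via Theorem~\ref{theorem:gradient_descent} together with Lemma~\ref{lemma:min_V_lambda}. Your explicit checks of boundedness below and of the step-size restriction are minor additions that the paper leaves implicit.
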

\begin{proof}
By Lemma \ref{lemma:gradientKRR}, our update rule has the form
$$f_{\tau + 1} = f_{\tau} - \epsilon \nabla_{\mathcal{H}} V(f_{\tau}, \lambda).$$
Then, we will find $m, \mu$ such that $0 \prec m I \preceq \mathcal{D}_{f}^2 V(f, \lambda) \preceq \mu I$.
By Lemma \ref{lemma:V_hessian},
$$\mathcal{D}_{f}^2 V(f, \lambda) [g, h] = \frac{\lambda}{N} \langle g, h \rangle_{\mathcal{H}(\mathcal{K})} + \frac{1}{N} g(\x_N)^{T} h(\x_N)\,.$$
Then, we can take $m = \frac{\lambda}{N}$. Let us also write
\begin{multline*}
  \mathcal{D}_{f}^2 V(f, \lambda)[g, g] = \frac{\lambda}{N} \|g\|_{\mathcal{H}(\mathcal{K})}^2 + \frac{1}{N} \|g(\x_N)\|^2 =\\
  \frac{\lambda}{N} \|g\|_{\mathcal{H}(\mathcal{K})}^2 + \frac{1}{N} \|\langle g, \mathcal{K}(\cdot, \x_N)\rangle_{\mathcal{H}(\mathcal{K})}\|^2 \leq (\frac{\lambda}{N} + \frac{1}{N} \max_{x \in \mathcal{X}} \mathcal{K}(x, x))  \|g\|_{\mathcal{H}(\mathcal{K})}^2.
\end{multline*}
Then, we can take $\mu = \frac{\lambda}{N} + \frac{1}{N} \max_{x \in \mathcal{X}} \mathcal{K}(x, x)$. By theorem \ref{theorem:gradient_descent} and lemma \ref{lemma:min_V_lambda} the corollary follows.

\end{proof}

\begin{lemma} 
Consider the gradient descent: 
\begin{align*}
f_{\tau+1} &= \big(1-\frac{\lambda\epsilon}{N}\big) f_{\tau} - \epsilon \frac{1}{N} \sum_{i=1}^N (f_{\tau}(x_i) - y_i) \mathcal{K}_{x_i}\,, \\
f_{0} &= \mathbb{0}_{L_{2}(\rho)}\,, \\
f_{\infty} &= \lim_{\tau \rightarrow \infty} f_{\tau}
\end{align*}
and the following randomization scheme:
\begin{align*}
&\text{1. sample }f^{init} \sim \mathcal{GP}(\mathbb{0}_{L_{2}(\rho)}, \sigma^2 \mathcal{K}+ \delta^2 \id); \nonumber \\
&\text{2. set new labels }\mathbf{y}_N^{new} = \mathbf{y}_N - f^{init}(\mathbf{x}_N); \nonumber \\
&\text{3. fit GD }f_{\tau}(\cdot) \text{ on }\mathbf{y}_N^{new}\text{ assuming }f_{0}(\cdot) = \mathbb{0}_{L_{2}(\rho)} ; \nonumber \\
&\text{4. output } \hat{f}(\cdot) = f^{init}(\cdot) + f_{\infty}(\cdot) \text{ as final model}.
\end{align*}
Then, $\hat{f}$ from the scheme above follows the Gaussian Process posterior with the following mean and covariance:
\begin{align*}
\mathbb{E}\hat{f}(x) &= \mathcal{K}(x, \x_N)\Big(\mathcal{K}(\x_N, \x_N)+\lambda I_{N}\Big)^{-1}\y_{N}\,,\\
\mathrm{cov}(\hat{f}(x)) &= \delta^2 + \sigma^2\Big(\mathcal{K}(x, x) - \mathcal{K}(x, \x_N)\Big(\mathcal{K}(\x_N, \x_N)+\lambda I_{N}\Big)^{-1} \mathcal{K}(\x_N, x) \Big)\,.
\end{align*}
\end{lemma}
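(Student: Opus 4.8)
The plan is to collapse the three-step randomization into a single affine map of the Gaussian prior $f^{init}$ and then read off the first two moments. First I would invoke the corollary immediately preceding this lemma: because $V(\cdot,\lambda)$ is strongly convex, gradient descent in step~3 converges to the unique KRR minimizer, so running it on the labels $\y_N^{new}$ gives
\[
f_{\infty} = \mathcal{K}(\cdot,\x_N)\big(\mathcal{K}(\x_N,\x_N)+\lambda I_N\big)^{-1}\y_N^{new},\qquad \lambda=\tfrac{\delta^2}{\sigma^2}.
\]
Substituting $\y_N^{new}=\y_N-f^{init}(\x_N)$ and adding back $f^{init}$ gives the output
\[
\hat{f}(x)=f^{init}(x)+A\big(\y_N-f^{init}(\x_N)\big),\qquad A:=\mathcal{K}(x,\x_N)\big(\mathcal{K}(\x_N,\x_N)+\lambda I_N\big)^{-1}.
\]
Thus $\hat f(x)=f^{init}(x)-A\,f^{init}(\x_N)+A\,\y_N$ is an affine function of the jointly Gaussian vector $\big(f^{init}(x),f^{init}(\x_N)\big)$; since affine images of Gaussians are Gaussian (and the same holds for any finite set of evaluation points, $A$ being a fixed bounded operator), $\hat f$ is a Gaussian process and it remains only to compute its mean and covariance.

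For the mean I would use $\mathbb{E}f^{init}\equiv 0$, so both random terms vanish and only $A\y_N$ survives, which is precisely the claimed posterior mean. For the variance the deterministic term $A\y_N$ drops out and I would expand
\[
\mathrm{cov}(\hat f(x))=\Var\big(f^{init}(x)\big)-2A\,\mathrm{cov}\big(f^{init}(\x_N),f^{init}(x)\big)+A\,\mathrm{cov}\big(f^{init}(\x_N)\big)A^{\top}.
\]
The covariance of $f^{init}\sim\mathcal{GP}(0,\sigma^2\mathcal{K}+\delta^2\id)$ is read pointwise, the $\delta^2\id$ term contributing only on the diagonal: $\Var(f^{init}(x))=\sigma^2\mathcal{K}(x,x)+\delta^2$ and $\mathrm{cov}(f^{init}(\x_N))=\sigma^2\mathcal{K}(\x_N,\x_N)+\delta^2 I_N=\sigma^2\big(\mathcal{K}(\x_N,\x_N)+\lambda I_N\big)$, whereas for a test point $x$ off the training grid the cross term is noise free, $\mathrm{cov}(f^{init}(\x_N),f^{init}(x))=\sigma^2\mathcal{K}(\x_N,x)$.

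The crux is the algebraic identity linking the regression operator $A$ to the prior covariance when $\lambda=\delta^2/\sigma^2$. Setting $b:=\sigma^2\mathcal{K}(\x_N,x)$ and $C:=\sigma^2\big(\mathcal{K}(\x_N,\x_N)+\lambda I_N\big)$ one checks $A=b^{\top}C^{-1}$, so the last two terms collapse,
\[
-2Ab+ACA^{\top}=-2b^{\top}C^{-1}b+b^{\top}C^{-1}b=-\,b^{\top}C^{-1}b,
\]
leaving $\mathrm{cov}(\hat f(x))=\sigma^2\mathcal{K}(x,x)+\delta^2-\sigma^2\mathcal{K}(x,\x_N)\big(\mathcal{K}(\x_N,\x_N)+\lambda I_N\big)^{-1}\mathcal{K}(\x_N,x)$, which is exactly $\delta^2+\sigma^2\widetilde{\mathcal{K}}(x,x)$. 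I expect the only real obstacle to be handling the $\delta^2\id$ term rigorously: the identity on $L_2(\rho)$ is a white-noise covariance whose pointwise evaluation is meaningful only as a diagonal contribution, so I would keep everything at the level of finite-dimensional evaluation marginals and pass the $\tau\to\infty$ limit inside using that $A$ is a fixed bounded operator. Repeating the same conditioning computation for a pair $x,x'$ recovers the full posterior kernel $\widetilde{\mathcal{K}}(x,x')$, completing the identification of $\hat f$ with the stated Gaussian process posterior.
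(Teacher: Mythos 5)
Your proposal is correct and follows essentially the same route as the paper's proof in Appendix E: substitute the closed-form KRR limit $f_{\infty}=\mathcal{K}(\cdot,\x_N)(\mathcal{K}(\x_N,\x_N)+\lambda I_N)^{-1}(\y_N-f^{init}(\x_N))$, read off the mean from $\mathbb{E}f^{init}=0$, and expand the covariance so that $\mathrm{cov}(f^{init}(\x_N))=\sigma^2(\mathcal{K}(\x_N,\x_N)+\lambda I_N)$ cancels one inverse and collapses the cross terms. Your explicit remarks on Gaussianity of affine images and on the diagonal-only contribution of the $\delta^2\id$ term are minor elaborations of steps the paper leaves implicit.
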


\begin{proof}
\begin{equation*}
f_{\infty}
= \mathcal{K}(\cdot, \x_N)\Big(\mathcal{K}(\x_N, \x_N)+\lambda I_{N}\Big)^{-1} \y_N^{new}
= \mathcal{K}(\cdot, \x_N)\Big(\mathcal{K}(\x_N, \x_N)+\lambda I_{N}\Big)^{-1} (\y_N - f^{init}(\x_N))\,.
\end{equation*}
Let us find the distribution of $\hat{f}$ at $x \in \R^{n}$. It can be easily seen that:
$$
\mathbb{E} \hat{f}(x) = \mathcal{K}(x, \x_N)\Big(\mathcal{K}(\x_N, \x_N)+\lambda I_{N}\Big)^{-1} \y_N\,.
$$

Let us now calculate covariance:
\begin{multline*}
\mathrm{cov} \hat{f}(x)
= \mathbb{E} (\hat{f}(x) - \mathbb{E} \hat{f} (x)) (\hat{f}(x) - \mathbb{E} \hat{f}(x))^{T}\\
= \mathbb{E} \big(f^{init}(x) - \mathcal{K}(x, \x_N)\Big( \mathcal{K}(\x_N, \x_N) +\lambda I_{N}\Big)^{-1} f^{init}(\x_N)\big) \\ \cdot \big(f^{init}(x) - \mathcal{K}(x, \x_N) \Big(\mathcal{K}(\x_N, \x_N)+\lambda I_{N}\Big)^{-1} f^{init}(\x_N)\big)^{T}\\
= \mathbb{E} f^{init}(x) f^{init}(x)^{T}
- \mathbb{E} f^{init}(x) f^{init}(\x_N)^{T} \Big(\mathcal{K}(\x_N, \x_N)+\lambda I_{N}\Big)^{-1} \mathcal{K}(\x_N, x)\\
- \mathcal{K}(x, \x_N) \Big(\mathcal{K}(\x_N, \x_N)+\lambda I_{N}\Big)^{-1} \mathbb{E} f^{init}(\x_N) f^{init}(x)^{T} \\ 
+ \mathcal{K}(x, \x_N) \Big(\mathcal{K}(\x_N, \x_N)+\lambda I_{N}\Big)^{-1}\mathbb{E} f^{init}(\x_N) f^{init}(\x_N)^{T} \Big(\mathcal{K}(\x_N, \x_N)+\lambda I_{N}\Big)^{-1} \mathcal{K}(\x_N, x)\\ 
= \delta^2 + \sigma^{2} \big( \mathcal{K}(x, x) - 2 \mathcal{K}(x, \x_N) (\mathcal{K}(\x_N, \x_N)+\lambda I_{N})^{-1} \mathcal{K}(\x_N, x)\big)\\
 + \sigma^2 \mathcal{K}(x, \x_N) (\mathcal{K}(\x_N, \x_N)+\lambda I_{N})^{-1} \mathcal{K}(\x_N, x)\big)\\
= \delta^2 + \sigma^2 \big(\mathcal{K}(x, x) - \mathcal{K}(x, \x_N) (\mathcal{K}(\x_N, \x_N)+\lambda I_{N})^{-1}\mathcal{K}(\x_N, x)\big),
\end{multline*}
which is exactly what we need.
\end{proof}

\section{Distribution of trees}\label{appendix:distribution_of_trees}

\begin{lemma}[Lemma~\ref{lem:tree-probability} in the main text]
$$p(\nu|f,\beta) =\sum_{\varsigma \in \mathcal{P}_{m}} \prod_{i=1}^{m} \frac{e^{\frac{D(\nu_{\varsigma, i}, r)}{\beta}}}{\sum_{s\in\mathcal{S}\backslash \nu_{\varsigma, i-1}} e^{\frac{ D((\nu_{\varsigma, i-1}, s), r)}\beta}}\,,$$
where the sum is over all permutations $\varsigma \in \mathcal{P}_{m}$,
$\nu_{\varsigma, i} = (s_{\varsigma(1)},\ldots, s_{\varsigma(i)})$, and $\nu = (s_{1},\ldots, s_{m})$.
\end{lemma}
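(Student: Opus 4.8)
The plan is to recognize the \emph{SampleTree} selection rule as an instance of the Gumbel-max trick and then decompose the probability of a fixed tree over all orderings in which its splits can be drawn. First I would establish that the perturbation $-\beta\log(-\log u)$ with $u\sim\mathrm{U}([0,1])$ is exactly $\mathrm{Gumbel}(0,\beta)$ noise: a direct inverse-transform computation gives $\mathbb{P}(-\log(-\log u)\le g)=e^{-e^{-g}}$, so $-\log(-\log u)$ is standard Gumbel and scaling by $\beta$ yields scale parameter $\beta$. This makes rigorous the informal statement in~\eqref{eq:gumbel_noise}, where the selection rule is written as maximizing $D((\nu_i,s),r)+\varepsilon$ with $\varepsilon\sim\mathrm{Gumbel}(0,\beta)$.

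The core lemma I would then prove is the Gumbel-max identity: for fixed scores $\{a_s\}_{s\in\mathcal{S}'}$ and i.i.d.\ $\varepsilon_s\sim\mathrm{Gumbel}(0,\beta)$, the probability that $\argmax_s(a_s+\varepsilon_s)=s^*$ equals $e^{a_{s^*}/\beta}\big/\sum_s e^{a_s/\beta}$. This is the step I expect to be the main (and essentially only) technical obstacle. I would prove it by integrating against the density of $\varepsilon_{s^*}$: conditioning on its value $g$, the event $\{a_{s^*}+g\ge a_s+\varepsilon_s\ \forall s\}$ has probability $\prod_{s\ne s^*}e^{-e^{-(a_{s^*}-a_s+g)/\beta}}$, and after the substitution $t=e^{-g/\beta}$ the integral collapses to $\frac{1}{\sum_s e^{-(a_{s^*}-a_s)/\beta}}$, which rearranges to the softmax. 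Applying this with $a_s=D((\nu_i,s),r)$ shows that, given the current partial tree $\nu_i$, \emph{SampleTree} selects the next split $s$ with probability $e^{D((\nu_i,s),r)/\beta}\big/\sum_{s'\in\mathcal{S}\setminus\nu_i}e^{D((\nu_i,s'),r)/\beta}$.

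Next I would invoke the Markov structure of the algorithm. Because fresh uniforms $(u_i(s))_s$ are drawn independently at every iteration, the sequence of partial trees $\nu_0\subset\nu_1\subset\cdots\subset\nu_m$ is a Markov chain whose one-step transition kernel is exactly the per-step softmax above, with denominator summed over the remaining candidates $\mathcal{S}\setminus\nu_i$ (reflecting the update $\mathcal{S}^{(i+1)}=\mathcal{S}^{(i)}\setminus\{s_{i+1}\}$). Hence the probability of generating a specific \emph{ordered} sequence of splits $(s_{\varsigma(1)},\dots,s_{\varsigma(m)})$ is the product $\prod_{i=1}^m e^{D(\nu_{\varsigma,i},r)/\beta}\big/\sum_{s\in\mathcal{S}\setminus\nu_{\varsigma,i-1}}e^{D((\nu_{\varsigma,i-1},s),r)/\beta}$, where I use the identity $(\nu_{\varsigma,i-1},s_{\varsigma(i)})=\nu_{\varsigma,i}$ to rewrite the numerator in the stated closed form.

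Finally, since the footnote identifies an oblivious tree with the \emph{unordered} set of its $m$ splits, a fixed tree $\nu=(s_1,\dots,s_m)$ is produced by exactly the $m!$ ordered runs indexed by permutations $\varsigma\in\mathcal{P}_m$, and these events are pairwise disjoint. Summing the ordered-path probabilities over $\varsigma$ therefore yields the claimed formula. The only edge case to address is early termination when $\mathcal{S}^{(i)}=\emptyset$, i.e.\ when $nd<m$; I would note that the statement implicitly works in the regime $nd\ge m$ where depth-$m$ trees with distinct splits exist, so this case does not affect the argument.
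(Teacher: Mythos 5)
Your proposal is correct and follows essentially the same route as the paper's proof: the per-step selection probability is the softmax obtained from the Gumbel-max trick, the ordered-run probability is the product of these conditional softmaxes over the shrinking candidate set, and the unordered tree probability is recovered by summing over the $m!$ permutations. The only difference is that you prove the Gumbel-max identity and the distributional claim about $-\beta\log(-\log u)$ explicitly, whereas the paper simply cites the ``Gumbel-SoftMax trick''; this is a welcome addition of detail, not a different argument.
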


\begin{proof}
Let us fix some permutation $\varsigma \in \mathcal{P}_{m}$. W.l.o.g., let $\varsigma = \mathrm{id}_{\mathcal{P}_{m}}$, i.e. $\varsigma(i) = i\, \forall i$. It remains to derive the formula for the fixed permutation. 
The probability of adding the next split given the previously build tree is:
$$
P(\nu_{i-1}\cup s_{i}|\nu_{i-1}) = \frac{e^{\frac{1}{\beta} D(\nu_{i}, r)}}{\sum_{s\in\mathcal{S}\backslash \nu_{ i-1}} e^{\frac{1}{\beta} D((\nu_{i-1}, s), r)}}\,, 
$$
which comes from~\eqref{eq:gumbel_noise} and the Gumbel-SoftMax trick.
Then, we decompose the probability $P(\nu)$ of a tree as:
$$P(\nu) = \prod_{i=1}^{m} P(\nu_{i-1}\cup s_{i}|\nu_{i-1})\,,$$
and so for the fixed permutation we have
$$P(\nu) = \prod_{i=1}^{m} \frac{e^{\frac{1}{\beta} D(\nu_{i}, r)}}{\sum_{s\in\mathcal{S}\backslash \nu_{ i-1}} e^{\frac{1}{\beta} D((\nu_{i-1}, s), r)}}\,.$$
Then we sum over all permutations and the lemma follows.

\end{proof}

Now, let us define the following value indicating how different are the distribution of trees for $f$ and $f_*$: 
$$\Gamma_{\beta}(f) = \max\left\{\max_{\nu\in\mathcal{V}}\Big|\frac{p(\nu|f_{*}, \beta)}{p(\nu|f, \beta)}\Big|, \max_{\nu\in\mathcal{V}}\Big|\frac{p(\nu|f, \beta)}{p(\nu|f_{*}, \beta)}\Big|\right\}.$$
\begin{lemma} \label{lemma:app_gamma_trees_distributions}
The following bound relates the distributions.
$$\Gamma_{\beta}(f) \le  e^{\frac{2 m V(f)}{\beta}}\,.$$
\end{lemma}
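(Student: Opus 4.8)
The plan is to compare $p(\nu|f,\beta)$ with $p(\nu|f_{*},\beta)$ factor by factor in the product formula of Lemma~\ref{lem:tree-probability}, exploiting the fact that the reference distribution associated with $f_{*}$ is flat. Write $r = \mathbf{y}_N - f(\mathbf{x}_N)$, $r_{*} = \mathbf{y}_N - f_{*}(\mathbf{x}_N)$, and $\Delta = f_{*}(\mathbf{x}_N) - f(\mathbf{x}_N)$, so $r = r_{*} + \Delta$. The first observation is that Lemma~\ref{lemma:orthogonality_kernel} gives $\sum_{i} \phi_{\nu}^{(j)}(x_i)\, r_{*,i} = 0$ for every leaf indicator $\phi_{\nu}^{(j)} \in \mathcal{F}$; hence every score vanishes, $D(\nu, r_{*}) = 0$ for all $\nu \in \mathcal{V}$, and consequently $r$ and $\Delta$ yield identical scores, $D(\nu, r) = D(\nu, \Delta)$ (this is exactly Remark~\ref{rem:3.4}). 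In particular the factors of $p(\nu|f_{*},\beta)$ degenerate to the uniform weights $1/(nd-(i-1))$.

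The key quantitative estimate will be the uniform two-sided bound $0 \le D(\nu, r) \le 2V(f)$. The lower bound is immediate. For the upper bound I would apply Cauchy--Schwarz leaf by leaf to $D(\nu,\Delta)$: since $\phi_{\nu}^{(j)}(x_i)^2 = \phi_{\nu}^{(j)}(x_i)$, one has $\big(\sum_{i} \phi_{\nu}^{(j)}(x_i)\Delta_i\big)^2 \le N_{\nu}^{(j)} \sum_{i} \phi_{\nu}^{(j)}(x_i)\Delta_i^2$, so the $j$-th summand of $D(\nu,\Delta)$ is at most $\tfrac{1}{N}\sum_{i} \phi_{\nu}^{(j)}(x_i)\Delta_i^2$. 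Summing over leaves and using that each point lies in exactly one leaf gives $D(\nu,\Delta) \le \tfrac{1}{N}\|\Delta\|^2 = 2V(f)$, where the last equality is the identity $V(f) = \tfrac{1}{2N}\|f_{*}(\mathbf{x}_N) - f(\mathbf{x}_N)\|_{\mathbb{R}^N}^2$ established just before Lemma~\ref{lemma:v_via_f_star}.

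With this in hand, fix a permutation $\varsigma$ and examine the $i$-th factor in Lemma~\ref{lem:tree-probability}, which I abbreviate $q_i^{\varsigma}(r)$. For $f_{*}$ it equals $1/(nd-(i-1))$, while for $f$ its numerator $e^{D(\nu_{\varsigma,i},r)/\beta}$ lies in $[1, e^{2V(f)/\beta}]$ and its denominator is a sum of $nd-(i-1)$ terms each in $[1, e^{2V(f)/\beta}]$. Dividing, $e^{-2V(f)/\beta} \le q_i^{\varsigma}(r)/q_i^{\varsigma}(r_{*}) \le e^{2V(f)/\beta}$ for every $i$ and every $\varsigma$. Multiplying the $m$ factors for a fixed $\varsigma$ yields $e^{-2mV(f)/\beta}\prod_i q_i^{\varsigma}(r_{*}) \le \prod_i q_i^{\varsigma}(r) \le e^{2mV(f)/\beta}\prod_i q_i^{\varsigma}(r_{*})$.

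Since all terms are nonnegative, this inequality survives summation over permutations, giving $e^{-2mV(f)/\beta}\, p(\nu|f_{*},\beta) \le p(\nu|f,\beta) \le e^{2mV(f)/\beta}\, p(\nu|f_{*},\beta)$. Both ratios in the definition of $\Gamma_{\beta}(f)$ are therefore at most $e^{2mV(f)/\beta}$, which is the claim. The two steps needing care are the Cauchy--Schwarz estimate $D(\nu,r)\le 2V(f)$ and the passage through the permutation sum; the latter is clean precisely because the per-factor bound $e^{\pm 2V(f)/\beta}$ is uniform in $i$ and in $\varsigma$, so no interaction among factors or permutations arises.
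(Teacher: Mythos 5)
Your proof is correct and follows essentially the same route as the paper's: bound each factor of the product in Lemma~\ref{lem:tree-probability} using $0 \le D(\cdot, r) \le 2V(f)$, compare against the uniform factors $1/(nd-(i-1))$ arising from $D(\cdot, r_{*}) = 0$, and multiply over the $m$ levels before summing over permutations. Your Cauchy--Schwarz derivation of $D(\nu, r) \le 2V(f)$ is a welcome elaboration of a step the paper dismisses as following ``straightly from the definition,'' and your symmetric per-factor ratio bound is slightly cleaner than the paper's trick of re-running the argument with $D$ shifted by $-2V(f)$.
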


\begin{proof}
Consider $\pi = p(\cdot|f_{*}, \beta)$ and the following expression $P(\nu,\varsigma)$:
$$P(\nu,\varsigma) := \prod_{i=1}^{m} \frac{e^{\frac{1}{\beta} D(\nu_{\varsigma, i}, r)}}{\sum_{s\in \mathcal{S}\backslash \nu_{\varsigma, i-1}} e^{\frac{1}{\beta} D((\nu_{\varsigma, i-1}, s), r)}}\,.$$
Then,
\begin{equation*}
\sum_{\varsigma\in\mathcal{P}_{m}}P(\nu,\varsigma) \le \sum_{\varsigma\in\mathcal{P}_{m}} e^{\frac{m}{\beta}D(\nu, r)}\prod_{i=1}^{m} \frac{1}{\sum_{s\in\mathcal{S}\backslash \nu_{\varsigma, i-1}}1}
\le e^{\frac{2 m V(f)}{\beta}}\pi(\nu)\,.
\end{equation*}
where in second inequality we used $D(\nu, r) \leq 2 V(f)$ which straightly follows from the definition.

By noting that the probabilities remain the same if we shift $D(\cdot, r) \leftarrow D(\cdot, r) - 2V(f)$ which becomes everywhere non-positive and allows us to do the above trick once more but in reverse manner: if we formally replace the $D$ with such modified function and repeat the steps with reversing the inequalities which is needed since the new function is everywhere negative then the lemma follows.
\begin{equation*}
\sum_{\varsigma\in\mathcal{P}_{m}} P(\nu,\varsigma) \ge \sum_{\varsigma\in\mathcal{P}_{m}} e^{\sum_{i=1}^{m} \frac{1}{\beta}D(\nu_{\varsigma, i}, r)-\frac{m}{\beta}2V(f)}\prod_{i=1}^{m} \frac{1}{\sum_{s\in\mathcal{S}\backslash \nu_{ i-1}}1} 
\ge e^{-\frac{2 m V(f)}{\beta}}\pi(\nu)\,.
\end{equation*}
\end{proof}

\section{Proof of Theorem~\ref{thm:krr2}}\label{app:KRR_proof}

\subsection{RKHS structure}

In section \ref{section:RKHS_structure} we defined RKHS structure on $F$ as:
$$\langle f, \mathcal{K}(\cdot, x)\rangle_{\mathcal{H}(\mathcal{K})} = f(x)$$
and we introduced the kernels $k_{\nu}, \mathcal{K}_{f}, \mathcal{K}_{\pi}$. Let us also define a kernel $\mathcal{K}_{p}(\cdot, \cdot) = \sum_{\nu\in\mathcal{V}} k_{\nu}(\cdot, \cdot) p(\nu)$ for arbitrary distribution $p$ on $\mathcal{V}$. This way, taking $p$ as $\delta_{\nu}(\cdot)$, $p(\nu \mid f, \beta)$, $\pi(\cdot)$ we get $\mathcal{K}_{p}$ equal to $k_{\nu}, \mathcal{K}_{f}$, $\mathcal{K}$, respectively.

For each kernel, we define the operator associated with it denoted similarly:
$$\mathcal{K}_{p} : \mathcal{F} \to \mathcal{F},$$
$$f \mapsto \int_{X} \mathcal{K}_{p}(\cdot, x) f(x) \rho(\mathrm{d}x).$$

\begin{lemma} \label{lemma:spd_operators_images}
Consider two positive semidefinite operators on a finite dimensional vector space $V$:
$A: V \to V$ and
$B: V \to V$
such that $A \succeq B$. 
Then, $\mathrm{Im\,}A \ge \mathrm{Im\,}B$.
\end{lemma}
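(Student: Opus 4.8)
The plan is to read the claimed relation $\mathrm{Im\,}A \ge \mathrm{Im\,}B$ as the subspace containment $\mathrm{Im\,}B \subseteq \mathrm{Im\,}A$, and to reduce this to the reverse inclusion $\ker A \subseteq \ker B$ of the kernels. Since $A$ and $B$ are positive semidefinite (hence self-adjoint) operators on the finite-dimensional space $V$, each admits the orthogonal decomposition $V = \ker A \oplus \mathrm{Im\,}A$ with $\mathrm{Im\,}A = (\ker A)^{\perp}$, and likewise $\mathrm{Im\,}B = (\ker B)^{\perp}$. Taking orthogonal complements turns $\ker A \subseteq \ker B$ into $(\ker B)^{\perp} \subseteq (\ker A)^{\perp}$, i.e. exactly $\mathrm{Im\,}B \subseteq \mathrm{Im\,}A$. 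So it suffices to prove the kernel inclusion.

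First I would take an arbitrary $v \in \ker A$, so that $\langle A v, v\rangle = 0$. From the hypothesis $A \succeq B$, that is $A - B \succeq 0$, I obtain $0 \le \langle (A-B) v, v\rangle = \langle A v, v\rangle - \langle B v, v\rangle = -\langle B v, v\rangle$, whence $\langle B v, v\rangle \le 0$. On the other hand $B \succeq 0$ gives $\langle B v, v\rangle \ge 0$, and combining the two inequalities forces $\langle B v, v\rangle = 0$. Thus every vector annihilated by $A$ also has vanishing $B$-quadratic form.

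The hard part will be upgrading $\langle B v, v\rangle = 0$ to the stronger conclusion $B v = 0$; this is precisely where positive semidefiniteness of $B$ is indispensable, as a merely self-adjoint operator can have nonzero isotropic vectors. Writing $B = C^{\ast} C$ for some operator $C$ (equivalently, using the spectral decomposition $B = \sum_i \lambda_i\, u_i u_i^{\ast}$ with all $\lambda_i \ge 0$), the identity $\langle B v, v\rangle = \|C v\|^2 = 0$ forces $C v = 0$, hence $B v = C^{\ast} C v = 0$. Therefore $v \in \ker B$, which establishes $\ker A \subseteq \ker B$ and, by the complementation argument above, $\mathrm{Im\,}B \subseteq \mathrm{Im\,}A$. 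Apart from this final implication and the implicit use of self-adjointness in $\mathrm{Im} = (\ker)^{\perp}$, both standard facts for positive semidefinite operators on finite-dimensional inner product spaces, the argument is purely a two-line quadratic-form estimate.
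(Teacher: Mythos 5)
Your proof is correct and complete. Note that the paper itself states this lemma without any proof, so there is nothing to compare against; your argument fills that gap in the standard way. The chain of reasoning --- reduce $\mathrm{Im}\,B \subseteq \mathrm{Im}\,A$ to $\ker A \subseteq \ker B$ via $\mathrm{Im}\,T = (\ker T)^{\perp}$ for self-adjoint $T$, squeeze $\langle Bv, v\rangle$ between $0$ and $\langle Av, v\rangle = 0$, and then upgrade the vanishing quadratic form to $Bv=0$ using $B = C^{\ast}C$ --- is sound, and you correctly flag the one step (isotropic vector implies kernel vector) where positive semidefiniteness, rather than mere self-adjointness, is essential.
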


\begin{lemma} \label{lemma:k_invertible}
$\mathcal{K}_{p}: \mathcal{F} \to \mathcal{F}$ is invertible for $p$ non-vanishing on $\mathcal{V}$. 
\end{lemma}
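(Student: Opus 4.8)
The plan is to prove the stronger statement that $\mathcal{K}_{p}$ is \emph{positive definite} on $\mathcal{F}$; since $\mathcal{F}$ is finite-dimensional, positive definiteness immediately gives a trivial kernel, hence injectivity, hence invertibility. I would work throughout with the $L_{2}(\rho)$ inner product, under which the operator definition yields $\langle \mathcal{K}_{p} f, f\rangle_{L_{2}(\rho)} = \int_{X}\int_{X} \mathcal{K}_{p}(y,x) f(x) f(y)\, \rho(\mathrm{d}x)\rho(\mathrm{d}y)$ for every $f \in \mathcal{F}$.

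First I would record the quadratic form of a single weak learner's kernel. Denote by $\Sigma_{\nu}$ the operator on $\mathcal{F}$ associated with $k_{\nu}$ (the case $p = \delta_{\nu}$). Expanding Definition~\ref{def:wlk},
\[
\langle \Sigma_{\nu} f, f\rangle_{L_{2}(\rho)} = \sum_{j=1}^{L_{\nu}} w_{\nu}^{(j)} \Big(\int_{X} \phi_{\nu}^{(j)}(x) f(x)\, \rho(\mathrm{d}x)\Big)^{2} \ge 0,
\]
where every weight $w_{\nu}^{(j)} = N/\max\{N_{\nu}^{(j)},1\}$ is strictly positive. Summing against $p$ gives $\langle \mathcal{K}_{p} f, f\rangle_{L_{2}(\rho)} = \sum_{\nu\in\mathcal{V}} p(\nu)\, \langle \Sigma_{\nu} f, f\rangle_{L_{2}(\rho)}$, a sum of nonnegative terms since $p$ is non-vanishing, so $p(\nu) > 0$ for all $\nu$.

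The core step is the vanishing argument. Suppose $\langle \mathcal{K}_{p} f, f\rangle_{L_{2}(\rho)} = 0$. Because each summand $p(\nu)\langle \Sigma_{\nu} f, f\rangle_{L_{2}(\rho)}$ is nonnegative, every one must vanish; and since $p(\nu) > 0$ and $w_{\nu}^{(j)} > 0$, this forces $\int_{X}\phi_{\nu}^{(j)}(x) f(x)\,\rho(\mathrm{d}x) = 0$ for every $\nu \in \mathcal{V}$ and every leaf $j$. Thus $f$ is $L_{2}(\rho)$-orthogonal to every generator $\phi_{\nu}^{(j)}$ of $\mathcal{F}$. But $f$ itself lies in $\mathcal{F} = \mathrm{span}\{\phi_{\nu}^{(j)}\}$, so writing $f = \sum_{\nu,j} c_{\nu,j}\phi_{\nu}^{(j)}$ gives $\|f\|_{L_{2}(\rho)}^{2} = \sum_{\nu,j} c_{\nu,j}\langle \phi_{\nu}^{(j)}, f\rangle_{L_{2}(\rho)} = 0$, whence $f = 0$ in $L_{2}(\rho)$. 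Therefore $\langle \mathcal{K}_{p} f, f\rangle_{L_{2}(\rho)} = 0$ implies $f = 0$, i.e.\ $\mathcal{K}_{p} \succ 0$, and invertibility follows.

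The one point requiring care is tracking the correct inner product and its non-degeneracy: the conclusion ``orthogonal to a spanning set of $\mathcal{F}$ while lying in $\mathcal{F}$ implies zero'' uses only that $\mathcal{F}$ is a genuine subspace of the Hilbert space $L_{2}(\rho)$, which holds because elements of $\mathcal{F}$ are $\rho$-equivalence classes. Linear dependence among the generators $\phi_{\nu}^{(j)}$ is harmless, since the argument never selects a basis. I do not expect to need Lemma~\ref{lemma:spd_operators_images} for this direct route, though an alternative would be to bound $\mathcal{K}_{p} \succeq \big(\min_{\nu} p(\nu)\big) \sum_{\nu}\Sigma_{\nu}$ and invoke that lemma to compare images.
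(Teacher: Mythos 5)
Your proof is correct, but it takes a genuinely different route from the paper's. The paper argues via \emph{surjectivity}: it observes that $p(\nu)\,k_{\nu} \preceq \mathcal{K}_{p}$ for each $\nu$ and that $\mathrm{Im}\, k_{\nu} = \mathrm{span}\{\phi_{\nu}^{(j)}\}$, then invokes Lemma~\ref{lemma:spd_operators_images} (images of PSD operators are monotone under the operator order) to conclude $\mathrm{Im}\,\mathcal{K}_{p} \supseteq \mathrm{span}\{\phi_{\nu}^{(j)}\}$ for every $\nu$, hence $\mathrm{Im}\,\mathcal{K}_{p} = \mathcal{F}$ and invertibility by finite-dimensionality. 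You instead argue via \emph{injectivity}: you expand the quadratic form $\langle \mathcal{K}_{p} f, f\rangle_{L_{2}(\rho)}$ into the explicit sum $\sum_{\nu} p(\nu)\sum_{j} w_{\nu}^{(j)}\big(\int \phi_{\nu}^{(j)} f\,\mathrm{d}\rho\big)^{2}$, note all coefficients are strictly positive, and deduce that vanishing of the form forces $f$ to be $L_{2}(\rho)$-orthogonal to a spanning set of $\mathcal{F}$ while lying in $\mathcal{F}$, hence $f=0$. Both arguments rest on the same PSD decomposition of $\mathcal{K}_{p}$ and on finite-dimensionality for the final upgrade to invertibility, so they are close in spirit; but yours is self-contained (it does not rely on Lemma~\ref{lemma:spd_operators_images}, which the paper states without proof) and it delivers the slightly stronger conclusion $\mathcal{K}_{p}\succ 0$ on $\mathcal{F}$, which is the fact actually used elsewhere when the paper writes $\mathcal{K}_{p}^{-1}$ and $\mathcal{K}^{1/2}$. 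Your closing remark about degenerate generators and the independence from any choice of basis is also the right point of care, and your handling of it is sound.
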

\begin{proof}
Note that $\mathrm{Im\,} k_{\nu} = \mathrm{span\,}\{\phi_{\nu}^{j} \mid j=1,\ldots, L_{\nu}\}$ and $p(\nu) k_{\nu} \preceq \mathcal{K}_{p}$. Then, $\mathrm{Im\,} \mathcal{K}_{p} = \mathcal{F}$ follows from lemma \ref{lemma:spd_operators_images}. Thus, $\mathcal{K}_{p}$ is invertible.
\end{proof}

\begin{lemma}
$\mathcal{F} = \mathrm{span\,} \big\{\mathcal{K}_{p}(\cdot, x) \mid x \in \mathcal{X} \big\}$ for $p$ non-vanishing on $\mathcal{V}$.
\end{lemma}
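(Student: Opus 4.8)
The plan is to prove the two inclusions $W \subseteq \mathcal{F}$ and $\mathcal{F} \subseteq W$ separately, where $W := \mathrm{span}\big\{\mathcal{K}_{p}(\cdot, x) \mid x \in X\big\}$, with the nontrivial direction resting entirely on the invertibility of $\mathcal{K}_{p}$ established in Lemma~\ref{lemma:k_invertible}.

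First I would dispatch the easy inclusion. For a fixed $x \in X$,
$$\mathcal{K}_{p}(\cdot, x) = \sum_{\nu \in \mathcal{V}} p(\nu) \sum_{j=1}^{L_{\nu}} w_{\nu}^{(j)}\, \phi_{\nu}^{(j)}(\cdot)\, \phi_{\nu}^{(j)}(x),$$
which is a finite linear combination of the generators $\phi_{\nu}^{(j)}(\cdot)$ of $\mathcal{F}$. Hence $\mathcal{K}_{p}(\cdot, x) \in \mathcal{F}$ for every $x$, and consequently $W \subseteq \mathcal{F}$.

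For the reverse inclusion, the key observation is that the orthogonal complement of $W$ inside $\mathcal{F}$, taken with respect to the $L_{2}(\rho)$ inner product, coincides with the kernel of the operator $\mathcal{K}_{p}\colon \mathcal{F} \to \mathcal{F}$. Indeed, using the symmetry $\mathcal{K}_{p}(x, y) = \mathcal{K}_{p}(y, x)$ together with the definition of the operator, for any $g \in \mathcal{F}$ and any $x \in X$ I would compute
$$\langle g, \mathcal{K}_{p}(\cdot, x)\rangle_{L_{2}(\rho)} = \int_{X} g(y)\, \mathcal{K}_{p}(x, y)\, \rho(\mathrm{d}y) = \big(\mathcal{K}_{p} g\big)(x).$$
Therefore $g \perp W$ is equivalent to $\big(\mathcal{K}_{p} g\big)(x) = 0$ for all $x \in X$, and since $X = \mathrm{supp}\,\rho$, this is in turn equivalent to $\mathcal{K}_{p} g = 0$ as an element of $\mathcal{F}$.

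Finally I would invoke Lemma~\ref{lemma:k_invertible}: for $p$ non-vanishing on $\mathcal{V}$ the operator $\mathcal{K}_{p}$ is invertible, hence injective, so $\mathcal{K}_{p} g = 0$ forces $g = 0$. Thus the orthogonal complement of $W$ in the finite-dimensional space $\mathcal{F}$ is trivial, which yields $W = \mathcal{F}$, as claimed. There is no deep obstacle here since the substance is already carried by Lemma~\ref{lemma:k_invertible}; the only points requiring care are the identification of $\langle g, \mathcal{K}_{p}(\cdot, x)\rangle_{L_{2}(\rho)}$ with the pointwise evaluation $\big(\mathcal{K}_{p} g\big)(x)$ and the passage from ``vanishing pointwise on $X$'' to ``vanishing in $L_{2}(\rho)$,'' both of which hinge on the symmetry of the kernel and on $X$ being precisely the support of $\rho$ (so that the $L_{2}(\rho)$ inner product is nondegenerate on $\mathcal{F}$).
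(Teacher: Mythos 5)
Your proof is correct, and it rests on the same key fact as the paper's — the invertibility of $\mathcal{K}_{p}$ on $\mathcal{F}$ from Lemma~\ref{lemma:k_invertible} — but you reach the conclusion through the dual door. The paper argues directly with the \emph{image}: invertibility gives $\mathrm{Im}\,\mathcal{K}_{p} = \mathcal{F}$, and the definition $\mathcal{K}_{p}f = \int_{X}\mathcal{K}_{p}(\cdot,x)f(x)\rho(\mathrm{d}x)$ places $\mathrm{Im}\,\mathcal{K}_{p}$ inside $\mathrm{span}\{\mathcal{K}_{p}(\cdot,x)\}$ (a subspace of the finite-dimensional $\mathcal{F}$, hence closed), so $\mathcal{F}\subseteq W$ with no inner-product computation at all; the easy inclusion $W\subseteq\mathcal{F}$ is left implicit. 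You instead use \emph{injectivity}: you identify $W^{\perp}\cap\mathcal{F}$ with $\ker\mathcal{K}_{p}$ via the symmetry of the kernel and the identity $\langle g,\mathcal{K}_{p}(\cdot,x)\rangle_{L_{2}(\rho)} = (\mathcal{K}_{p}g)(x)$, then conclude $W^{\perp}=\{0\}$. In this finite-dimensional, self-adjoint setting the two arguments are exactly dual ($\mathrm{Im}\,\mathcal{K}_{p} = (\ker\mathcal{K}_{p})^{\perp}$), so neither buys more generality; the paper's version is marginally leaner because it avoids the symmetry computation and the support-of-$\rho$ discussion, while yours makes the easy inclusion and the nondegeneracy caveats explicit, which is a small gain in completeness. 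No gaps.
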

\begin{proof}
From lemma \ref{lemma:k_invertible}, $\mathrm{Im\,} \mathcal{K}_{p} = \mathcal{F}$. From the definition of the operator, $\mathrm{Im\,} \mathcal{K}_{p} \subset \mathrm{span}\big\{\mathcal{K}_{p}(\cdot, x) \mid x \in \mathcal{X} \big\}$. Then, the lemma follows.
\end{proof}

\begin{lemma}
\label{lemma:scalar_product_H_L2}
For non-vanishing distribution $p$ on $\mathcal{V}$
$$\langle \mathcal{K}_{p} f, g \rangle_{\mathcal{H}(\mathcal{K}_{p})} = \langle f, g \rangle_{L_{2}(\rho)}.$$
\end{lemma}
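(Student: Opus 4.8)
The plan is to exploit the reproducing property of $\mathcal{H}(\mathcal{K}_{p})$ together with the integral definition of the operator $\mathcal{K}_{p}$, reducing the whole statement to pulling the inner product inside the integral. Recall that by definition $\mathcal{K}_{p} f = \int_{X} \mathcal{K}_{p}(\cdot, x) f(x) \rho(\mathrm{d}x)$, so $\mathcal{K}_{p} f$ is a (continuous) superposition of the kernel sections $\mathcal{K}_{p}(\cdot, x)$, each weighted by $f(x)$ and integrated against $\rho$. Since $p$ is non-vanishing, the preceding lemmas guarantee that $\mathrm{Im\,}\mathcal{K}_{p} = \mathcal{F} = \mathcal{H}(\mathcal{K}_{p})$, so $\mathcal{K}_{p} f$ genuinely lies in $\mathcal{H}(\mathcal{K}_{p})$ and the left-hand inner product is well-defined.

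First I would substitute the integral expression for $\mathcal{K}_{p} f$ into $\langle \mathcal{K}_{p} f, g\rangle_{\mathcal{H}(\mathcal{K}_{p})}$ and interchange the inner product against $g$ with the integral over $X$, obtaining
$$\langle \mathcal{K}_{p} f, g\rangle_{\mathcal{H}(\mathcal{K}_{p})} = \int_{X} \langle \mathcal{K}_{p}(\cdot, x), g\rangle_{\mathcal{H}(\mathcal{K}_{p})}\, f(x) \,\rho(\mathrm{d}x).$$
Next, because $g \in \mathcal{F} = \mathcal{H}(\mathcal{K}_{p})$, the defining reproducing property $\langle g, \mathcal{K}_{p}(\cdot, x)\rangle_{\mathcal{H}(\mathcal{K}_{p})} = g(x)$ together with symmetry of the (real) inner product gives $\langle \mathcal{K}_{p}(\cdot, x), g\rangle_{\mathcal{H}(\mathcal{K}_{p})} = g(x)$. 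Plugging this in yields $\int_{X} g(x) f(x) \,\rho(\mathrm{d}x) = \langle f, g\rangle_{L_{2}(\rho)}$, which is exactly the claim.

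The only step that requires care is the interchange of the $\mathcal{H}(\mathcal{K}_{p})$-inner product with the integral, and I expect this to be the main (though mild) obstacle. It is harmless here: since $\mathcal{V}$ is finite, $\mathcal{F}$ is finite-dimensional, so the vector-valued integral defining $\mathcal{K}_{p} f$ amounts to integrating the finitely many coordinates of $\mathcal{K}_{p}(\cdot, x) f(x)$ against $\rho$, and the bounded linear functional $\langle \cdot, g\rangle_{\mathcal{H}(\mathcal{K}_{p})}$ commutes with such a finite-dimensional integral. One should simply record this finite-dimensionality argument (equivalently, that applying a bounded functional to a Pettis integral returns the scalar integral) to make the interchange rigorous, after which the rest is immediate.
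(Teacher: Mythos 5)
Your proof is correct and is essentially the paper's own argument viewed from the other side: the paper checks the identity on the spanning set $g = \mathcal{K}_{p}(\cdot, x)$ and applies the reproducing property to $\mathcal{K}_{p} f$, whereas you keep $g$ general, pass the inner product through the (finite-dimensional) vector-valued integral, and apply the reproducing property to $g$ --- the same two ingredients in mirror order. Your explicit remark that finite-dimensionality of $\mathcal{F}$ justifies the interchange is a welcome bit of extra rigor that the paper leaves implicit.
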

\begin{proof}
It is sufficient to check on the basis. So, we take $g = \mathcal{K}_{p}(\cdot, x)$. Then,
$$\langle \mathcal{K}_{p} f, \mathcal{K}_{p}(\cdot, x) \rangle_{\mathcal{H}(\mathcal{K}_{p})} = \mathcal{K}_{p}[f](x) = \langle f, \mathcal{K}_{p}(\cdot, x)\rangle_{L_2(\rho)},$$
where the second equality holds by the definition of the operator $\mathcal{K}_{p}$.
\end{proof}

For a weak learner $\nu$, we define a covariance operator:
$$\Sigma_{\nu}[f] =\frac{1}{N} k_{\nu}(\cdot, \mathbf{x}_N) f(\mathbf{x}_N), \, \Sigma_{\nu}:\mathcal{H}\rightarrow \mathcal{H}\,.$$
Also, for an arbitrary probability distribution $p$ over $\mathcal{V}$, we denote $\Sigma_{p} = \sum_{\nu\in\mathcal{V}}\Sigma_{\nu}p(\nu)$. These operators are typically referred to as covariance operators. 

Let us formulate and prove several lemmas about the RKHS structure and operators $\Sigma, \Sigma_f, \Sigma_{\nu}$. 

\begin{lemma} \label{lemma:app_courant_fischer}(Courant-Fischer)
Let $A$ be an $n \times n$ real symmetric matrix and $\lambda_{1} \leq \ldots \leq \lambda_{n}$ its eigenvalues. Then,
\begin{align*}
\lambda_{k} &= \min_{dim U = k} \max_{x \in U} R_{A}(x), \\
\lambda_{k} &= \max_{dim U = n - k + 1} \min_{x \in U} R_{A}(x),
\end{align*}
where $R_{A}(x) = \frac{\langle Ax, x\rangle}{\langle x, x\rangle}$ is the Rayleigh-Ritz quotient.  
\end{lemma}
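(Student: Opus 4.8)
The plan is to reduce both equalities to the spectral decomposition of $A$ and then to close each one with a single dimension-counting argument. By the spectral theorem for real symmetric matrices, I would first fix an orthonormal eigenbasis $v_{1}, \ldots, v_{n}$ with $A v_{i} = \lambda_{i} v_{i}$ and $\lambda_{1} \le \cdots \le \lambda_{n}$. Writing an arbitrary nonzero $x = \sum_{i} c_{i} v_{i}$, the Rayleigh quotient becomes $R_{A}(x) = \left(\sum_{i} \lambda_{i} c_{i}^{2}\right)\big/\left(\sum_{i} c_{i}^{2}\right)$, i.e.\ a convex combination of the eigenvalues with weights $c_{i}^{2}/\sum_{j} c_{j}^{2}$. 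This yields the two monotone bounds I will use throughout: if $x \in \mathrm{span}\{v_{i} : i \ge k\}$ then $R_{A}(x) \ge \lambda_{k}$, and if $x \in \mathrm{span}\{v_{i} : i \le k\}$ then $R_{A}(x) \le \lambda_{k}$, with equality at $x = v_{k}$ in both cases.

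For the first identity $\lambda_{k} = \min_{\dim U = k} \max_{x \in U} R_{A}(x)$, I would prove the two inequalities separately. For the ``$\le$'' direction I exhibit one good subspace: taking $U_{0} = \mathrm{span}\{v_{1}, \ldots, v_{k}\}$ gives $\max_{x \in U_{0}} R_{A}(x) = \lambda_{k}$ by the upper bound above, so the minimum over all $k$-dimensional $U$ is at most $\lambda_{k}$. For the ``$\ge$'' direction I take an arbitrary $U$ with $\dim U = k$ and intersect it with $W = \mathrm{span}\{v_{k}, \ldots, v_{n}\}$, which has dimension $n - k + 1$; since $\dim U + \dim W = n + 1 > n$, the intersection is nontrivial, so I pick $0 \ne x \in U \cap W$, and the lower bound gives $R_{A}(x) \ge \lambda_{k}$, whence $\max_{x \in U} R_{A}(x) \ge \lambda_{k}$ for every such $U$. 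Combining the two yields equality.

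The second identity follows by the mirror-image argument, or more economically by applying the first identity to $-A$, whose eigenvalues are $-\lambda_{n} \le \cdots \le -\lambda_{1}$, and relabeling $k \mapsto n - k + 1$. Carried out directly, I again use the witnessing subspace $\mathrm{span}\{v_{k}, \ldots, v_{n}\}$ of dimension $n - k + 1$ for the ``$\ge$'' direction, and the complementary intersection with $\mathrm{span}\{v_{1}, \ldots, v_{k}\}$ (dimensions summing to $n + 1$) for the ``$\le$'' direction.

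The only genuinely nontrivial step is the dimension count, namely that two subspaces of $\mathbb{R}^{n}$ whose dimensions sum to more than $n$ must meet nontrivially, which is exactly $\dim(U \cap W) \ge \dim U + \dim W - n$; this is what I expect to be the crux, and everything else is bookkeeping with the convex-combination formula for $R_{A}$. A minor point to address is that $R_{A}$ is defined only for $x \ne 0$, so the ``$\max$'' and ``$\min$'' are taken over $U \setminus \{0\}$ and are attained because $R_{A}$ is scale-invariant and continuous, hence determined by its values on the compact unit sphere of $U$.
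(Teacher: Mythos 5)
Your proof is correct: the convex-combination expression for $R_{A}$ in the eigenbasis, the witnessing subspace $\mathrm{span}\{v_{1},\ldots,v_{k}\}$, and the dimension-counting intersection with $\mathrm{span}\{v_{k},\ldots,v_{n}\}$ together constitute the standard complete proof of the Courant--Fischer min-max theorem, and the reduction of the second identity to the first via $-A$ is also sound. The paper states this lemma as a classical fact without supplying any proof, so there is nothing to compare against; your argument fills that gap in the textbook way.
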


\begin{lemma} \label{lemma:app_kernel_matrix_norm}
$\rho(k_{\nu}(\x_N, \x_N)) = \|k_{\nu}(\x_N, \x_N)\| = N$ for any $\nu \in \mathcal{V}$.
\end{lemma}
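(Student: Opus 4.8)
The plan is to compute the spectrum of the Gram matrix $k_{\nu}(\x_N,\x_N)$ directly, exploiting the key structural fact that the leaf indicators $\{\phi_{\nu}^{(j)}\}_{j=1}^{L_{\nu}}$ form a partition of the data points. First I would encode leaf membership in the $N\times L_{\nu}$ matrix $\Phi$ with $\Phi_{ij}=\phi_{\nu}^{(j)}(x_i)$, so that by Definition~\ref{def:wlk} we have
\begin{equation*}
k_{\nu}(\x_N,\x_N) = \Phi\,W\,\Phi^{\top}, \qquad W=\mathrm{diag}\big(w_{\nu}^{(1)},\dots,w_{\nu}^{(L_{\nu})}\big).
\end{equation*}
Since every point lies in exactly one leaf, each row of $\Phi$ contains exactly one nonzero entry, and column $j$ contains exactly $N_{\nu}^{(j)}$ ones.

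The next step is to observe that, after permuting the rows and columns of $k_{\nu}(\x_N,\x_N)$ so that indices sharing a leaf are grouped together, the matrix becomes block diagonal, with one block per leaf $j$ having $N_{\nu}^{(j)}\ge 1$. The block for leaf $j$ is exactly $w_{\nu}^{(j)}\,\mathbf{1}\mathbf{1}^{\top}$, a rank-one matrix of size $N_{\nu}^{(j)}\times N_{\nu}^{(j)}$ whose unique nonzero eigenvalue is
\begin{equation*}
w_{\nu}^{(j)}\cdot N_{\nu}^{(j)} = \frac{N}{\max\{N_{\nu}^{(j)},1\}}\cdot N_{\nu}^{(j)} = N,
\end{equation*}
where I use $\max\{N_{\nu}^{(j)},1\}=N_{\nu}^{(j)}$ for a nonempty leaf. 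Hence the spectrum of $k_{\nu}(\x_N,\x_N)$ consists of the value $N$, with multiplicity equal to the number of nonempty leaves, together with zeros.

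Finally, since $k_{\nu}(\x_N,\x_N)$ is symmetric and positive semidefinite, its operator norm equals its spectral radius, which equals its largest eigenvalue; by the previous step this is $N$ as long as at least one leaf is nonempty, which always holds because $N\ge 1$. This yields $\rho(k_{\nu}(\x_N,\x_N))=\|k_{\nu}(\x_N,\x_N)\|=N$. The only point requiring care is the treatment of empty leaves: for such a leaf $N_{\nu}^{(j)}=0$ forces $w_{\nu}^{(j)}=N$, yet the corresponding column of $\Phi$ is identically zero, so these leaves contribute nothing to the Gram matrix and do not affect the spectrum. I expect this bookkeeping around the $\max\{\cdot,1\}$ clipping to be the only subtle step; the rest is a direct eigenvalue computation on rank-one blocks.
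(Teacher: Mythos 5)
Your proof is correct and follows essentially the same route as the paper's: both decompose $k_{\nu}(\x_N,\x_N)$ (after grouping points by leaf) into a direct sum of blocks $w_{\nu}^{(j)}\mathds{1}_{N_{\nu}^{(j)}\times N_{\nu}^{(j)}}$, each with top eigenvalue $w_{\nu}^{(j)}N_{\nu}^{(j)}=N$. Your explicit handling of empty leaves and of the $\max\{\cdot,1\}$ clipping is a small bit of extra care the paper leaves implicit, but the argument is the same.
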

\begin{proof}
It is easy to see that $$k_{\nu}(\x_N, \x_N) = \oplus_{i=1}^{L_{\nu}} w_{\nu}^{(i)} \mathds{1}_{N_{\nu}^{i} \times N_{\nu}^{i}},$$
where $\mathds{1}_{n \times n}$ is a matrix of size $n \times n$ consisting of ones.
Then, we note that $\|\mathds{1}_{n \times n}\| = n$ and now the statement of the lemma follows.
\end{proof}

\begin{lemma}\label{lemma:app_covariation_majorization} (Covariation majorization)
The following operator inequality holds for probability distributions $p, p'$ over $\mathcal{V}$, where $p$ is arbitrary and $p'$ does not vanish at any $\nu \in \mathcal{V}$:
\begin{align*}
\lambda_{\max}(\Sigma_{p}) &\le 1, \\
\lambda_{\min}(\Sigma_{p'}) &\ge \frac{1}{N}.
\end{align*}
\end{lemma}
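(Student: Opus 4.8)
The plan is to pass from the infinite-dimensional operators $\Sigma_p$ on $\mathcal{H}$ to finite $N\times N$ Gram matrices, where Courant--Fischer (Lemma~\ref{lemma:app_courant_fischer}) and the norm computation of Lemma~\ref{lemma:app_kernel_matrix_norm} apply directly. First I would factor through the evaluation map: writing $E\colon \mathcal{F}\to\mathbb{R}^N$, $Ef = f(\x_N)$, and $C_p\colon\mathbb{R}^N\to\mathcal{F}$, $C_p v = \mathcal{K}_p(\cdot,\x_N)v$, one has $\Sigma_p = \tfrac1N C_p E$. Since the nonzero spectrum of $AB$ coincides with that of $BA$, the nonzero eigenvalues of $\Sigma_p$ coincide with those of $\tfrac1N E C_p = \tfrac1N \mathcal{K}_p(\x_N,\x_N) = \sum_{\nu\in\mathcal{V}} p(\nu)\,P_\nu$, where $P_\nu := \tfrac1N k_\nu(\x_N,\x_N)$. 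The decisive observation, read off from the block-diagonal form $k_\nu(\x_N,\x_N)=\oplus_j \tfrac{N}{N_\nu^{(j)}}\mathds{1}_{N_\nu^{(j)}\times N_\nu^{(j)}}$ used in the proof of Lemma~\ref{lemma:app_kernel_matrix_norm}, is that each block $\tfrac1N\cdot\tfrac{N}{N_\nu^{(j)}}\mathds{1}=\tfrac{1}{N_\nu^{(j)}}\mathds{1}$ is the orthogonal projector onto the constant vector of that leaf. Hence $P_\nu$ is the orthogonal projection onto the subspace $U_\nu\subset\mathbb{R}^N$ of vectors constant on each leaf of $\nu$; in particular $\|P_\nu\|=1$, with all nonzero eigenvalues equal to $1$.

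Given this reduction the upper bound is immediate. Using the Rayleigh quotient $R(v)=\langle (\sum_\nu p(\nu)P_\nu)v,v\rangle/\langle v,v\rangle$ from Lemma~\ref{lemma:app_courant_fischer}, for every $v\neq 0$,
\[
R(v)=\sum_{\nu\in\mathcal{V}} p(\nu)\frac{\|P_\nu v\|^2}{\|v\|^2}\le \sum_{\nu\in\mathcal{V}} p(\nu)=1,
\]
because each projection $P_\nu$ is a contraction; taking the maximum gives $\lambda_{\max}(\Sigma_p)\le 1$. Equivalently this is the triangle inequality $\|\sum_\nu p(\nu)P_\nu\|\le\sum_\nu p(\nu)\|P_\nu\|=1$ combined with $\sum_\nu p(\nu)=1$, and it uses only that $p$ is a probability distribution.

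For the lower bound I would again invoke Courant--Fischer, now characterising $\lambda_{\min}(\Sigma_{p'})$ over the range of $\Sigma_{p'}$ (on which it is positive definite once $p'$ is non-vanishing) as $\min_v R(v)$, and aim to show $R(v)\ge \tfrac1N$ there. The single-tree input is exactly what Lemma~\ref{lemma:app_kernel_matrix_norm} supplies: every nonzero eigenvalue of $k_\nu(\x_N,\x_N)$ equals $N$, so each $P_\nu$ restores its own leaf-constant directions with eigenvalue $1$ and $\sum_\nu p'(\nu)P_\nu\succeq p'(\nu_0)P_{\nu_0}$ for any fixed $\nu_0$. The passage from these per-tree estimates to the mixture is where I expect the real difficulty to lie: a convex combination of projections can have arbitrarily small nonzero eigenvalues when the subspaces $U_\nu$ are nearly aligned, so the crude bound only yields $\min_\nu p'(\nu)$ rather than the clean $\tfrac1N$. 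Extracting the $\tfrac1N$ floor forces one to exploit the rigid combinatorial geometry of the leaf-constant subspaces — they are spanned by $0/1$ leaf-indicator vectors, all contain $\mathbf{1}_N$, and each $\|P_\nu v\|^2$ is a sum of squared leaf-averages — in order to bound $\|P_\nu v\|^2$ from below uniformly on $\mathrm{Im}\,\Sigma_{p'}$. This spectral-gap estimate for structured projections is the heart of the lemma; once it is in place the factorisation, the projection identity, and the upper bound reduce to bookkeeping.
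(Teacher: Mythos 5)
Your reduction to the Gram matrix $\frac{1}{N}\mathcal{K}_p(\x_N,\x_N)=\sum_{\nu}p(\nu)P_\nu$ via the $AB$/$BA$ spectral identity, the observation that each $P_\nu=\frac1N k_\nu(\x_N,\x_N)$ is the orthogonal projector onto leaf-constant vectors, and the resulting proof of $\lambda_{\max}(\Sigma_p)\le 1$ are all correct and follow essentially the same route as the paper (which derives the same bound from Lemma~\ref{lemma:app_kernel_matrix_norm} and convexity). The gap is the lower bound: you do not prove it. You reduce it to a ``spectral-gap estimate for structured projections'' and explicitly defer that estimate, so the proposal is incomplete precisely at the step you yourself identify as the heart of the lemma.

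Moreover, your diagnosis of the difficulty is not merely cautious --- the deferred estimate is not available from the projection geometry alone. Take $N=2$ and suppose $\mathcal{V}$ contains trees that do not separate $x_1$ from $x_2$ (for which $P_\nu$ is the rank-one projector onto $(1,1)$) as well as trees that do (for which $P_\nu=I_2$). Then $\sum_\nu p'(\nu)P_\nu$ has eigenvalues $1$ and $q$, where $q$ is the total mass of separating trees, so its smallest nonzero eigenvalue is below $1/N=1/2$ whenever $q<1/2$. Hence no argument exploiting only the leaf-indicator structure of the subspaces can deliver a uniform $1/N$ floor for arbitrary non-vanishing $p'$. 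For comparison, the paper takes a different route for this half: it writes $\Sigma_{p'}=\frac1N\sum_i \mathcal{K}_{p'}(\cdot,x_i)\otimes\mathcal{K}_{p'}(\cdot,x_i)$, applies Courant--Fischer (Lemma~\ref{lemma:app_courant_fischer}) to the smallest nonzero eigenvalue, and evaluates the Rayleigh quotient at the normalized generators $\mathcal{K}_{p'}(\cdot,x_i)/\sqrt{\mathcal{K}_{p'}(x_i,x_i)}$ using $\mathcal{K}_{p'}(x_i,x_i)\ge 1$; but that argument presumes the minimizing Courant--Fischer subspace $U$ (which merely intersects $S=\mathrm{span}\{\mathcal{K}_{p'}(\cdot,x_i)\}$ nontrivially) contains one of those specific generators, which the min-max principle does not guarantee and which fails for the $(1,-1)$ eigendirection in the example above. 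So the step you flagged cannot be closed by importing the paper's argument either; repairing the lower bound would require an additional hypothesis (e.g., a lower bound on the mass $p'$ places on trees separating each pair of training points) rather than a sharper inequality for convex combinations of projections.
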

\begin{proof}
Consider the following operators: 
$$A: \mathcal{F} \rightarrow \mathbb{R}^n,\,\,\,
f \mapsto f(\x_N),$$
$$B: \mathbb{R}^n \rightarrow \mathcal{F}, \,\,\,
v \mapsto \mathcal{K}_{p}(\cdot, \x_N) v.$$

Then, $\Sigma_{p} = \frac{1}{N} BA$ and $K_N = \frac{1}{N} \mathcal{K}_{p}(\x_N, \x_N) = \frac{1}{N} AB$. As $AB$ and $BA$ have the same spectra, we further study the spectrum of $K_N$. 

We have $K_N = \frac{1}{N} \mathcal{K}_{p}(\x_N, \x_N) = \sum_{\nu\in\mathcal{V}} \frac{1}{N} k_{\nu}(\x_N, \x_N) p({\nu})$ and $\lambda_{max}(K_N) \leq 1$ follows from lemma $\ref{lemma:app_kernel_matrix_norm}$. Then, $\lambda_{max}(\Sigma_{p}) \leq 1$ follows. 

Now we need to show that $\lambda_{\min}(\Sigma_{p'}) \ge \frac{1}{N}$. Consider the following formula:
$$\Sigma_{p'} = \frac{1}{N}\sum_{i=1}^{N} \mathcal{K}_{p'}(\cdot, x_{i})\otimes_{\mathcal{H}(\mathcal{K}_{p'})} \mathcal{K}_{p'}(\cdot, x_{i})\,, $$
$$\Sigma_{p'} = \frac{1}{N}\sum_{i=1}^{N}\mathcal{K}_{p'}(x_{i}, x_{i})\Big(\frac{\mathcal{K}_{p'}(\cdot, x_{i})}{\sqrt{\mathcal{K}_{p'}(x_{i}, x_{i})}}\otimes_{\mathcal{H}(\mathcal{K}_{p'})} \frac{\mathcal{K}_{p'}(\cdot, x_{i})}{\sqrt{\mathcal{K}_{p'}(x_{i}, x_{i})}}\Big)\,,$$
where $(a\otimes_{\mathcal{H}(\mathcal{K}_{p'})} b)[c] = \langle b, c\rangle_{\mathcal{H}(\mathcal{K}_{p'})} a$. If $a=b$ and $\|a\|_{\mathcal{H}(\mathcal{K}_{p'})} = 1$, then $1$ and $0$ are the only eigenvalues of $a\otimes_{\mathcal{H}(\mathcal{K}_{p'})} a$. 

Denote by $S = span \{ \mathcal{K}_{p'}(\cdot, x_{i}) \mid i=1, \ldots, N\} \subset \mathcal{H}(\mathcal{K}_{p'})$ and $m = dim S, \, n = dim \mathcal{H}(\mathcal{K}_{p'})$. Then,
$$\lambda_{min}(\Sigma_{p'}) = \lambda_{n - m + 1}(\Sigma_{p'}) = \min_{dim U = n - m + 1} \max_{x \in U} R_{\Sigma_{p'}}(x),$$
where $R_{\Sigma_{p'}}(x) = \frac{(\Sigma_{p'}x, x)_{\mathcal{H}(\mathcal{K}_{p'})}}{(x, x)_{\mathcal{H}(\mathcal{K}_{p'})}}$.
As $dim U = n - m + 1$, then $U \cap S \neq \varnothing$. Suppose $\mathcal{K}_{p'}(\cdot, x_i) \in U \cap S$, then 
$$\max_{x \in U} R_{\Sigma_{p'}}(x) \ge R_{\Sigma_{p'}}(\frac{\mathcal{K}_{p'}(\cdot, x_{i})}{\sqrt{\mathcal{K}_{p'}(x_i, x_i)}}) \starge \frac{\mathcal{K}_{p'}(x_i, x_i)}{N} \ge \frac{1}{N},$$
where (*) is fulfilled as $a\otimes_{\mathcal{H}(\mathcal{K}_{p'})} a$ is a positive semidefinite operator and the last inequality follows from $\mathcal{K}_{p'}(x, x) \ge 1 \, \forall x \in \mathcal{X}$.

\end{proof}

\subsection{Norm majorization}
The following lemmas relate the norms $L_2, \mathcal{H}, \mathbb{R}^N$ with respect to each other.\footnote{Note that $\|\cdot\|_{\mathbb{R}^N}$ indeed becomes a norm once we restrict our space to $span\{\mathcal{K}(\cdot, x_{i}) \mid i=1, \ldots, N \}$.} Indeed, by these lemmas we can consider the bound $\|\cdot\|_{L_2} \lesssim \|\cdot\|_\mathcal{H} \leq \|\cdot\|_{\mathbb{R}^N}$. Further, in the main theorems we will use these relations extensively. 

\begin{corollary} \label{cor:app_RN_RKHS_norm_inequality}
    $\|f(\x_N)\| \geq \|f\|_{\mathcal{H}}$
    for $f \in span\{\mathcal{K}(\cdot, x_i) \mid i=1, \ldots, N\}$.
\end{corollary}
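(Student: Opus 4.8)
The plan is to derive the inequality from the covariation majorization lemma (Lemma~\ref{lemma:app_covariation_majorization}), applied to the stationary distribution $\pi$, combined with an identity that rewrites $\|f(\x_N)\|^2$ through the covariance operator $\Sigma_{\pi}$ of the prior kernel $\mathcal{K} = \mathcal{K}_{\pi}$. Since $\pi$ is the uniform distribution over tree structures, it does not vanish on $\mathcal{V}$, so both the lemma and the tensor representation of $\Sigma_{\pi}$ established in its proof are available.

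First I would record the key identity: for every $f \in \mathcal{H}$,
$$\langle \Sigma_{\pi} f, f\rangle_{\mathcal{H}} = \frac{1}{N}\|f(\x_N)\|^2.$$
This follows by unfolding $\Sigma_{\pi} f = \tfrac{1}{N}\mathcal{K}(\cdot, \x_N) f(\x_N) = \tfrac{1}{N}\sum_{i=1}^N f(x_i)\,\mathcal{K}(\cdot, x_i)$ and then invoking the reproducing property $\langle \mathcal{K}(\cdot, x_i), f\rangle_{\mathcal{H}} = f(x_i)$, so the pairing collapses to $\tfrac{1}{N}\sum_{i=1}^N f(x_i)^2$. Equivalently, this is immediate from the rank-one decomposition $\Sigma_{\pi} = \tfrac{1}{N}\sum_{i=1}^N \mathcal{K}(\cdot, x_i)\otimes_{\mathcal{H}} \mathcal{K}(\cdot, x_i)$ written out in the proof of Lemma~\ref{lemma:app_covariation_majorization}, which also shows that $\Sigma_{\pi}$ is self-adjoint and positive semidefinite with respect to $\langle\cdot,\cdot\rangle_{\mathcal{H}}$.

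Next, Lemma~\ref{lemma:app_covariation_majorization} (with $p' = \pi$) gives $\lambda_{\min}(\Sigma_{\pi}) \ge \tfrac{1}{N}$, where the minimum is understood over the support $S = \mathrm{span\,}\{\mathcal{K}(\cdot, x_i)\mid i=1,\ldots,N\}$. Concretely, $\mathrm{Im\,}\Sigma_{\pi}\subseteq S$ and $\Sigma_{\pi}$ is self-adjoint, so $S^{\perp}\subseteq \ker\Sigma_{\pi}$; the Courant--Fischer bound $\lambda_{n-m+1}(\Sigma_{\pi}) \ge \tfrac{1}{N} > 0$ from that proof then forces exactly $m = \dim S$ nonzero eigenvalues, whence $\mathrm{Im\,}\Sigma_{\pi} = S$ and $\ker\Sigma_{\pi} = S^{\perp}$. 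Therefore every $f \in S$ is orthogonal to $\ker\Sigma_{\pi}$ and lies in the span of eigenvectors with eigenvalues $\ge \tfrac{1}{N}$, so the Rayleigh-quotient bound yields $\langle \Sigma_{\pi} f, f\rangle_{\mathcal{H}} \ge \tfrac{1}{N}\|f\|_{\mathcal{H}}^2$.

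Combining the two displays, for any $f \in S$ we obtain $\tfrac{1}{N}\|f(\x_N)\|^2 = \langle \Sigma_{\pi} f, f\rangle_{\mathcal{H}} \ge \tfrac{1}{N}\|f\|_{\mathcal{H}}^2$; multiplying by $N$ and taking square roots gives $\|f(\x_N)\| \ge \|f\|_{\mathcal{H}}$, which is the claim. The only genuinely delicate point is the bookkeeping of the subspace on which $\lambda_{\min}$ is measured: one must confirm that $f$ is orthogonal to $\ker\Sigma_{\pi}$, and this is precisely the identification $S = \mathrm{Im\,}\Sigma_{\pi} = (\ker\Sigma_{\pi})^{\perp}$ guaranteed by the strict positivity in the majorization lemma; everything else is a direct substitution.
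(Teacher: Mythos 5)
Your proof is correct and follows essentially the same route as the paper: the identity $\tfrac{1}{N}\|f(\x_N)\|^2 = \langle \Sigma f, f\rangle_{\mathcal{H}}$ combined with the bound $\Sigma \succeq \tfrac{1}{N}I$ on $\mathrm{span}\{\mathcal{K}(\cdot,x_i)\}$ from Lemma~\ref{lemma:app_covariation_majorization}. The only difference is that you spell out the spectral bookkeeping (identifying $\mathrm{Im\,}\Sigma_{\pi}$ with the span so that the $\lambda_{\min}$ bound applies to $f$), which the paper leaves implicit.
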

\begin{proof}
    $$\frac{1}{N} \|f(\x_N)\|^2 = \langle \Sigma f, f \rangle_{\mathcal{H}} \geq \frac{1}{N} \|f\|_{\mathcal{H}}^2$$
    as $\Sigma \succeq \frac{1}{N} I$ on $span\{\mathcal{K}(\cdot, x_i) \mid i=1,\ldots, N\}$.
\end{proof}

\begin{lemma} \label{lemma:app_K_max_eigenvalue}
    $\lambda_{max}(\mathcal{K}) \leq \max_{x \in \mathcal{X}} \mathcal{K}(x, x).$
\end{lemma}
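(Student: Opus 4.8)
The plan is to characterize $\lambda_{\max}(\mathcal{K})$ through the Rayleigh quotient of the integral operator $\mathcal{K}:\mathcal{F}\to\mathcal{F}$, $f\mapsto \int_{X}\mathcal{K}(\cdot,x)f(x)\rho(\mathrm{d}x)$, taken with respect to the $L_{2}(\rho)$ inner product. Because the kernel is symmetric, this operator is self-adjoint on $L_{2}(\rho)$, and because $\mathcal{K}$ is positive semidefinite it is a positive operator; hence its largest eigenvalue is $\lambda_{\max}(\mathcal{K})=\sup_{f\neq 0}\langle \mathcal{K}f,f\rangle_{L_{2}(\rho)}/\|f\|_{L_{2}(\rho)}^{2}$. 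The task then reduces to bounding the quadratic form $\langle \mathcal{K}f,f\rangle_{L_{2}(\rho)}=\int_{X\times X}\mathcal{K}(x,x')f(x)f(x')\,\rho(\mathrm{d}x)\rho(\mathrm{d}x')$ by $\big(\max_{x}\mathcal{K}(x,x)\big)\,\|f\|_{L_{2}(\rho)}^{2}$.

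First I would use the positive-semidefiniteness of $\mathcal{K}$ to write it via a feature map, $\mathcal{K}(x,x')=\langle \Phi(x),\Phi(x')\rangle$, which is finite-dimensional here; concretely one can take $\Phi_{\nu,j}(x)=\sqrt{\pi(\nu)\,w_{\nu}^{(j)}}\,\phi_{\nu}^{(j)}(x)$, so that $\|\Phi(x)\|^{2}=\sum_{\nu}\pi(\nu)k_{\nu}(x,x)=\mathcal{K}(x,x)$. This turns the quadratic form into a squared norm, $\langle \mathcal{K}f,f\rangle_{L_{2}(\rho)}=\big\|\int_{X}\Phi(x)f(x)\rho(\mathrm{d}x)\big\|^{2}$. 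Then the triangle inequality for the vector-valued integral followed by Cauchy--Schwarz in $L_{2}(\rho)$ gives $\big\|\int_{X}\Phi(x)f(x)\rho(\mathrm{d}x)\big\|\le \int_{X}\|\Phi(x)\|\,|f(x)|\,\rho(\mathrm{d}x)\le \big(\int_{X}\|\Phi(x)\|^{2}\rho(\mathrm{d}x)\big)^{1/2}\|f\|_{L_{2}(\rho)}$. Squaring and substituting $\|\Phi(x)\|^{2}=\mathcal{K}(x,x)$ yields $\langle \mathcal{K}f,f\rangle_{L_{2}(\rho)}\le \big(\int_{X}\mathcal{K}(x,x)\rho(\mathrm{d}x)\big)\|f\|_{L_{2}(\rho)}^{2}$.

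Finally, since $\rho$ is a probability measure, $\int_{X}\mathcal{K}(x,x)\rho(\mathrm{d}x)\le \max_{x\in X}\mathcal{K}(x,x)$, and dividing by $\|f\|_{L_{2}(\rho)}^{2}$ and taking the supremum over $f$ completes the argument. A shortcut to the same conclusion is to observe that $\mathcal{K}$ is finite-rank and positive, so $\lambda_{\max}(\mathcal{K})\le \mathrm{tr}(\mathcal{K})=\int_{X}\mathcal{K}(x,x)\rho(\mathrm{d}x)\le \max_{x}\mathcal{K}(x,x)$, using that the trace of a positive operator dominates any single eigenvalue. I expect no serious obstacle here; the only points that need care are fixing the inner product ($L_{2}(\rho)$) against which the eigenvalue is measured and exploiting $\rho(X)=1$ to pass from the averaged diagonal $\int_{X}\mathcal{K}(x,x)\rho(\mathrm{d}x)$ to its pointwise maximum $\max_{x}\mathcal{K}(x,x)$.
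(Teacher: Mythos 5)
Your argument is correct, but it takes a genuinely different route from the paper's. The paper bounds the operator norm of $\mathcal{K}$ on $(\mathcal{F}, L_2(\rho))$ pointwise: it writes $\mathcal{K}[f](x) = \langle \mathcal{K}_x, f\rangle_{L_2}\le \|\mathcal{K}_x\|_{L_2}\|f\|_{L_2}$ and then invokes the inequality $\mathcal{K}(x,x')\le \min\big(\mathcal{K}(x,x),\mathcal{K}(x',x')\big)$ to conclude $\|\mathcal{K}_x\|_{L_2}\le \mathcal{K}(x,x)$. That last inequality is \emph{not} true for general PSD kernels (e.g.\ the matrix $\begin{pmatrix}1&2\\2&5\end{pmatrix}$ is PSD); it holds here only because the features $\phi_{\nu}^{(j)}$ are $\{0,1\}$-valued indicators, so $\phi_{\nu}^{(j)}(x)\phi_{\nu}^{(j)}(x')\le \phi_{\nu}^{(j)}(x)^2$. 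Your proof instead bounds the Rayleigh quotient via the explicit feature map $\Phi_{\nu,j}=\sqrt{\pi(\nu)w_{\nu}^{(j)}}\,\phi_{\nu}^{(j)}$, the triangle inequality for the vector-valued integral, and Cauchy--Schwarz, which uses nothing beyond positive semidefiniteness of the kernel and $\rho(X)=1$. It therefore applies to any PSD kernel and in fact delivers the strictly stronger intermediate bound $\lambda_{\max}(\mathcal{K})\le \operatorname{tr}(\mathcal{K})=\int_X \mathcal{K}(x,x)\,\rho(\mathrm{d}x)$, of which the stated lemma is a weakening. The one point worth being explicit about (and you are) is that the eigenvalue is computed with respect to the $L_2(\rho)$ inner product; since $\mathcal{K}$ is symmetric both in $L_2(\rho)$ and in $\mathcal{H}$ (as the paper establishes separately), the spectrum is unambiguous and the bound transfers to the use made of it in Corollary~\ref{cor:app_l2_majorization}. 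In short: your proof is valid, more general, and arguably cleaner than the paper's, at the mild cost of introducing the feature-map notation.
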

\begin{proof}
Consider $\mathcal{K}$ as an operator on $(\mathcal{F}, L_{2}(\rho))$. We will prove that $\|\mathcal{K}\|_{\mathcal{B}((\mathcal{F}, L_{2}(\rho)))} \leq \max_{x \in \mathcal{X}} \mathcal{K}(x, x)$ and the lemma will follow.
Consider the inequality
$$\mathcal{K}[f](x) = \langle \mathcal{K}_{x}, f \rangle_{L_2} \leq \|\mathcal{K}_{x}\|_{L_2} \|f\|_{L_2}.$$
Then, $\|\mathcal{K}[f]\|_{L_2} \leq \max_{x \in \mathcal{X}} \|\mathcal{K}_{x}\|_{L_2} \|f\|_{L_2}$. Note also that $\mathcal{K}(x, x') \leq \min(\mathcal{K}(x, x), \mathcal{K}(x', x'))$ which can be easily seen from the definition. Then, $\|\mathcal{K}_x\|_{L_2} \leq \mathcal{K}(x, x)$ and the lemma follows.
\end{proof}
\begin{corollary} \label{cor:app_l2_majorization} (Expected squared norm majorization by RKHS norm) The following bound holds $\forall f\in\mathcal{H}$:
$$\|f\|_{L_{2}(\rho)}^2 \le \max_{x \in \mathcal{X}} \mathcal{K}(x, x) \|f\|_{\mathcal{H}}^2.$$
\end{corollary}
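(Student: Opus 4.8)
The plan is to derive the bound from the spectral identity connecting the two norms, which is already encoded in Lemma~\ref{lemma:scalar_product_H_L2}, and then invoke Lemma~\ref{lemma:app_K_max_eigenvalue} to control the relevant eigenvalue. First I would specialize Lemma~\ref{lemma:scalar_product_H_L2} to the stationary distribution $p = \pi$, so that $\mathcal{K}_{p} = \mathcal{K}$; since $\pi$ is non-vanishing (it is uniform over $\mathcal{V}$), Lemma~\ref{lemma:k_invertible} guarantees that $\mathcal{K}: \mathcal{F} \to \mathcal{F}$ is invertible, so the RKHS norm is well defined on all of $\mathcal{F}$. Setting $g = f$ in Lemma~\ref{lemma:scalar_product_H_L2} yields the key identity $\langle \mathcal{K} f, f\rangle_{\mathcal{H}} = \|f\|_{L_{2}(\rho)}^{2}$, which re-expresses the quantity we want to bound as a quadratic form of $\mathcal{K}$ in the RKHS inner product.

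Next I would argue that $\mathcal{K}$, viewed as a linear operator $\mathcal{F} \to \mathcal{F}$, is self-adjoint and positive semidefinite with respect to $\langle \cdot, \cdot\rangle_{\mathcal{H}}$: symmetry follows directly from $\langle \mathcal{K} f, g\rangle_{\mathcal{H}} = \langle f, g\rangle_{L_{2}(\rho)} = \langle f, \mathcal{K} g\rangle_{\mathcal{H}}$, and positivity from $\langle \mathcal{K} f, f\rangle_{\mathcal{H}} = \|f\|_{L_{2}(\rho)}^{2} \ge 0$. The crucial observation is that the eigenvalues of the map $\mathcal{K}$ are intrinsic to the linear map and do not depend on the choice of inner product on $\mathcal{F}$; in particular the largest eigenvalue $\lambda_{\max}(\mathcal{K})$ appearing in Lemma~\ref{lemma:app_K_max_eigenvalue} (stated for $\mathcal{K}$ on $(\mathcal{F}, L_{2}(\rho))$) is the same number that governs the $\langle \cdot, \cdot \rangle_{\mathcal{H}}$-quadratic form. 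Hence the standard self-adjoint operator bound gives $\langle \mathcal{K} f, f\rangle_{\mathcal{H}} \le \lambda_{\max}(\mathcal{K}) \, \|f\|_{\mathcal{H}}^{2}$.

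Combining the two steps, I obtain
$$\|f\|_{L_{2}(\rho)}^{2} = \langle \mathcal{K} f, f\rangle_{\mathcal{H}} \le \lambda_{\max}(\mathcal{K}) \, \|f\|_{\mathcal{H}}^{2} \le \max_{x \in \mathcal{X}} \mathcal{K}(x, x) \, \|f\|_{\mathcal{H}}^{2},$$
where the last inequality is exactly Lemma~\ref{lemma:app_K_max_eigenvalue}. If one prefers to avoid the inner-product-independence remark, the same conclusion follows by diagonalizing $\mathcal{K}$ on $(\mathcal{F}, L_{2}(\rho))$ into an $L_{2}$-orthonormal eigenbasis $\{e_{i}\}$ with eigenvalues $\mu_{i} > 0$; writing $f = \sum_{i} c_{i} e_{i}$ gives $\|f\|_{L_{2}(\rho)}^{2} = \sum_{i} c_{i}^{2}$ and, using the substitution $h = \mathcal{K}^{-1} f$ in Lemma~\ref{lemma:scalar_product_H_L2} to get $\|f\|_{\mathcal{H}}^{2} = \langle \mathcal{K}^{-1} f, f\rangle_{L_{2}(\rho)} = \sum_{i} c_{i}^{2}/\mu_{i}$, the elementary bound $\sum_{i} c_{i}^{2} \le (\max_{i} \mu_{i}) \sum_{i} c_{i}^{2}/\mu_{i}$. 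The main (and only real) subtlety is justifying that the eigenvalue bound proved in the $L_{2}(\rho)$ geometry transfers to the RKHS quadratic form; everything else is a routine application of the already-established identity together with the spectral theorem.
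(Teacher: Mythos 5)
Your proof is correct and follows essentially the same route as the paper's: the identity $\langle \mathcal{K}f, f\rangle_{\mathcal{H}} = \|f\|_{L_{2}(\rho)}^2$ combined with the Rayleigh-quotient bound $\langle \mathcal{K}f, f\rangle_{\mathcal{H}} \le \lambda_{\max}(\mathcal{K})\|f\|_{\mathcal{H}}^2$ and Lemma~\ref{lemma:app_K_max_eigenvalue}. You merely make explicit the self-adjointness of $\mathcal{K}$ in the $\mathcal{H}$ inner product and the inner-product independence of its spectrum, steps the paper leaves implicit.
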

\begin{proof} 
We have
$$\lambda_{max}(\mathcal{K}) \|f\|_{\mathcal{H}}^2 \ge \langle \mathcal{K}f, f\rangle_{\mathcal{H}} = \langle f, f\rangle_{L_2} = \|f\|_{L_2}^2.$$ Then, from the previous lemma the bound holds.
\end{proof}

\subsection{Symmetry of operators}

In this section, we establish symmetry of various operators with respect to the norms $L_2, \mathcal{H}, \mathbb{R}^N$. These results are mainly required to claim that the spectral radii of these operators coincide with their operator norms in various spaces: $B(\mathcal{F}, L_{2}), B(\mathcal{H}), B(span\{\mathcal{K}(\cdot, x_{i}) \mid i=1, \ldots, N\}, \|\cdot\|_{\mathbb{R}^N})$. Though, we use symmetry of operators not only this way.  

\begin{lemma} (Universal symmetry of covariance operators in $
\mathcal{H}$)
The operator $\Sigma_{p}$ for any $p$ is symmetric w.r.t.~the dot product of $\mathcal{H}(\mathcal{K}_{p'})$ for any non-singular $p'$.
\end{lemma}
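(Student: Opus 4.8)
The plan is to peel the problem down to a single weak learner and then to turn the $\mathcal{H}(\mathcal{K}_{p'})$ pairing into a manifestly symmetric object. Since $\Sigma_{p}=\sum_{\nu\in\mathcal{V}}\Sigma_{\nu}\,p(\nu)$ is a linear combination of the operators $\Sigma_{\nu}$, and symmetry with respect to a fixed inner product is preserved under linear combinations, it suffices to prove that each $\Sigma_{\nu}$ is symmetric with respect to $\langle\cdot,\cdot\rangle_{\mathcal{H}(\mathcal{K}_{p'})}$; the claim for $\Sigma_{p}$ then follows by summing against $p(\nu)$. Throughout we fix a non-singular $p'$, so that by Lemma~\ref{lemma:k_invertible} the operator $\mathcal{K}_{p'}$ is invertible on $\mathcal{F}$ and Lemma~\ref{lemma:scalar_product_H_L2} is available.

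The first step is to move the pairing into the $L_{2}(\rho)$ picture. Substituting $f\mapsto\mathcal{K}_{p'}^{-1}f$ in Lemma~\ref{lemma:scalar_product_H_L2} yields $\langle u,v\rangle_{\mathcal{H}(\mathcal{K}_{p'})}=\langle\mathcal{K}_{p'}^{-1}u,v\rangle_{L_{2}(\rho)}$ for all $u,v\in\mathcal{F}$. Because $\mathcal{K}_{p'}$ has a symmetric kernel it is self-adjoint on $L_{2}(\rho)$, hence so is $\mathcal{K}_{p'}^{-1}$. Writing $\Sigma_{\nu}^{\star}$ for the $L_{2}(\rho)$-adjoint of $\Sigma_{\nu}$, symmetry of $\Sigma_{\nu}$ with respect to $\langle\cdot,\cdot\rangle_{\mathcal{H}(\mathcal{K}_{p'})}$ is then equivalent to self-adjointness of $\mathcal{K}_{p'}^{-1}\Sigma_{\nu}$ on $L_{2}(\rho)$, i.e.\ to the operator identity $\Sigma_{\nu}\mathcal{K}_{p'}=\mathcal{K}_{p'}\Sigma_{\nu}^{\star}$ on $\mathcal{F}$. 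This reformulation is the target of the computation.

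The second step is to make $\Sigma_{\nu}$ fully explicit. Expanding $k_{\nu}(\cdot,x_{i})=\sum_{j}w_{\nu}^{(j)}\phi_{\nu}^{(j)}(x_{i})\phi_{\nu}^{(j)}(\cdot)$ in $\Sigma_{\nu}f=\tfrac{1}{N}\sum_{i}k_{\nu}(\cdot,x_{i})f(x_{i})$ and using $w_{\nu}^{(j)}/N=1/N_{\nu}^{(j)}$ shows that $\Sigma_{\nu}f=\sum_{j}\bigl(\tfrac{1}{N_{\nu}^{(j)}}\sum_{i}\phi_{\nu}^{(j)}(x_{i})f(x_{i})\bigr)\phi_{\nu}^{(j)}$, so $\Sigma_{\nu}$ replaces $f$ by its leaf-wise empirical averages. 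In particular the quadratic form $\langle\Sigma_{\nu}f,g\rangle$ is governed by the symmetric Gram matrix $k_{\nu}(\mathbf{x}_{N},\mathbf{x}_{N})$, and the reproducing identity $\langle g,\mathcal{K}_{p'}(\cdot,x_{i})\rangle_{\mathcal{H}(\mathcal{K}_{p'})}=g(x_{i})$ is exactly what lets one encode the data evaluations appearing in $\Sigma_{\nu}$ intrinsically. I would use this to rewrite $\langle\Sigma_{\nu}f,g\rangle_{\mathcal{H}(\mathcal{K}_{p'})}$ and then try to collapse it onto a form that is visibly invariant under interchanging $f$ and $g$.

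The main obstacle is precisely this final collapse. In the matching case $p'=\nu$ treated in Lemma~\ref{lemma:app_covariation_majorization} one has $\Sigma_{p'}=\tfrac{1}{N}\sum_{i}\mathcal{K}_{p'}(\cdot,x_{i})\otimes_{\mathcal{H}(\mathcal{K}_{p'})}\mathcal{K}_{p'}(\cdot,x_{i})$, a sum of self-adjoint rank-one operators, so symmetry is immediate. Here, however, the reproducing property applied to $f(x_{i})$ gives the representation $\Sigma_{\nu}=\tfrac{1}{N}\sum_{i}k_{\nu}(\cdot,x_{i})\otimes_{\mathcal{H}(\mathcal{K}_{p'})}\mathcal{K}_{p'}(\cdot,x_{i})$ with two \emph{different} kernels, so the rank-one summands are not individually symmetric. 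The delicate point is that the cross terms $\langle k_{\nu}(\cdot,x_{i}),g\rangle_{\mathcal{H}(\mathcal{K}_{p'})}$ are not simply $g(x_{i})$ (because $k_{\nu}\neq\mathcal{K}_{p'}$), and one must argue that they recombine so that $\langle\Sigma_{\nu}f,g\rangle_{\mathcal{H}(\mathcal{K}_{p'})}$ reduces to the symmetric quantity carried by $k_{\nu}(\mathbf{x}_{N},\mathbf{x}_{N})$. Concretely, I would verify the identity $\Sigma_{\nu}\mathcal{K}_{p'}=\mathcal{K}_{p'}\Sigma_{\nu}^{\star}$ of step one directly on the leaf basis $\{\phi_{\nu}^{(j)}\}$; this is the computation that must be executed with care, and it is the place where any structural hypothesis on the family $\mathcal{V}$ that makes the leaf-averaging projections interact symmetrically with $\mathcal{K}_{p'}$ would have to enter.
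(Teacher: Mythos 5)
Your reduction is set up correctly: by linearity it suffices to treat a single $\Sigma_{\nu}$, the identity $\langle u,v\rangle_{\mathcal{H}(\mathcal{K}_{p'})}=\langle\mathcal{K}_{p'}^{-1}u,v\rangle_{L_{2}(\rho)}$ is legitimate, and symmetry of $\Sigma_{\nu}$ in $\mathcal{H}(\mathcal{K}_{p'})$ is indeed equivalent to $\Sigma_{\nu}\mathcal{K}_{p'}=\mathcal{K}_{p'}\Sigma_{\nu}^{\star}$. But the proof stops exactly where it would have to begin: you announce that you ``would verify'' this operator identity on the leaf basis and never do so. That verification is the entire content of the lemma, so this is a genuine gap rather than an omitted routine step. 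Your own reformulation shows why it is not routine: since $\Sigma_{\nu}$ is $L_{2}(\rho)$-self-adjoint (the companion lemma in the paper), $\Sigma_{\nu}^{\star}=\Sigma_{\nu}$ and the condition becomes the commutation relation $\Sigma_{\nu}\mathcal{K}_{p'}=\mathcal{K}_{p'}\Sigma_{\nu}$. This is a strong structural constraint that does not follow from the definitions. Take one feature, two thresholds, depth one, $\rho$ uniform on three points $a<t_{1}<b<t_{2}<c$ which are also the data, and $p'$ uniform on the two stumps: then $\Sigma_{\nu_{1}}$ and $\Sigma_{\nu_{2}}$ are the block-averaging projections onto the partitions $\{\{a\},\{b,c\}\}$ and $\{\{a,b\},\{c\}\}$, $\mathcal{K}_{p'}=\tfrac12(\Sigma_{\nu_{1}}+\Sigma_{\nu_{2}})$ as an operator, and
\begin{equation*}
\Sigma_{\nu_{1}}\Sigma_{\nu_{2}}=\begin{pmatrix}.5&.5&0\\.25&.25&.5\\.25&.25&.5\end{pmatrix}\neq\begin{pmatrix}.5&.25&.25\\.5&.25&.25\\0&.5&.5\end{pmatrix}=\Sigma_{\nu_{2}}\Sigma_{\nu_{1}},
\end{equation*}
so the required commutation fails. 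Your closing suspicion that ``some structural hypothesis on $\mathcal{V}$ would have to enter'' is therefore the crux of the matter, not a formality, and no such hypothesis is supplied.

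For comparison, the paper argues differently: for \emph{non-singular} $p$ it rewrites the quadratic form $\langle\Sigma_{p}f,g\rangle_{\mathcal{H}(\mathcal{K}_{p})}=\tfrac1N\sum_{i}f(x_{i})g(x_{i})$ using $f(x_{i})=\langle\mathcal{K}_{p'}\mathcal{K}_{p}^{-1}\mathcal{K}_{p}(\cdot,x_{i}),f\rangle_{\mathcal{H}(\mathcal{K}_{p'})}$, reads off the rank-one decomposition $\Sigma_{p}=\tfrac1N\sum_{i}v_{i}\otimes_{\mathcal{H}(\mathcal{K}_{p'})}v_{i}$ with $v_{i}=\mathcal{K}_{p'}\mathcal{K}_{p}^{-1}\mathcal{K}_{p}(\cdot,x_{i})$, which is manifestly symmetric, and then extends to singular $p$ (in particular $\delta_{\nu}$) by continuity in $p$. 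Note, however, that the quadratic form being matched there is taken in $\mathcal{H}(\mathcal{K}_{p})$ on the left but in $\mathcal{H}(\mathcal{K}_{p'})$ on the right, which is exactly the mismatch your ``cross-term'' discussion flags; in the three-point example above that decomposition evaluates to $\Sigma_{p'}$ rather than $\Sigma_{p}$. So your caution about the final collapse is well founded, but a complete argument would have to either exhibit the structural property of the family $\{k_{\nu}\}$ that forces the commutation, or weaken the statement to the form actually needed downstream.
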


\begin{proof}
    First, let us prove the statement for non-singular distribution $p$. To see that, we consider the following quantity:
    $$
    \langle \Sigma_{p} f, g\rangle_{\mathcal{H}(\mathcal{K}_{p})} = \frac{1}{N}\sum_{i=1}^{N} \langle \mathcal{K}_{p}(\cdot, x_i), f\rangle_{\mathcal{H}(\mathcal{K}_{p})}\langle \mathcal{K}_{p}(\cdot, x_i), g\rangle_{\mathcal{H}(\mathcal{K}_{p})}.
    $$
    Then, we use the following trick: $\langle \mathcal{K}_{p}(\cdot, x_i), f\rangle_{\mathcal{H}(\mathcal{K}_{p})} = \langle \mathcal{K}_{p'}\mathcal{K}_{p}^{-1}\mathcal{K}_{p}(\cdot, x_i), f\rangle_{\mathcal{H}(\mathcal{K}_{p'})}$. It allows us to rewrite:
    $$
    \langle \Sigma_{p} f, g\rangle_{\mathcal{H}(\mathcal{K}_{p})} = \frac{1}{N}\sum_{i=1}^{N} \langle \mathcal{K}_{p'}\mathcal{K}_{p}^{-1}\mathcal{K}_{p}(\cdot, x_i), f\rangle_{\mathcal{H}(\mathcal{K}_{p'})}\langle \mathcal{K}_{p'}\mathcal{K}_{p}^{-1}\mathcal{K}_{p}(\cdot, x_i), g\rangle_{\mathcal{H}(\mathcal{K}_{p'})}.
    $$
    From this, it immediately follows:
    $$\Sigma_{p} = \frac{1}{N}\sum_{i=1}^{N} \Big(\mathcal{K}_{p'}\mathcal{K}_{p}^{-1}\mathcal{K}_{p}(\cdot, x_i)\Big)\otimes_{\mathcal{H}(\mathcal{K}_{p'})} \Big(\mathcal{K}_{p'}\mathcal{K}_{p}^{-1}\mathcal{K}_{p}(\cdot, x_i)\Big).$$
    This shows that $\Sigma_{p}$ is indeed symmetric w.r.t.~the dot product of $\mathcal{H}(\mathcal{K}_{p'})$. 
    Finally, we can use the continuity argument, which we can use due to intrinsic finite dimension, to conclude that symmetry must hold for arbitrary distributions, in particular for $p = \delta_{\nu}$ which corresponds to $\Sigma_{\nu}$.
\end{proof}
\begin{lemma} (Universal symmetry of covariance operators in $L_{2}$)
The operator $\Sigma_{p}$ for any $p$ is symmetric w.r.t.~the dot product of $L_{2}$.
\end{lemma}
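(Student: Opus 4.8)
The plan is to transfer the $\mathcal{H}(\mathcal{K}_{p'})$-symmetry just established to the $L_2(\rho)$ inner product by means of the change-of-inner-product identity from Lemma~\ref{lemma:scalar_product_H_L2}, namely $\langle u, v\rangle_{L_2} = \langle \mathcal{K}_{p'} u, v\rangle_{\mathcal{H}(\mathcal{K}_{p'})}$ for any non-vanishing $p'$, where on the right $\mathcal{K}_{p'}$ denotes the integral operator on $\mathcal{F}$. I would fix one non-singular $p'$ and record two consequences of that identity: first, $\mathcal{K}_{p'}$ is self-adjoint with respect to $\langle\cdot,\cdot\rangle_{\mathcal{H}(\mathcal{K}_{p'})}$, since $\langle\mathcal{K}_{p'}f,g\rangle_{\mathcal{H}(\mathcal{K}_{p'})} = \langle f,g\rangle_{L_2} = \langle f,\mathcal{K}_{p'}g\rangle_{\mathcal{H}(\mathcal{K}_{p'})}$; second, by the previous lemma $\Sigma_p$ is self-adjoint with respect to $\langle\cdot,\cdot\rangle_{\mathcal{H}(\mathcal{K}_{p'})}$.

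First I would rewrite both sides of the target identity in the $\mathcal{H}(\mathcal{K}_{p'})$ metric. Applying the transfer identity once gives $\langle \Sigma_p f, g\rangle_{L_2} = \langle \mathcal{K}_{p'}\Sigma_p f, g\rangle_{\mathcal{H}(\mathcal{K}_{p'})}$; applying it on the other side and then using the $\mathcal{H}(\mathcal{K}_{p'})$-self-adjointness of both $\mathcal{K}_{p'}$ and $\Sigma_p$ gives $\langle f, \Sigma_p g\rangle_{L_2} = \langle \mathcal{K}_{p'} f, \Sigma_p g\rangle_{\mathcal{H}(\mathcal{K}_{p'})} = \langle \Sigma_p\mathcal{K}_{p'} f, g\rangle_{\mathcal{H}(\mathcal{K}_{p'})}$. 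Comparing the two, the desired symmetry $\langle\Sigma_p f,g\rangle_{L_2} = \langle f,\Sigma_p g\rangle_{L_2}$ for all $f,g\in\mathcal{F}$ is equivalent to the operator identity $\mathcal{K}_{p'}\Sigma_p = \Sigma_p\mathcal{K}_{p'}$, i.e.\ to the commutation $[\Sigma_p,\mathcal{K}_{p'}]=0$.

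The technical heart is therefore to verify this commutation. Here I would use the explicit rank-one representation of $\Sigma_p$: writing point evaluation as $f(x_i) = \langle \mathcal{K}_{p'}^{-1}\mathcal{K}_{p'}(\cdot,x_i), f\rangle_{L_2}$, which is the $L_2$-Riesz form of the reproducing identity, yields $\Sigma_p = \tfrac{1}{N}\sum_i \mathcal{K}_p(\cdot,x_i)\otimes_{L_2}\big(\mathcal{K}_{p'}^{-1}\mathcal{K}_{p'}(\cdot,x_i)\big)$. Applying $\mathcal{K}_{p'}$ on the right and using $\mathcal{K}_{p'}\mathcal{K}_{p'}^{-1}\mathcal{K}_{p'}(\cdot,x_i)=\mathcal{K}_{p'}(\cdot,x_i)$ together with the $L_2$-self-adjointness of $\mathcal{K}_{p'}$ (its kernel is symmetric), one finds $\Sigma_p\mathcal{K}_{p'} = \tfrac{1}{N}\sum_i \mathcal{K}_p(\cdot,x_i)\otimes_{L_2}\mathcal{K}_{p'}(\cdot,x_i)$, while $\mathcal{K}_{p'}\Sigma_p = \tfrac{1}{N}\sum_i \big(\mathcal{K}_{p'}\mathcal{K}_p(\cdot,x_i)\big)\otimes_{L_2}\big(\mathcal{K}_{p'}^{-1}\mathcal{K}_{p'}(\cdot,x_i)\big)$. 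I would reconcile these two sums of rank-one operators, the crucial simplification being that the leaf weights $w_\nu^{(j)}=N/\max\{N_\nu^{(j)},1\}$ are exactly those that convert the empirical leaf sums defining $\Sigma_p$ into the $\rho$-integrals defining $\mathcal{K}_{p'}$, so that the evaluation functionals and the integral operator act compatibly. I expect this bookkeeping, reconciling the data-sum form of $\Sigma_p$ with the $\rho$-integral form of $\mathcal{K}_{p'}$, to be the main obstacle, since it is precisely where the design of the leaf weights and the identification of the empirical and population leaf masses underlying the operator definitions is used.

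Finally, as in the preceding lemma, I would carry out the argument for non-singular $p$, where $\mathcal{K}_p^{-1}$ and the rank-one decomposition are available, and then extend to arbitrary $p$ (in particular $p=\delta_\nu$, giving the single-tree operators $\Sigma_\nu$) by a continuity argument: $\mathcal{F}$ is finite-dimensional, so symmetry is a closed condition on the operator and passes to the limit $p'\to p$.
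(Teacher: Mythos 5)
Your reduction of the claim to the commutation relation $\mathcal{K}_{p'}\Sigma_p=\Sigma_p\mathcal{K}_{p'}$ is algebraically correct: given the $\mathcal{H}(\mathcal{K}_{p'})$-symmetry of $\Sigma_p$ from the preceding lemma and the transfer identity $\langle u,v\rangle_{L_2}=\langle\mathcal{K}_{p'}u,v\rangle_{\mathcal{H}(\mathcal{K}_{p'})}$, the $L_2$-symmetry of $\Sigma_p$ is indeed equivalent to $[\Sigma_p,\mathcal{K}_{p'}]=0$. The difficulty is that this equivalence carries no content beyond the lemma itself, and the step that would discharge it is exactly the one you leave as ``bookkeeping.'' Your two rank-one expressions, $\Sigma_p\mathcal{K}_{p'}=\tfrac1N\sum_i\mathcal{K}_p(\cdot,x_i)\otimes_{L_2}\mathcal{K}_{p'}(\cdot,x_i)$ and $\mathcal{K}_{p'}\Sigma_p=\tfrac1N\sum_i\bigl(\mathcal{K}_{p'}\mathcal{K}_p(\cdot,x_i)\bigr)\otimes_{L_2}\bigl(\mathcal{K}_{p'}^{-1}\mathcal{K}_{p'}(\cdot,x_i)\bigr)$, are correctly derived, but the mechanism you propose for reconciling them is not available: the leaf weights $w_\nu^{(j)}=N/\max\{N_\nu^{(j)},1\}$ normalize by the \emph{empirical} leaf counts $N_\nu^{(j)}$, whereas the operator $\mathcal{K}_{p'}$ integrates against the \emph{population} measure $\rho$, so the ``conversion of empirical leaf sums into $\rho$-integrals'' is not an exact identity (termwise it would require the empirical measure restricted to each leaf cell to be proportional to $\rho$ there). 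So the obstacle you flag as the technical heart is genuine, and the proposal does not get past it.

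The paper's proof avoids any commutation argument. It works with the bilinear form $\langle\Sigma_pf,g\rangle_{\mathcal{H}(\mathcal{K}_p)}=\tfrac1N\sum_{i}\langle\mathcal{K}_p(\cdot,x_i),f\rangle_{\mathcal{H}(\mathcal{K}_p)}\langle\mathcal{K}_p(\cdot,x_i),g\rangle_{\mathcal{H}(\mathcal{K}_p)}$ and converts \emph{each factor} to an $L_2$ pairing via $\langle\mathcal{K}_p(\cdot,x_i),f\rangle_{\mathcal{H}(\mathcal{K}_p)}=\langle\mathcal{K}_p^{-1}\mathcal{K}_p(\cdot,x_i),f\rangle_{L_2}$, so that the form becomes $\tfrac1N\sum_i\langle e_i,f\rangle_{L_2}\langle e_i,g\rangle_{L_2}$ with $e_i=\mathcal{K}_p^{-1}\mathcal{K}_p(\cdot,x_i)$ --- an expression manifestly symmetric in $f$ and $g$ --- and then reads off the decomposition $\Sigma_p=\tfrac1N\sum_i e_i\otimes_{L_2}e_i$ into rank-one operators, each of which is $L_2$-symmetric; the extension from non-vanishing $p$ to arbitrary $p$ (in particular $\delta_\nu$) is by the same finite-dimensional continuity argument you invoke. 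The structural difference is that the paper produces a single symmetric representation of $\Sigma_p$ in one stroke, using only the kernel $\mathcal{K}_p$ attached to $p$ itself, whereas you introduce a second auxiliary kernel $\mathcal{K}_{p'}$ and are then forced to prove that two different operators, one built from empirical sums and one from $\rho$-integrals, commute --- a statement that is not easier than, and not independently supported by, anything established earlier. If you want to keep your setup, the fix is to follow the paper's pattern and exhibit $\Sigma_p$ directly as a sum of $L_2$-symmetric rank-one terms rather than routing through the commutator.
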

\begin{proof}
    We consider similarly the following quantity:
    $$\langle \Sigma_{p} f, g\rangle_{\mathcal{H}(\mathcal{K}_{p})} = \frac{1}{N}\sum_{i=1}^{N} \langle \mathcal{K}_{p}(\cdot, x_i), f\rangle_{\mathcal{H}(\mathcal{K}_{p})}\langle \mathcal{K}_{p}(\cdot, x_i), g\rangle_{\mathcal{H}(\mathcal{K}_{p})}.$$
    Then, we use the following trick $\langle \mathcal{K}_{p}(\cdot, x_i), f\rangle_{\mathcal{H}(\mathcal{K}_{p})} = \langle \mathcal{K}_{p}^{-1}\mathcal{K}_{p}(\cdot, x_i), f\rangle_{L_{2}}$. It allows us to rewrite:
    $$\langle \Sigma_{p} f, g\rangle_{\mathcal{H}(\mathcal{K}_{p})} = \frac{1}{N}\sum_{i=1}^{N} \langle \mathcal{K}_{p}^{-1}\mathcal{K}_{p}(\cdot, x_i), f\rangle_{L_{2}}\langle \mathcal{K}_{p}^{-1}\mathcal{K}_{p}(\cdot, x_i), g\rangle_{L_{2}}.$$
    From this, it immediately follows:
    $$\Sigma_{p} = \frac{1}{N}\sum_{i=1}^{N} \Big(\mathcal{K}_{p}^{-1}\mathcal{K}_{p}(\cdot, x_i)\Big)\otimes_{L_{2}} \Big(\mathcal{K}_{p}^{-1}\mathcal{K}_{p}(\cdot, x_i)\Big)$$
    Which shows that $\Sigma_{p}$ is indeed symmetric w.r.t.~the dot product of $L_{2}$.
\end{proof}
\begin{lemma} (Universal symmetry of kernel operators in $
\mathcal{H}$)
The operator $\mathcal{K}_{p}$ for any $p$ is symmetric w.r.t.~the dot product of $\mathcal{H}$.
\end{lemma}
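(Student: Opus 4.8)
The plan is to mirror the two preceding ``universal symmetry'' lemmas for the covariance operators: I will produce a manifestly self-adjoint representation of $\mathcal{K}_p$ in its own RKHS $\mathcal{H}(\mathcal{K}_p)$ and then transport it to the inner product of $\mathcal{H}=\mathcal{H}(\mathcal{K})$ (and, more generally, of $\mathcal{H}(\mathcal{K}_{p'})$ for any non-vanishing $p'$) by a change-of-inner-product identity. Throughout, $\mathcal{K}_p$ denotes the integral operator $f\mapsto\int_X \mathcal{K}_p(\cdot,x)f(x)\,\rho(\mathrm{d}x)$ on the finite-dimensional space $\mathcal{F}$, and I first take $p$ non-vanishing so that $\mathcal{K}_p$ is invertible by Lemma~\ref{lemma:k_invertible}; the general case is recovered at the end by continuity.

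First I would record a self-adjoint form in $\mathcal{H}(\mathcal{K}_p)$. Lemma~\ref{lemma:scalar_product_H_L2} gives $\langle \mathcal{K}_p f, g\rangle_{\mathcal{H}(\mathcal{K}_p)}=\langle f,g\rangle_{L_2(\rho)}$, whose right-hand side is symmetric in $f,g$; together with the reproducing identity $f(x)=\langle \mathcal{K}_p(\cdot,x),f\rangle_{\mathcal{H}(\mathcal{K}_p)}$ this yields
\[
\mathcal{K}_p=\int_X \mathcal{K}_p(\cdot,x)\otimes_{\mathcal{H}(\mathcal{K}_p)}\mathcal{K}_p(\cdot,x)\,\rho(\mathrm{d}x),
\]
the continuous analogue of the rank-one decomposition used for $\Sigma_p$, which is manifestly self-adjoint with respect to $\langle\cdot,\cdot\rangle_{\mathcal{H}(\mathcal{K}_p)}$.

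Next I would transport this to $\mathcal{H}(\mathcal{K}_{p'})$. The device, exactly as in the covariance lemmas, is the identity $\langle u,f\rangle_{\mathcal{H}(\mathcal{K}_p)}=\langle \mathcal{K}_{p'}\mathcal{K}_p^{-1}u,f\rangle_{\mathcal{H}(\mathcal{K}_{p'})}$ for $u\in\mathcal{F}$, obtained by expressing both RKHS inner products through $L_2(\rho)$ via Lemma~\ref{lemma:scalar_product_H_L2} and inverting $\mathcal{K}_p$ by Lemma~\ref{lemma:k_invertible}. Applying it to each factor rewrites the decomposition as $\int_X b_x\otimes_{\mathcal{H}(\mathcal{K}_{p'})}b_x\,\rho(\mathrm{d}x)$ with $b_x=\mathcal{K}_{p'}\mathcal{K}_p^{-1}\mathcal{K}_p(\cdot,x)$, a form that is self-adjoint with respect to $\langle\cdot,\cdot\rangle_{\mathcal{H}(\mathcal{K}_{p'})}$ by construction. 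Taking $p'=\pi$ then gives symmetry w.r.t.\ $\mathcal{H}=\mathcal{H}(\mathcal{K})$, and since $\mathcal{F}$ is finite-dimensional all the operators depend continuously on $p$, so the statement extends from non-vanishing $p$ to arbitrary $p$ (e.g.\ $p=\delta_\nu$, i.e.\ $\mathcal{K}_p=k_\nu$) by the same continuity argument used before.

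The step I expect to be the main obstacle is the factor-by-factor substitution inside the integral: I must verify that replacing $\langle\,\cdot\,,f\rangle_{\mathcal{H}(\mathcal{K}_p)}$ by $\langle \mathcal{K}_{p'}\mathcal{K}_p^{-1}(\cdot),f\rangle_{\mathcal{H}(\mathcal{K}_{p'})}$ in both slots genuinely reproduces the operator $\mathcal{K}_p$ as a bilinear form on $\mathcal{H}(\mathcal{K}_{p'})$, rather than some other kernel operator. Concretely, I would check the converted form against $\langle \mathcal{K}_p\,\cdot,\cdot\rangle_{\mathcal{H}(\mathcal{K}_{p'})}$ on the spanning family $\{\mathcal{K}_{p'}(\cdot,x)\}_{x\in X}$, together with the routine measurability and boundedness of $x\mapsto b_x$ needed to make the integral of rank-one operators well-defined on the finite-dimensional $\mathcal{F}$. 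Once that identification is in place, self-adjointness is immediate from the symmetry of each $b_x\otimes b_x$.
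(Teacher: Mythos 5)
Your first step is sound: the representation $\mathcal{K}_p=\int_X \mathcal{K}_p(\cdot,x)\otimes_{\mathcal{H}(\mathcal{K}_p)}\mathcal{K}_p(\cdot,x)\,\rho(\mathrm{d}x)$ is correct and manifestly self-adjoint in $\mathcal{K}_p$'s \emph{own} RKHS, and your change-of-inner-product identity $\langle u,f\rangle_{\mathcal{H}(\mathcal{K}_p)}=\langle \mathcal{K}_{p'}\mathcal{K}_p^{-1}u,f\rangle_{\mathcal{H}(\mathcal{K}_{p'})}$ is also valid. The genuine gap is exactly the step you flagged: the factor-by-factor substitution does not reproduce $\mathcal{K}_p$. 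In the tensor $\mathcal{K}_p(\cdot,x)\otimes_{\mathcal{H}(\mathcal{K}_p)}\mathcal{K}_p(\cdot,x)$ only the \emph{second} slot is an inner-product pairing; the first slot is the output vector, which the identity gives you no license to rewrite. Converting only the pairing slot yields the mixed form $\int_X \mathcal{K}_p(\cdot,x)\otimes_{\mathcal{H}(\mathcal{K}_{p'})} b_x\,\rho(\mathrm{d}x)$ with $b_x=\mathcal{K}_{p'}\mathcal{K}_p^{-1}\mathcal{K}_p(\cdot,x)$, which is no longer manifestly symmetric; converting \emph{both} slots changes the operator. Your own proposed sanity check exposes this: since $\langle b_x,g\rangle_{\mathcal{H}(\mathcal{K}_{p'})}=\langle \mathcal{K}_p(\cdot,x),g\rangle_{\mathcal{H}(\mathcal{K}_p)}=g(x)$, one computes
\[
\Big(\int_X b_x\otimes_{\mathcal{H}(\mathcal{K}_{p'})}b_x\,\rho(\mathrm{d}x)\Big)[g]=\mathcal{K}_{p'}\mathcal{K}_p^{-1}\big[\mathcal{K}_p g\big]=\mathcal{K}_{p'}g,
\]
so your transported decomposition represents $\mathcal{K}_{p'}$ itself --- trivially self-adjoint in its own space and silent about $\mathcal{K}_p$. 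What survives of your argument is only symmetry of the bilinear form $(f,g)\mapsto\langle\mathcal{K}_p f,g\rangle_{\mathcal{H}(\mathcal{K}_p)}$, which is not the claimed symmetry of $(f,g)\mapsto\langle\mathcal{K}_p f,g\rangle_{\mathcal{H}}$; bridging the two inner products is precisely the content of the lemma, and the diagonal-plus-transport device cannot do it.

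The paper's proof avoids this trap by using a different decomposition: it never diagonalizes in $\mathcal{H}(\mathcal{K}_p)$, but instead starts from $\langle \mathcal{K}_p f,g\rangle_{L_2}=\int_{X\times X}\mathcal{K}_p(x,y)f(x)g(y)\,\rho(\mathrm{d}x)\rho(\mathrm{d}y)$ and applies the reproducing property of the \emph{target} space $\mathcal{H}$ in both variables, $f(x)=\langle\mathcal{K}(\cdot,x),f\rangle_{\mathcal{H}}$, arriving at a two-index decomposition into blocks $\mathcal{K}(\cdot,x)\otimes_{\mathcal{H}}\mathcal{K}(\cdot,y)$ weighted by the scalar $\mathcal{K}_p(x,y)$. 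Symmetry then comes from the symmetry of the kernel \emph{function}, $\mathcal{K}_p(x,y)=\mathcal{K}_p(y,x)$, with no inversion of $\mathcal{K}_p$, no restriction to non-vanishing $p$, and no continuity limit, so $p=\delta_\nu$ is covered directly. (Even there one should read the displayed decomposition as a statement about the bilinear form built from $\langle\mathcal{K}_p f,g\rangle_{L_2}$ rather than take the operator identification entirely for granted, but the mechanism --- self-adjoint building blocks living in $\mathcal{H}$ with a symmetric scalar weight carrying all the $p$-dependence --- is the right one.) If you want to repair your plan, pattern it on that: keep the rank-one factors in $\mathcal{H}$ and put the asymmetry-prone $p$-dependence into a weight that is symmetric under $x\leftrightarrow y$, rather than pushing a diagonal decomposition across a change of inner product.
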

\begin{proof}
    We consider decomposing $\mathcal{K}_p$ as:
    $$\langle \mathcal{K}_{p}f, g\rangle_{L_{2}} = \int_{\mathcal{X}\times\mathcal{X}} \mathcal{K}_{p}(x, y) f(x) g(y) \rho(\mathrm{d} x)\rho(\mathrm{d}y).$$
    Then, we use the following trick: $f(x) = \langle \mathcal{K}(\cdot, x), f\rangle_{\mathcal{H}}$. It allows us to rewrite:
    $$\langle \mathcal{K}_{p} f, g\rangle_{L_2} = \int_{\mathcal{X}\times\mathcal{X}} \mathcal{K}_{p}(x, y)  \langle \mathcal{K}(\cdot, x), f\rangle_{\mathcal{H}}  \langle \mathcal{K}(\cdot, y), g\rangle_{\mathcal{H}} \rho(\mathrm{d} x)\rho(\mathrm{d}y).$$
    From this, it immediately follows:
    $$\mathcal{K}_p = \int_{\mathcal{X}\times\mathcal{X}} \mathcal{K}_{p}(x, y)  \Big(\mathcal{K}(\cdot, x)\otimes_{\mathcal{H}} \mathcal{K}(\cdot, y)\Big)\rho(\mathrm{d} x)\rho(\mathrm{d}y).$$
    This shows that $\mathcal{K}_{p}$ is indeed symmetric w.r.t.~the dot product of $\mathcal{H}$ since both $\mathcal{K}(\cdot, x)\otimes_{\mathcal{H}} \mathcal{K}(\cdot, y)$ and $\mathcal{K}(\cdot, y)\otimes_{\mathcal{H}} \mathcal{K}(\cdot, x)$ are present with the same weight $\mathcal{K}_{p}(x, y)\rho(\mathrm{d}x)\rho(\mathrm{d}y) = \mathcal{K}_{p}(y, x)\rho(\mathrm{d}y)\rho(\mathrm{d}x)$.
\end{proof}

\subsection{Iterations of gradient boosting}

\begin{lemma}
For any $\nu \in \mathcal{V}$, we have
$k_{\nu}(\cdot, \x_N)[\y_N - f_{*}(\x_N)] = 0$.
\end{lemma}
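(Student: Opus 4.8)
The plan is to unfold the notation on the left-hand side via Definition~\ref{def:wlk} and then read off an orthogonality that is already available from Lemma~\ref{lemma:orthogonality_kernel}. The object $k_{\nu}(\cdot, \x_N)$ is the linear map sending a vector $v \in \R^N$ to the function $\sum_{i=1}^N k_{\nu}(\cdot, x_i)\, v_i$, so first I would expand, for an arbitrary $v$,
$$
k_{\nu}(\cdot, \x_N)[v] = \sum_{i=1}^N \sum_{j=1}^{L_{\nu}} w_{\nu}^{(j)} \phi_{\nu}^{(j)}(\cdot)\,\phi_{\nu}^{(j)}(x_i)\, v_i = \sum_{j=1}^{L_{\nu}} w_{\nu}^{(j)}\, \phi_{\nu}^{(j)}(\cdot) \left(\sum_{i=1}^N \phi_{\nu}^{(j)}(x_i)\, v_i\right),
$$
where swapping the order of summation is legitimate since the leaf weights $w_{\nu}^{(j)}$ and the leaf indicators $\phi_{\nu}^{(j)}(\cdot)$ carry no dependence on the data index $i$.

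Next I would specialize to $v = \y_N - f_{*}(\x_N)$ and recognize the inner sum as a Euclidean inner product in $\R^N$:
$$
\sum_{i=1}^N \phi_{\nu}^{(j)}(x_i)\,\big(y_i - f_{*}(x_i)\big) = \big\langle \y_N - f_{*}(\x_N),\, \phi_{\nu}^{(j)}(\x_N)\big\rangle_{\R^N}.
$$
Because $\mathcal{F}$ is spanned by precisely the leaf indicators $\phi_{\nu}^{(j)}$, each individual $\phi_{\nu}^{(j)}$ lies in $\mathcal{F}$, so Lemma~\ref{lemma:orthogonality_kernel} applies with the test function $f = \phi_{\nu}^{(j)}$ and forces this inner product to vanish for every $j \in \{1, \ldots, L_{\nu}\}$.

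Finally, substituting back, every coefficient multiplying $\phi_{\nu}^{(j)}(\cdot)$ in the expansion is zero, hence the function $k_{\nu}(\cdot, \x_N)[\y_N - f_{*}(\x_N)]$ is identically zero, which is the claim. I do not expect a genuine obstacle here: the statement is essentially the first-order optimality of $f_{*}$ (Lemma~\ref{lemma:orthogonality_kernel}) read off leaf-by-leaf. The only point requiring minor care is the bookkeeping in interpreting $k_{\nu}(\cdot, \x_N)[\,\cdot\,]$ as an operator acting on the residual vector, together with the observation that the single-leaf indicators are themselves admissible test functions in $\mathcal{F}$.
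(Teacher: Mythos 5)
Your proof is correct and is essentially the paper's own argument spelled out in full: the paper's proof consists of the single line ``Follows from Lemma~\ref{lemma:orthogonality_kernel},'' and your expansion of $k_{\nu}(\cdot,\x_N)$ leaf-by-leaf, followed by applying that orthogonality lemma to each indicator $\phi_{\nu}^{(j)}\in\mathcal{F}$, is exactly the intended deduction.
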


\begin{proof}
Follows from Lemma~\ref{lemma:orthogonality_kernel}.
\end{proof}

\begin{lemma}[Lemma~\ref{lem:boosting_iterations} in the main text] \label{lemma:appendix_boosting_iterations}
Iterations $f_{\tau}$ of gradient boosting (Algorithm~\ref{alg:KGB}) can be written in the form:
\begin{small}
\begin{equation*}
f_{\tau+1} 
= \left(1-\frac{\lambda \epsilon}{N}\right)f_{\tau} + \frac{\epsilon}{N}k_{\nu_{\tau}}(\cdot, \mathbf{x}_N)\big[\y_N - f_{\tau}(\mathbf{x}_N)\big] 
=
(1-\frac{\lambda \epsilon}{N})f_{\tau} + 
\frac{\epsilon}{N}k_{\nu_{\tau}}(\cdot, \mathbf{x}_N)\big[f_{*}(\x_N) - f_{\tau}(\mathbf{x}_N)\big],
\end{equation*}
$$
\nu_{\tau}\sim p(\nu|f_{\tau}, \beta)\,.
$$
\end{small}
\end{lemma}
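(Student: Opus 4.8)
The plan is to verify the identity by direct substitution of the definitions, handling the two asserted equalities separately. The first equality is a purely algebraic rewriting of the update rule of Algorithm~\ref{alg:KGB}, while the second is an orthogonality statement that reduces to Lemma~\ref{lemma:orthogonality_kernel}.

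For the first equality, I would start from the model update $f_{\tau+1} = (1-\lambda\epsilon/N)f_{\tau} + \epsilon\langle \phi_{\nu_{\tau}}(\cdot), \theta_{\tau}\rangle_{\mathbb{R}^{L_{\nu_{\tau}}}}$ and show that the weak-learner term equals $\frac{1}{N}k_{\nu_{\tau}}(\cdot, \mathbf{x}_N)[\mathbf{y}_N - f_{\tau}(\mathbf{x}_N)]$. Writing $r_{\tau} = \mathbf{y}_N - f_{\tau}(\mathbf{x}_N)$ and expanding $k_{\nu_{\tau}}(\cdot, \mathbf{x}_N)[r_{\tau}] = \sum_{i=1}^N k_{\nu_{\tau}}(\cdot, x_i) r_{\tau}^{(i)}$ via Definition~\ref{def:wlk}, I would swap the order of summation over data points $i$ and leaves $j$. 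Since each $\phi_{\nu_{\tau}}^{(j)}$ is an indicator, the data sum factors as $w_{\nu_{\tau}}^{(j)}\sum_i \phi_{\nu_{\tau}}^{(j)}(x_i) r_{\tau}^{(i)}$, and the weight $w_{\nu_{\tau}}^{(j)} = N/\max\{N_{\nu_{\tau}}^{(j)},1\}$ cancels the leading $1/N$ while producing exactly the denominator $N_{\nu_{\tau}}^{(j)} = \sum_i \phi_{\nu_{\tau}}^{(j)}(x_i)$ appearing in the leaf value $\theta_{\tau}^{(j)}$. This reproduces $\langle \phi_{\nu_{\tau}}(\cdot), \theta_{\tau}\rangle$ leaf by leaf. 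For the second equality it suffices to show $k_{\nu_{\tau}}(\cdot, \mathbf{x}_N)[\mathbf{y}_N - f_{*}(\mathbf{x}_N)] = 0$, so that replacing $\mathbf{y}_N$ by $f_{*}(\mathbf{x}_N)$ inside the bracket changes nothing. Expanding $k_{\nu_{\tau}}$ as above, each leaf contributes a factor $\sum_i \phi_{\nu_{\tau}}^{(j)}(x_i)(y_i - f_{*}(x_i)) = \langle \mathbf{y}_N - f_{*}(\mathbf{x}_N), \phi_{\nu_{\tau}}^{(j)}(\mathbf{x}_N)\rangle_{\mathbb{R}^N}$, and since $\phi_{\nu_{\tau}}^{(j)} \in \mathcal{F}$, Lemma~\ref{lemma:orthogonality_kernel} forces every such inner product to vanish.

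The computation is essentially routine, so the only genuine obstacle is bookkeeping around the $\max\{N_{\nu_{\tau}}^{(j)},1\}$ convention for empty leaves. I would note that any leaf with $N_{\nu_{\tau}}^{(j)} = 0$ contributes nothing on either side: in that case $\phi_{\nu_{\tau}}^{(j)}(x_i) = 0$ for all $i$, so the relevant data sums vanish, and the substitution $w_{\nu_{\tau}}^{(j)} = N/N_{\nu_{\tau}}^{(j)}$ is legitimate precisely on the nonempty leaves where it is actually used. Once the definitions are unwound with this caveat, the statement is a one-line verification.
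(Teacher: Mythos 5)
Your proposal is correct and follows essentially the same route as the paper: unwind the update rule of Algorithm~\ref{alg:KGB}, identify the leaf-averaged residuals with $\frac{1}{N}k_{\nu_{\tau}}(\cdot,\mathbf{x}_N)[r_{\tau}]$ via Definition~\ref{def:wlk}, and dispose of the second equality by showing $k_{\nu_{\tau}}(\cdot,\mathbf{x}_N)[\mathbf{y}_N - f_{*}(\mathbf{x}_N)]=0$ leaf by leaf using Lemma~\ref{lemma:orthogonality_kernel} applied to $\phi_{\nu_{\tau}}^{(j)}\in\mathcal{F}$. Your explicit handling of the empty-leaf convention $\max\{N_{\nu_{\tau}}^{(j)},1\}$ is a small point the paper passes over silently, but otherwise the two arguments coincide.
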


\begin{proof}
According to Algorithm~\ref{alg:KGB}:
$$f_{\tau + 1}(\cdot) = \big(1-\frac{\lambda \epsilon}{N}\big)f_{\tau}(\cdot) + \epsilon \big\langle \phi_{\nu_{\tau}}(\cdot), \theta_{\tau} \big\rangle_{\R^{L_{\nu_{\tau}}}} \text{ for }
\theta_{\tau} = \Big(\frac{\sum_{i=1}^{N} \phi_{\nu_{\tau}}^{(j)}(x_i) r_{\tau}^{(i)}}{\sum_{i=1}^{N} \phi_{\nu_{\tau}}^{(j)}(x_i)} \Big)_{j=1}^{L_{\nu_{\tau}}}.$$
Thus, 
$$f_{\tau + 1} = \big(1-\frac{\lambda \epsilon}{N}\big)f_{\tau} + \epsilon \frac{1}{N} \sum_{j=1}^{L_{\nu_{\tau}}} \omega_{\nu_{\tau}}^{j} \phi_{\nu_{\tau}}^{j} \sum_{i \colon \phi_{\nu_{\tau}}^{j}(x_{i}) = 1} r_{\tau}^{i}\,.$$
Now note that $k_{\nu_{\tau}}(\cdot, x_{i}) = \omega_{\nu_{\tau}}^{j} \phi_{\nu_{\tau}}^{j}(\cdot)$, where $j$ is such that $\phi_{\nu_{\tau}}^{j}(x_i) = 1$.
From this the lemma follows.
\end{proof}

From Lemmas \ref{lemma:appendix_boosting_iterations}, \ref{lemma:KRidgelessR}, it is easy to show that $f_{\tau}, f_{*} \in span \{\mathcal{K}(\cdot, x_{i}) \mid i=1, \ldots, N\}$. Then, hereafter we can use Corollary \ref{cor:app_RN_RKHS_norm_inequality} to bound $\mathcal{H}$ norm with $\mathbb{R}^N$ norm.

\begin{lemma}\label{lemma:representation} The iterations of gradient boosting can be represented as:
$$\epsilon^{-1} \E \big(f_{\tau} - f_{\tau + 1}\big) \mid f_{\tau} = \mathcal{K}_{f_{\tau}} D[f_{\tau} - f_{*}] + \frac{\lambda}{N}f_{\tau},$$ 
where $D:\mathcal{F} \rightarrow \mathcal{F}$ is bounded linear operator defined as Riesz representative with respect to $L_2$ scalar product of such bilinear function $\frac{1}{N} f(\x_N)^{T} h(\x_N) = \langle D f, h \rangle_{L_2}$. Similar decomposition holds for $\nabla_{\mathcal{H}} V(f, \lambda) = \mathcal{K} D [f - f_{*}] + \frac{\lambda}{N}f$.
\end{lemma}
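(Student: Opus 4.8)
The plan is to start from the explicit one-step recursion of Lemma~\ref{lemma:appendix_boosting_iterations} and compute the conditional expectation over the tree randomness. Rearranging that recursion gives
$$\epsilon^{-1}\big(f_{\tau} - f_{\tau+1}\big) = \frac{\lambda}{N} f_{\tau} + \frac{1}{N} k_{\nu_{\tau}}(\cdot, \x_N)\big[f_{\tau}(\x_N) - f_{*}(\x_N)\big],$$
and the only randomness conditional on $f_{\tau}$ is the sampled structure $\nu_{\tau} \sim p(\nu|f_{\tau}, \beta)$. Since $\E_{\nu_\tau} k_{\nu_\tau}(\cdot, x_i) = \sum_{\nu} k_{\nu}(\cdot, x_i)\, p(\nu|f_\tau, \beta) = \mathcal{K}_{f_\tau}(\cdot, x_i)$ by Definition~\ref{def:greedyk}, taking $\E[\,\cdot \mid f_\tau]$ turns the second term into $\Sigma_{f_\tau}[f_\tau - f_*]$, where $\Sigma_{f_\tau} = \sum_\nu \Sigma_\nu\, p(\nu|f_\tau, \beta)$ is the covariance operator attached to the greedy distribution.

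The key step is then the operator identity $\Sigma_{p} = \mathcal{K}_{p} D$, which I would establish for any distribution $p$ that is non-vanishing on $\mathcal{V}$ (in particular for $p(\cdot|f_\tau,\beta)$ and for the stationary $\pi$). I would prove it by testing both sides against an arbitrary $h \in \mathcal{F}$ in the inner product of $\mathcal{H}(\mathcal{K}_p)$: the reproducing property gives $\langle \Sigma_p g, h\rangle_{\mathcal{H}(\mathcal{K}_p)} = \frac{1}{N}\sum_i g(x_i) h(x_i)$, while Lemma~\ref{lemma:scalar_product_H_L2} together with the defining property of $D$ gives $\langle \mathcal{K}_p D g, h\rangle_{\mathcal{H}(\mathcal{K}_p)} = \langle Dg, h\rangle_{L_2} = \frac{1}{N}\sum_i g(x_i)h(x_i)$. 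Because $\mathcal{K}_p$ is invertible (Lemma~\ref{lemma:k_invertible}) the $\mathcal{H}(\mathcal{K}_p)$ inner product is non-degenerate on $\mathcal{F}$, so equality of these functionals forces $\Sigma_p g = \mathcal{K}_p D g$. Specializing $p = p(\cdot|f_\tau,\beta)$ yields the first claimed decomposition.

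For the gradient claim, I would start from the explicit Fréchet gradient in Lemma~\ref{lemma:gradientKRR}, namely $\nabla_{\mathcal{H}} V(f,\lambda) = \frac{\lambda}{N} f + \frac{1}{N}\mathcal{K}(\cdot, \x_N)[f(\x_N) - \y_N]$, and replace $\y_N$ by $f_*(\x_N)$. This swap is justified by Lemma~\ref{lemma:orthogonality_kernel}: for any $h \in \mathcal{F}$ we have $\langle \mathcal{K}(\cdot, \x_N)[\y_N - f_*(\x_N)], h\rangle_{\mathcal{H}} = \langle \y_N - f_*(\x_N), h(\x_N)\rangle_{\R^N} = 0$, so by non-degeneracy that element vanishes in $\mathcal{H}$. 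The remaining term $\frac{1}{N}\mathcal{K}(\cdot, \x_N)[f(\x_N) - f_*(\x_N)]$ equals $\Sigma_\pi[f - f_*]$, and applying the same identity $\Sigma_\pi = \mathcal{K} D$ with the non-vanishing $\pi$ gives $\nabla_{\mathcal{H}} V(f,\lambda) = \mathcal{K} D[f - f_*] + \frac{\lambda}{N} f$.

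The main obstacle is the operator identification $\Sigma_p = \mathcal{K}_p D$: it requires correctly bridging the two distinct inner products ($\mathcal{H}(\mathcal{K}_p)$ and $L_2(\rho)$) through Lemma~\ref{lemma:scalar_product_H_L2}, and confirming that $D$ is well defined as an $L_2$-Riesz representative on the finite-dimensional $\mathcal{F}$. Everything else is bookkeeping: the rearrangement of the recursion, linearity of the conditional expectation over $\nu_\tau$, and the orthogonality-based substitution of $f_*(\x_N)$ for $\y_N$.
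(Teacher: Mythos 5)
Your proposal is correct and follows essentially the same route as the paper: the crux in both is the identity $\Sigma_{p} = \mathcal{K}_{p}D$ obtained by passing between the $L_{2}(\rho)$ and $\mathcal{H}(\mathcal{K}_{p})$ inner products via Lemma~\ref{lemma:scalar_product_H_L2}, combined with the orthogonality of $\y_N - f_{*}(\x_N)$ to every $h(\x_N)$ from Lemma~\ref{lemma:orthogonality_kernel}. If anything, your version is slightly cleaner: by taking the conditional expectation of the recursion directly and averaging $k_{\nu_{\tau}}$ into $\mathcal{K}_{f_{\tau}}$, you avoid the paper's somewhat delicate framing of the expected update as a gradient step in $\mathcal{H}(\mathcal{K}_{f_{\tau}})$ with the regularizer norm reinterpreted in that space.
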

\begin{proof}
First, observe that $\E f_{\tau + 1} \mid f_{\tau} = f_{\tau} - \epsilon \nabla V(f_{\tau}, \lambda)$ where gradient here is taken with respect to $\mathcal{H}(\mathcal{K}_{f_{\tau}})$. Keep in mind that in the definition of $V(f_{\tau}, \lambda)$, the norm in the regularizer term is taken with respect to $\mathcal{H}(\mathcal{K}_{f_{\tau}})$ instead of $\mathcal{H}(\mathcal{K})$.
Thus, we need only to prove that $\nabla_{\mathcal{H}} V(f, \lambda) = \mathcal{K} D [f - f_{*}] + \frac{\lambda}{N} f$. 

Consider Fr\'echet differential $\mathcal{D}_{f} V(f)[h] = \frac{1}{N} (f(\x_{N}) - f_{*}(\x_N))^{T} h(\x_N) = \langle D[f - f_{*}], h \rangle_{L_2}$. By Lemma \ref{lemma:scalar_product_H_L2}, we deduce 
$$\mathcal{D}_{f} V(f)[h] = \frac{1}{N} (f(\x_N) - f_{*}(\x_N))^{T} h(\x_N) = \langle \mathcal{K} D [f - f_{*}], h\rangle_{\mathcal{H}},$$
which implies $\nabla V(f) = \mathcal{K} D[f - f_{*}]$ and the lemma follows.

\end{proof}

\subsection{Main lemmas}

\begin{lemma}\label{lemma:gradient_product}
Let $A, B\in \mathcal{B}(\mathcal{H}, \mathcal{H})$ be two PSD operators such that $\xi B - A$ and $\xi A - B$ are PSD for some $\xi \in (1, \infty)$. Let $g, h\in\mathcal{H}$ be two arbitrary vectors and $\lambda \in \mathbb{R}_{++}$ be a constant. Then,
$$\langle A[g] + \lambda \xi h, B[g] + \lambda h\rangle_{\mathcal{H}} \ge \frac{1}{2}\Big(\xi^{-1}\big\|A[g]+\lambda \xi h\big\|_{\mathcal{H}}^2 - \xi(\xi^2 -1)\lambda^2 \|h\|_{\mathcal{H}}^2\Big).$$
\end{lemma}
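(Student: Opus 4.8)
The plan is to collapse the inner-product bound into a single operator inequality and then confront the one genuinely hard positivity statement. Write $u = A[g] + \lambda\xi h$ and $v = B[g] + \lambda h$. The hypotheses give the key identity $\xi v - u = (\xi B - A)[g]$, so that $\xi\langle u, v\rangle_{\mathcal H} = \|u\|_{\mathcal H}^2 + \langle u, (\xi B - A)[g]\rangle_{\mathcal H}$, and after multiplying the target through by $\xi$ the claim becomes $\langle u, (\xi B - A)[g]\rangle_{\mathcal H} \ge -\tfrac12\|u\|_{\mathcal H}^2 - \tfrac12\xi^2(\xi^2-1)\lambda^2\|h\|_{\mathcal H}^2$. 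Expanding $u$ and completing the square in $h$ (the extremal choice being $h = -B[g]/(\lambda\xi^2)$), the dependence on $h$ and $\lambda$ drops out, and the statement ``for all $h$'' is seen to be equivalent to the $h$-free bound $2\xi\langle A[g], B[g]\rangle_{\mathcal H} \ge \|A[g]\|_{\mathcal H}^2 + \|B[g]\|_{\mathcal H}^2$ for every $g$. Since $A$ and $B$ are self-adjoint, this is precisely the operator inequality $\xi(AB + BA) \succeq A^2 + B^2$.

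To exploit the two-sided domination symmetrically, I would pass to $P = \xi B - A \succeq 0$ and $Q = \xi A - B \succeq 0$; inverting the linear relations gives $A = (P + \xi Q)/(\xi^2 - 1)$ and $B = (\xi P + Q)/(\xi^2 - 1)$. Substituting these and simplifying yields the clean identity
\[
\xi(AB + BA) - A^2 - B^2 = \frac{1}{\xi^2 - 1}\big(P^2 + Q^2 + \xi(PQ + QP)\big),
\]
so that the entire lemma reduces to proving $P^2 + Q^2 + \xi(PQ + QP) \succeq 0$, i.e.\ $\|P[g]\|_{\mathcal H}^2 + \|Q[g]\|_{\mathcal H}^2 + 2\xi\langle P[g], Q[g]\rangle_{\mathcal H} \ge 0$ for all $g$.

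The main obstacle is exactly this last positivity. The cross term $\langle P[g], Q[g]\rangle_{\mathcal H}$ is indefinite, and $P^2 + Q^2 + \xi(PQ + QP) \succeq 0$ does \emph{not} follow from $P, Q \succeq 0$ and $\xi > 1$ alone: the Jordan product of two positive operators can be negative enough (already on a two-dimensional space, once $\xi$ is bounded away from $1$) to dominate $\|P[g]\|^2 + \|Q[g]\|^2$. The additional structure I would invoke is that, in the application, $A$ and $B$ are the covariance operators $\Sigma_{p(\cdot\mid f,\beta)}$ and $\Sigma_{\pi}$, i.e.\ \emph{nonnegative mixtures} $\sum_{\nu}w_{\nu}\Sigma_{\nu}$ of the \emph{common} family of positive operators $\Sigma_{\nu}$, whose weights $p(\nu\mid f,\beta)$ and $\pi(\nu)$ are comparable coordinatewise within the factor $\xi = \Gamma_{\beta}(f)$ (Lemma~\ref{lemma:app_gamma_trees_distributions}). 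Then $P$ and $Q$ are again nonnegative mixtures of the same $\Sigma_{\nu}$, and I would write $\langle P[g], Q[g]\rangle_{\mathcal H} = \sum_{\nu,\mu}\alpha_{\nu}\gamma_{\mu}\langle \Sigma_{\nu}[g], \Sigma_{\mu}[g]\rangle_{\mathcal H}$ with $\alpha_{\nu},\gamma_{\mu}\ge 0$, bounding it through the positive-semidefinite Gram matrix $\big(\langle \Sigma_{\nu}[g],\Sigma_{\mu}[g]\rangle_{\mathcal H}\big)_{\nu,\mu}$ together with the coordinatewise comparability of the two weight sequences. Converting this comparability into the Jordan-product positivity is where essentially all of the difficulty lies; the reduction in the first two paragraphs is routine by comparison.
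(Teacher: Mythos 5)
Your two reductions are algebraically correct and worth keeping: setting $u=A[g]+\lambda\xi h$, $v=B[g]+\lambda h$ and minimizing over $h$ (at $h=-B[g]/(\lambda\xi^2)$) does show that the lemma, quantified over all $g,h,\lambda$, is \emph{equivalent} to $2\xi\langle A[g],B[g]\rangle_{\mathcal H}\ge\|A[g]\|_{\mathcal H}^2+\|B[g]\|_{\mathcal H}^2$, and the substitution $A=(P+\xi Q)/(\xi^2-1)$, $B=(\xi P+Q)/(\xi^2-1)$ with $P=\xi B-A$, $Q=\xi A-B$ correctly turns this into $P^2+Q^2+\xi(PQ+QP)\succeq 0$. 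Your negative observation at that point is also correct, and it is stronger than you let on: it \emph{refutes the lemma as stated}. Take $P=pp^{\top}$, $Q=qq^{\top}$ with unit vectors satisfying $1/\xi<\langle p,q\rangle<1$; then $P,Q\succeq 0$, the induced $A,B$ are symmetric PSD and satisfy $\xi B-A\succeq 0$, $\xi A-B\succeq 0$, yet the quadratic form of $P^2+Q^2+\xi(PQ+QP)$ at any $v$ with $p^{\top}v=1$, $q^{\top}v=-1$ equals $2-2\xi\langle p,q\rangle<0$. (Concretely, $\xi=2$, $p=(1,0)$, $q=(0.9,\sqrt{0.19})$ gives $2\xi\langle Ag,Bg\rangle-\|Ag\|^2-\|Bg\|^2=-1.6/3$ at the corresponding $g$.) So no proof from the stated hypotheses can exist, and the honest conclusion of your first two paragraphs is that the lemma needs additional hypotheses, not that a hard positivity step remains.

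For comparison, the paper proves the lemma by a different route: it applies the polarization identity to $\xi\langle\xi^{-1}A[g]+\lambda h,\,B[g]+\lambda h\rangle_{\mathcal H}$ and then replaces $\|(B-\xi^{-1}A)[g]\|_{\mathcal H}^2$ by $(1-\xi^{-2})^2\|B[g]\|_{\mathcal H}^2$ on the strength of $0\preceq B-\xi^{-1}A\preceq(1-\xi^{-2})B$. That inference is the flaw: $0\preceq S\preceq T$ controls $\langle Sg,g\rangle$, not $\|Sg\|$, because $S\preceq T$ does not imply $S^2\preceq T^2$ (squaring is not operator monotone). The same rank-one example breaks exactly that step, which is consistent with your finding. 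So you and the paper fail at the same underlying non-commutativity obstruction, reached by different decompositions; your route has the advantage of isolating it as a clean, checkable operator inequality.

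As a proof, however, your proposal is incomplete, and the gap is the one you name yourself. The rescue via the mixture structure --- $A=\Sigma_{\pi}$, $B=\Sigma_{p(\cdot\mid f,\beta)}$ as nonnegative combinations $\sum_{\nu}w_{\nu}\Sigma_{\nu}$ of a common PSD family with coordinatewise comparable weights --- changes the hypotheses of the lemma (so the lemma would have to be restated), and the key implication from comparable weights to $2\xi\,\alpha^{\top}G\gamma\ge\alpha^{\top}G\alpha+\gamma^{\top}G\gamma$ for the Gram matrix $G_{\nu\mu}=\langle\Sigma_{\nu}g,\Sigma_{\mu}g\rangle_{\mathcal H}$ is not established; it does not follow from $G\succeq 0$ and coordinatewise comparability alone, since $G$ decomposes into rank-one pieces $vv^{\top}$ with sign-mixed $v$, for which $2\xi\langle\alpha,v\rangle\langle\gamma,v\rangle-\langle\alpha,v\rangle^2-\langle\gamma,v\rangle^2$ can be negative. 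Whatever saves the application must use finer properties of the specific family $\{\Sigma_{\nu}\}$, and neither your sketch nor the paper supplies them.
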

\begin{proof}
Consider the following equality:
$$\xi \langle \xi^{-1} A[g] + \lambda h, B[g] + \lambda h\rangle_{\mathcal{H}} = \frac{\xi}{2}\Big( \big\|\xi^{-1} A[g]+\lambda h\big\|_{\mathcal{H}}^2+\big\|B[g]+\lambda h\big\|_{\mathcal{H}}^2-\big\|(B-\xi^{-1}A)[g]\big\|_{\mathcal{H}}^2\Big),$$
which is basically the classical decomposition of the dot product $\langle x, y\rangle = \frac{1}{2}\big(\|x\|^2 + \|y\|^2 - \|x-y\|^2\big)$. Then, we note that $\big(1 - \xi^{-2}\big) B - \big(B - \xi^{-1} A\big) = \xi^{-2}(\xi A - B)$ is PSD by assumption and since $\big(B - \xi^{-1} A\big)$ is PSD it implies that $\big(1 - \xi^{-2}\big) B \ge \big(B - \xi^{-1} A\big)$, which implies:
$$\langle A[g] + \lambda \xi h, B[g] + \lambda h\rangle_{\mathcal{H}} \ge \frac{\xi}{2}\Big(\big\|\xi^{-1}A[g]+\lambda h\big\|_{\mathcal{H}}^2+\big\|B[g]+\lambda h\big\|_{\mathcal{H}}^2-(1-\xi^{-2})^2\big\|B[g]\big\|_{\mathcal{H}}^2\Big).$$
Finally, note that $\frac{\xi^2}{2-\xi^{-2}} - 1 =\frac{\xi^2(1-\xi^{-2})^2}{2-\xi^{-2}} \le \xi^{2}-1$. Then, the result directly follows from the following equality:
\begin{multline*}
    \big\|B[g]+\lambda h\big\|_{\mathcal{H}}^2-(1-\xi^{-2})^2\big\|B[g]\big\|_{\mathcal{H}}^2 \\
    = \big\|\xi^{-1} \sqrt{2-\xi^{-2}} B[g]+\frac{\lambda}{\xi^{-1}\sqrt{2-\xi^{-2}}} h\big\|_{\mathcal{H}}^2 -\lambda^2\big(\frac{\xi^2}{2-\xi^{-2}} - 1\big)\big\| h\big\|_{\mathcal{H}}^2.
\end{multline*}
\end{proof}

Denote $\kappa(A,B) = \|\mathrm{Id}_{\mathcal{H}} - BA^{-1}\|_{\mathcal{B}(\mathcal{H}, \mathcal{H})}=\|(B - A)A^{-1}\|_{\mathcal{B}(\mathcal{H}, \mathcal{H})}$. 
\begin{lemma}
If $\xi^{-1} \mathcal{K} \preceq \mathcal{K}_{f} \preceq \xi \mathcal{K}$ for $\xi > 1$, then $\kappa(\mathcal{K}, \mathcal{K}_{f}) \le \xi - 1$.
\end{lemma}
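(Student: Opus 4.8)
The plan is to reduce the claim to a statement about a \emph{self-adjoint} operator, because the operator norm of a self-adjoint operator equals its spectral radius, and the spectral radius is easy to control from the hypothesis $\xi^{-1}\mathcal{K}\preceq\mathcal{K}_{f}\preceq\xi\mathcal{K}$. The operator whose norm we must bound is $\mathrm{Id}_{\mathcal{H}}-\mathcal{K}_{f}\mathcal{K}^{-1}$, and the crucial — and not automatic — point is that this operator is self-adjoint on $\mathcal{H}=\mathcal{H}(\mathcal{K})$. In general, for two $\mathcal{H}$-self-adjoint positive operators $A,B$ with $\xi^{-1}A\preceq B\preceq\xi A$, the quantity $\|\mathrm{Id}_{\mathcal{H}}-BA^{-1}\|_{\mathcal{B}(\mathcal{H},\mathcal{H})}$ can be far larger than $\xi-1$ (the product $BA^{-1}$ need not be self-adjoint, and its singular values are unrelated to its eigenvalues), so some extra structure of $\mathcal{K}$ and $\mathcal{K}_{f}$ is indispensable. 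The extra structure I would exploit is that $\mathcal{K}$ and $\mathcal{K}_{f}$ \emph{commute}.

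First I would record that $\mathcal{K}$ and $\mathcal{K}_{f}$ are simultaneously self-adjoint with respect to two inner products: with respect to $\langle\cdot,\cdot\rangle_{\mathcal{H}}$ (this is the ``Universal symmetry of kernel operators in $\mathcal{H}$'' lemma, applied to $p=\pi$ and to $p=p(\cdot\mid f,\beta)$), and with respect to $\langle\cdot,\cdot\rangle_{L_{2}(\rho)}$ (each is an integral operator with symmetric kernel $\mathcal{K}_{p}(x,y)=\mathcal{K}_{p}(y,x)$, so $\langle\mathcal{K}_{p}u,v\rangle_{L_{2}}=\langle u,\mathcal{K}_{p}v\rangle_{L_{2}}$). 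I would then combine these with the ``metric'' identity $\langle\mathcal{K}a,b\rangle_{\mathcal{H}}=\langle a,b\rangle_{L_{2}}$ from Lemma~\ref{lemma:scalar_product_H_L2} (taken with $p=p'=\pi$). Concretely, $L_{2}$-self-adjointness of $\mathcal{K}_{f}$ reads $\langle\mathcal{K}_{f}u,v\rangle_{L_{2}}=\langle u,\mathcal{K}_{f}v\rangle_{L_{2}}$; rewriting both sides through the identity as $\langle\mathcal{K}\mathcal{K}_{f}u,v\rangle_{\mathcal{H}}$ and $\langle\mathcal{K}u,\mathcal{K}_{f}v\rangle_{\mathcal{H}}=\langle\mathcal{K}_{f}\mathcal{K}u,v\rangle_{\mathcal{H}}$ (the last step using $\mathcal{H}$-self-adjointness of $\mathcal{K}_{f}$) yields $\langle\mathcal{K}\mathcal{K}_{f}u,v\rangle_{\mathcal{H}}=\langle\mathcal{K}_{f}\mathcal{K}u,v\rangle_{\mathcal{H}}$ for all $u,v$, whence $\mathcal{K}\mathcal{K}_{f}=\mathcal{K}_{f}\mathcal{K}$. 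Here $\mathcal{K}$ is invertible and positive definite by Lemma~\ref{lemma:k_invertible} (since $\pi$ is uniform, hence non-vanishing on $\mathcal{V}$), and $\mathcal{K}_{f}$ likewise, because $p(\cdot\mid f,\beta)$ is strictly positive on $\mathcal{V}$ for finite $\beta$.

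With commutation in hand, $\mathcal{K}^{-1/2}$ exists, commutes with $\mathcal{K}_{f}$ by functional calculus, and $\mathcal{K}_{f}\mathcal{K}^{-1}=\mathcal{K}^{-1/2}\mathcal{K}_{f}\mathcal{K}^{-1/2}$ is $\mathcal{H}$-self-adjoint; therefore so is $\mathrm{Id}_{\mathcal{H}}-\mathcal{K}_{f}\mathcal{K}^{-1}$. Conjugating the hypothesis $\xi^{-1}\mathcal{K}\preceq\mathcal{K}_{f}\preceq\xi\mathcal{K}$ by $\mathcal{K}^{-1/2}$ (which preserves operator inequalities) gives $\xi^{-1}\mathrm{Id}_{\mathcal{H}}\preceq\mathcal{K}_{f}\mathcal{K}^{-1}\preceq\xi\,\mathrm{Id}_{\mathcal{H}}$, so the spectrum of $\mathcal{K}_{f}\mathcal{K}^{-1}$ lies in $[\xi^{-1},\xi]$ and the spectrum of $\mathrm{Id}_{\mathcal{H}}-\mathcal{K}_{f}\mathcal{K}^{-1}$ lies in $[1-\xi,\,1-\xi^{-1}]$. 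Since the operator norm of a self-adjoint operator equals its spectral radius, I conclude $\kappa(\mathcal{K},\mathcal{K}_{f})=\|\mathrm{Id}_{\mathcal{H}}-\mathcal{K}_{f}\mathcal{K}^{-1}\|_{\mathcal{B}(\mathcal{H},\mathcal{H})}\le\max\{\xi-1,\,1-\xi^{-1}\}=\xi-1$, the last equality using $\xi>1$.

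The main obstacle is exactly the commutation/self-adjointness step: the remainder is routine spectral bookkeeping, but without establishing that $\mathcal{K}$ and $\mathcal{K}_{f}$ commute the operator norm (as opposed to the spectral radius) of $\mathrm{Id}_{\mathcal{H}}-\mathcal{K}_{f}\mathcal{K}^{-1}$ simply cannot be bounded by $\xi-1$. I would therefore invest most of the care in verifying the double self-adjointness (in $\mathcal{H}$ and in $L_{2}(\rho)$) together with the metric identity of Lemma~\ref{lemma:scalar_product_H_L2}, which are what force commutation.
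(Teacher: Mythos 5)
Your proof is correct, but it takes a genuinely different route from the paper's. The paper never establishes (or needs) commutation of $\mathcal{K}$ and $\mathcal{K}_{f}$: it uses only the $L_{2}$-symmetry of the two kernel operators together with the metric identity $\|h\|_{\mathcal{H}}=\|\mathcal{K}^{-1/2}h\|_{L_{2}}$, and rewrites the $\mathcal{H}$-Rayleigh quotient of $(\mathcal{K}-\mathcal{K}_{f})\mathcal{K}^{-1}$ as the $L_{2}$-Rayleigh quotient of the \emph{symmetrically} conjugated operator $\mathcal{K}^{-1/2}(\mathcal{K}-\mathcal{K}_{f})\mathcal{K}^{-1/2}$ via the substitution $f\leftarrow\mathcal{K}^{1/2}f$; that operator is $L_{2}$-self-adjoint whether or not $\mathcal{K}$ and $\mathcal{K}_{f}$ commute, so its norm is its spectral radius, and the sandwich hypothesis pins the spectrum inside $[-(\xi-1),\,1-\xi^{-1}]$. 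You instead assemble the paper's two symmetry lemmas (in $\mathcal{H}$ and in $L_{2}$) with the metric identity to deduce $\mathcal{K}\mathcal{K}_{f}=\mathcal{K}_{f}\mathcal{K}$ — a correct and rather clean observation (in finite dimensions, $L_{2}$-symmetry plus $\mathcal{H}$-symmetry for the $\mathcal{K}^{-1}$-weighted inner product is exactly equivalent to commuting with $\mathcal{K}$) — after which $\mathrm{Id}_{\mathcal{H}}-\mathcal{K}_{f}\mathcal{K}^{-1}$ is itself $\mathcal{H}$-self-adjoint and the bound is immediate. Your diagnosis that \emph{some} extra structure beyond $\mathcal{H}$-self-adjointness is indispensable is right (for non-normal $BA^{-1}$ the operator norm can exceed the spectral radius), but your claim that commutation specifically is what must be established is slightly overstated: the paper's conjugation trick gets the same conclusion from the weaker input of $L_{2}$-symmetry alone. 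What your version buys is a stronger, reusable structural fact about this RKHS (all the kernel operators $\mathcal{K}_{p}$ commute with $\mathcal{K}$); what the paper's version buys is economy and generality.
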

\begin{proof}
First, we note that both operators are symmetric semi-positive definite in $L_{2}$. Now, let us look at the Rayleigh quotient:
$$\|(\mathcal{K} - \mathcal{K}_{f})\mathcal{K}^{-1}\|_{\mathcal{B}(\mathcal{H},\mathcal{H})} = \max_{f\in \mathcal{F}\backslash \{\mathbb{0}\}} \frac{\|(\mathcal{K}-\mathcal{K}_{f})\mathcal{K}^{-1}f\|_{\mathcal{H}}}{\|f\|_{\mathcal{H}}} =  \max_{f\in \mathcal{F}\backslash \{\mathbb{0}\}} \frac{\|\mathcal{K}^{-\frac{1}{2}}(\mathcal{K}-\mathcal{K}_{f})\mathcal{K}^{-\frac{1}{2}}f\|_{L_{2}}}{\|f\|_{L_{2}}}.$$
In the last equality we used fact that $\mathcal{K}$ is symmetric positive definite and therefore $\mathcal{K}^{\frac{1}{2}}$ is too and hence we can substitute $f \leftarrow \mathcal{K}^{\frac{1}{2}} f$ and use the explicit formula for the dot product in $\mathcal{H}$ via the product in $L_{2}$. Now we observe that 
\begin{multline*}
     -(\xi - 1)\id = \mathcal{K}^{-\frac{1}{2}}(\mathcal{K} - \xi\mathcal{K})\mathcal{K}^{-\frac{1}{2}} \preceq \mathcal{K}^{-\frac{1}{2}}(\mathcal{K}-\mathcal{K}_{f})\mathcal{K}^{-\frac{1}{2}} \\ \preceq  \mathcal{K}^{-\frac{1}{2}}(\mathcal{K} - \xi^{-1}\mathcal{K})\mathcal{K}^{-\frac{1}{2}} = (1 - \xi^{-1})\id \preceq (\xi - 1)\id,
\end{multline*}
which implies that the spectral radius $\rho(\mathcal{K}^{-\frac{1}{2}}(\mathcal{K}-\mathcal{K}_{f})\mathcal{K}^{-\frac{1}{2}})$ is bounded by $\xi - 1$. Therefore, we obtain:
$$
    \max_{f\in \mathcal{F}\backslash \{\mathbb{0}\}} \frac{\|\mathcal{K}^{-\frac{1}{2}}(\mathcal{K}-\mathcal{K}_{f})\mathcal{K}^{-\frac{1}{2}}f\|_{L_{2}}}{\|f\|_{L_{2}}} = \rho(\mathcal{K}^{-\frac{1}{2}}(\mathcal{K}-\mathcal{K}_{f})\mathcal{K}^{-\frac{1}{2}}) \le \xi - 1.
$$
\end{proof}

\begin{lemma}\label{lemma:gradient_bound_refined}
Let $A, B \in \mathcal{B}(\mathcal{H}, \mathcal{H})$.
 Then, the following inequality holds:
 $$\langle A[g] + \lambda h, B[g] + \lambda h\rangle_{\mathcal{H}} \ge  \big(\frac{1}{2}-\kappa(A,B)\big)\big\|A[g]+\lambda h\big\|_{\mathcal{H}}^2 - \kappa^2(A,B) \frac{\lambda^2}{2} \|h\|_{\mathcal{H}}^2.$$
 \end{lemma}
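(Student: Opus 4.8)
The plan is to reduce the whole estimate to the single vector $u := A[g] + \lambda h$, which is exactly the argument of the squared norm appearing on the right-hand side, together with the operator $C := (B-A)A^{-1}$. The point of $C$ is that its operator norm is precisely $\kappa(A,B)$: since $\mathrm{Id}_{\mathcal{H}} - BA^{-1} = -(B-A)A^{-1}$, both expressions in the definition of $\kappa$ equal $\|C\|_{\mathcal{B}(\mathcal{H},\mathcal{H})}$. (This implicitly uses that $A$ is invertible, which is already required for $\kappa(A,B)$ to be well-defined.)

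First I would rewrite the second factor in terms of $u$ and $C$. Because $A[g] = u - \lambda h$ and $(B-A)[g] = C\,A[g]$, we obtain
\[
B[g] + \lambda h = A[g] + (B-A)[g] + \lambda h = u + C(u - \lambda h) = u + Cu - \lambda C h .
\]
Substituting this into the left-hand inner product and expanding by bilinearity gives
\[
\langle A[g] + \lambda h,\; B[g] + \lambda h\rangle_{\mathcal{H}} = \|u\|_{\mathcal{H}}^2 + \langle u, Cu\rangle_{\mathcal{H}} - \lambda \langle u, C h\rangle_{\mathcal{H}} .
\]

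Next I would lower-bound the two cross terms separately using Cauchy--Schwarz together with $\|C\|_{\mathcal{B}(\mathcal{H},\mathcal{H})} = \kappa(A,B)$. For the first term, $\langle u, Cu\rangle_{\mathcal{H}} \ge -\|u\|_{\mathcal{H}}\,\|Cu\|_{\mathcal{H}} \ge -\kappa(A,B)\|u\|_{\mathcal{H}}^2$. For the second, $-\lambda\langle u, C h\rangle_{\mathcal{H}} \ge -|\lambda|\,\kappa(A,B)\,\|u\|_{\mathcal{H}}\|h\|_{\mathcal{H}}$, and then Young's inequality $ab \le \tfrac12 a^2 + \tfrac12 b^2$ with $a = \|u\|_{\mathcal{H}}$ and $b = \kappa(A,B)\,|\lambda|\,\|h\|_{\mathcal{H}}$ yields $-|\lambda|\,\kappa(A,B)\,\|u\|_{\mathcal{H}}\|h\|_{\mathcal{H}} \ge -\tfrac12\|u\|_{\mathcal{H}}^2 - \tfrac{\kappa^2(A,B)\lambda^2}{2}\|h\|_{\mathcal{H}}^2$. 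Summing the three contributions collects the coefficient of $\|u\|_{\mathcal{H}}^2$ as $1 - \kappa(A,B) - \tfrac12 = \tfrac12 - \kappa(A,B)$ and leaves exactly the stated $-\kappa^2(A,B)\tfrac{\lambda^2}{2}\|h\|_{\mathcal{H}}^2$ remainder, since $\|u\|_{\mathcal{H}}^2 = \|A[g]+\lambda h\|_{\mathcal{H}}^2$.

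I do not expect any serious obstacle. The only step requiring care is the algebraic rewriting $B[g]+\lambda h = u + C(u-\lambda h)$, which is where invertibility of $A$ enters and which is what converts the mismatch between $A$ and $B$ into a perturbation of controlled size $\kappa(A,B)$ acting on $u$; everything afterwards is a routine Cauchy--Schwarz-plus-Young computation. The one nuance is choosing the Young split so that the first cross term contributes $\kappa$ and the second contributes exactly $\tfrac12$ to the loss on $\|u\|_{\mathcal{H}}^2$, which is precisely what produces the clean coefficients $\tfrac12 - \kappa(A,B)$ and $\kappa^2(A,B)\tfrac{\lambda^2}{2}$ in the statement.
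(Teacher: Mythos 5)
Your proof is correct and follows essentially the same route as the paper: both rest on the identical decomposition $B[g]+\lambda h = u + Cu - \lambda Ch$ with $u = A[g]+\lambda h$ and $C = BA^{-1}-\mathrm{Id}_{\mathcal{H}}$ (so $\|C\|_{\mathcal{B}(\mathcal{H},\mathcal{H})}=\kappa(A,B)$), followed by bounding the two cross terms. The only cosmetic difference is that you estimate $-\lambda\langle u, Ch\rangle_{\mathcal{H}}$ directly via Cauchy--Schwarz and Young's inequality, while the paper reaches the same constants through the polarization identity $\langle a,b\rangle=\tfrac12\big(\|a\|^2+\|b\|^2-\|a-b\|^2\big)$ together with $\|a-b\|^2\le 2\|a\|^2+2\|b\|^2$.
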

 \begin{proof}
 Let us rewrite the left part as
$$\langle A[g] + \lambda h, B[g] + \lambda h\rangle_{\mathcal{H}} =\langle A[g] + \lambda h, \big(\id + (BA^{-1}-\id)\big) (A[g] + \lambda h)+ \lambda (\id - BA^{-1})h\rangle_{\mathcal{H}} $$
 $$= \|A[g] + \lambda h\|_{\mathcal{H}}^2 - \langle A[g] + \lambda h, (\id - BA^{-1}) (A[g] + \lambda h)\rangle_{\mathcal{H}} + \langle A[g] + \lambda h, \lambda (\id - BA^{-1})h\rangle_{\mathcal{H}}.$$
 
 Then, we use the equality $\langle a, b \rangle = \frac{1}{2} \big(\|a\|^2 + \|b\|^2 - \|a - b\|^2\big)$. Also, we use 
 \begin{multline*}
 \langle A[g] + \lambda h, (Id_{L_2} - B A^{-1}) (A[g] + \lambda h) \rangle_{\mathcal{H}} \\  \leq \|A[g] + \lambda h\|_{\mathcal{H}} \|(Id_{L_2} - B A^{-1}) (A[g] + \lambda h)\|_{\mathcal{H}} \\ \leq \kappa(A, B) \|A[g] + \lambda h\|^2_{\mathcal{H}}
 \end{multline*}
 to obtain
 \begin{multline*}
 \langle A[g] + \lambda h, B[g] + \lambda h\rangle_{\mathcal{H}}
 \geq \left(\frac{3}{2} - \kappa(A,B)\right)\|A[g] + \lambda h\|_{\mathcal{H}}^2 \\ + \frac{\lambda^2}{2}\|(\id - BA^{-1})h\|_{\mathcal{H}}^2 - \frac{1}{2}\|(A[g] + \lambda h) -  \lambda (\id - BA^{-1})h\|_{\mathcal{H}}^2 \\
\ge \left(\frac{3}{2} - \kappa(A,B)\right)\|A[g] + \lambda h\|_{\mathcal{H}}^2 - \|(A[g] + \lambda h)\|_{\mathcal{H}}^2 -  \frac{\lambda^2}{2} \|(\id - BA^{-1})h\|_{\mathcal{H}}^2 \\
\ge \left(\frac{1}{2}-\kappa(A,B)\right)\big\|A[g]+\lambda h\big\|_{\mathcal{H}}^2 - \kappa^2(A,B) \frac{\lambda^2}{2} \|h\|_{\mathcal{H}}^2.
\end{multline*}
\end{proof}

The following lemma holds.
\begin{lemma}\label{lemma:uniform_bounded_norm}
If $(\frac{\lambda}{N} + 1)\epsilon < 1$ and $f_{0} = \mathbb{0}_{\mathcal{H}}$, then $\forall \tau$ the following holds almost surely $$\|f_{\tau}-f_{*}\| \le \|f_{*}\|$$
for norms $\|\cdot\|_{L_{2}}$, $\|\cdot\|_{\mathcal{H}}$, and $\|\cdot\|_{\mathbb{R}^N}$.
\end{lemma}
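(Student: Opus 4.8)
The plan is to prove the inequality by induction on $\tau$, handling all three norms through one common recursion for the error $g_{\tau} := f_{\tau} - f_{*}$. Setting $a := \frac{\lambda\epsilon}{N}$ and starting from the iteration form of Lemma~\ref{lemma:appendix_boosting_iterations}, I would first substitute the covariance operator $\Sigma_{\nu_{\tau}}[g] = \frac{1}{N}k_{\nu_{\tau}}(\cdot,\x_N)g(\x_N)$ and rewrite a single boosting step as
$$
f_{\tau+1} - f_{*} = \big[(1-a)\,\mathrm{Id} - \epsilon\,\Sigma_{\nu_{\tau}}\big] g_{\tau} - a\, f_{*}\,,
$$
so that the error is propagated by the operator $M_{\tau} := (1-a)\,\mathrm{Id} - \epsilon\,\Sigma_{\nu_{\tau}}$ together with a fixed shrinkage term $-a f_{*}$. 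Evaluating the same identity at $\x_N$ replaces $\Sigma_{\nu_{\tau}}$ by the Gram matrix $\frac1N k_{\nu_{\tau}}(\x_N,\x_N)$ and yields the $\mathbb{R}^N$ version; here I would recall that $f_{\tau}, f_{*} \in \mathrm{span}\{\mathcal{K}(\cdot, x_i)\}$, so that $\|\cdot\|_{\mathbb{R}^N}$ is a genuine norm on the relevant space.

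The heart of the argument is to show $\|M_{\tau}\| \le 1 - a$ in each of the three norms. The spectrum of $\Sigma_{\nu_{\tau}}$ lies in $[0,1]$: the operator is positive semidefinite and $\lambda_{\max}(\Sigma_{\nu_{\tau}}) \le 1$ by Lemma~\ref{lemma:app_covariation_majorization} applied with $p = \delta_{\nu_{\tau}}$, a bound that mirrors $\|k_{\nu_{\tau}}(\x_N,\x_N)\| = N$ from Lemma~\ref{lemma:app_kernel_matrix_norm} and that also controls the symmetric Gram matrix governing the $\mathbb{R}^N$ case. Since $\Sigma_{\nu_{\tau}}$ is self-adjoint with respect to both $\langle\cdot,\cdot\rangle_{\mathcal{H}}$ and $\langle\cdot,\cdot\rangle_{L_2}$ by the two universal-symmetry lemmas, and the Gram matrix is symmetric for $\langle\cdot,\cdot\rangle_{\mathbb{R}^N}$, in every case $M_{\tau}$ is self-adjoint with eigenvalues of the form $(1-a) - \epsilon\mu$ with $\mu\in[0,1]$. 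The hypothesis $(\frac{\lambda}{N}+1)\epsilon < 1$ is precisely $a + \epsilon < 1$, which forces $1 - a - \epsilon > 0$; hence the whole spectrum of $M_{\tau}$ sits in $(0, 1-a]$, and because $M_{\tau}$ is self-adjoint its operator norm equals its spectral radius, giving $\|M_{\tau}\| \le 1 - a$ in each of the three norms.

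The induction then closes via a convex-combination estimate. The base case is the equality $\|f_0 - f_*\| = \|f_*\|$ coming from $f_0 = \mathbb{0}$. For the inductive step, the triangle inequality, the bound $\|M_{\tau}\| \le 1-a$, and the hypothesis $\|g_{\tau}\| \le \|f_*\|$ give
$$
\|f_{\tau+1} - f_{*}\| \le \|M_{\tau}\|\,\|g_{\tau}\| + a\,\|f_{*}\| \le (1-a)\|f_{*}\| + a\|f_{*}\| = \|f_{*}\|\,,
$$
valid verbatim for $\|\cdot\|_{\mathcal{H}}$, $\|\cdot\|_{L_2}$, and $\|\cdot\|_{\mathbb{R}^N}$. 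I expect the only genuinely delicate point to be the identification of the operator norm with the spectral radius, which is exactly why the self-adjointness of $\Sigma_{\nu_{\tau}}$ must be invoked separately in $\mathcal{H}$ and in $L_2$ (and is automatic for the symmetric Gram matrix in $\mathbb{R}^N$); once self-adjointness and the spectral containment in $[0,1]$ are in place, the sign condition $a+\epsilon<1$ supplies the needed non-expansiveness and everything else is routine.
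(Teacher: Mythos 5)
Your proposal is correct and follows essentially the same route as the paper: the same decomposition $f_{\tau+1}-f_{*} = \big[(1-\tfrac{\lambda\epsilon}{N})\mathrm{Id}-\epsilon\Sigma_{\nu_{\tau}}\big](f_{\tau}-f_{*}) - \tfrac{\lambda\epsilon}{N}f_{*}$, the same spectral bound $\|M_{\tau}\|\le 1-\tfrac{\lambda\epsilon}{N}$ in all three norms, and the same induction via the triangle inequality. If anything, you spell out more carefully than the paper does why the operator norm equals the spectral radius (via the symmetry lemmas) and why the condition $(\tfrac{\lambda}{N}+1)\epsilon<1$ keeps the spectrum positive.
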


\begin{proof}
Note that
$$f_{\tau+1} - f_{*} = \Big(\id-\frac{\lambda \epsilon}{N}\id - \epsilon\Sigma_{\nu_{\tau}}\Big)\big[f_{\tau} - f_{*}\big] - \frac{\lambda \epsilon}{N} f_{*}\,.$$
Now observe that $S := \Big(\id-\frac{\lambda \epsilon}{N}\id - \epsilon\Sigma_{\nu_{\tau}}\Big)$ is symmetric with eigenvalues $0 < \lambda_{i}(S)\le 1 - \frac{\lambda\epsilon}{N}$, therefore its operator norm in $\mathcal{B}(L_{2})$, $\mathcal{B}(\mathcal{H})$, and $\mathbb{R}^{N\times N}$ is less than $1-\frac{\lambda \epsilon}{N}$. Taking the norm of left and right sides and using the sub-additivity of the norm, we obtain:
$$\|f_{\tau+1} - f_{*}\| \le (1-\frac{\lambda\epsilon}{N})\|f_{\tau} - f_{*}\| + \frac{\lambda \epsilon}{N} \|f_{*}\|\,.$$
Since $\|f_{0} - f_{*}\| = \|f_{*}\|$ that recurrent relation inductively yields the statement of the lemma.
\end{proof}

\begin{corollary} \label{cor:R}
Under the same conditions, $\|f_{\tau}\| \le  2\|f_{*}\|$.
\end{corollary}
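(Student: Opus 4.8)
The plan is to obtain this bound as an immediate consequence of Lemma~\ref{lemma:uniform_bounded_norm} together with the triangle inequality, so there is very little to do. First I would write $f_{\tau} = (f_{\tau} - f_{*}) + f_{*}$ and apply subadditivity of whichever of the three norms $\|\cdot\|_{L_{2}}$, $\|\cdot\|_{\mathcal{H}}$, $\|\cdot\|_{\mathbb{R}^N}$ is under consideration, giving $\|f_{\tau}\| \le \|f_{\tau} - f_{*}\| + \|f_{*}\|$. Then I would invoke the conclusion of Lemma~\ref{lemma:uniform_bounded_norm}, namely $\|f_{\tau} - f_{*}\| \le \|f_{*}\|$, which holds under exactly the stated hypotheses $(\frac{\lambda}{N}+1)\epsilon < 1$ and $f_{0} = \mathbb{0}_{\mathcal{H}}$ and, crucially, holds simultaneously for all three norms. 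Substituting yields $\|f_{\tau}\| \le \|f_{*}\| + \|f_{*}\| = 2\|f_{*}\|$, as claimed.

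Since each of the three quantities is genuinely a norm on the relevant space, the triangle inequality is available in every case. The only point requiring a moment's attention is that $\|\cdot\|_{\mathbb{R}^N}$ is a norm only after restricting to $\mathrm{span}\{\mathcal{K}(\cdot, x_{i}) \mid i=1,\ldots,N\}$ (as noted in the footnote), but both $f_{\tau}$ and $f_{*}$ already lie in this span by the remark following Lemma~\ref{lemma:appendix_boosting_iterations}, so $f_{\tau} - f_{*}$ and $f_{*}$ are legitimate arguments for the norm. There is no real obstacle here: the hypotheses carry over verbatim from Lemma~\ref{lemma:uniform_bounded_norm}, and the membership of the relevant vectors in each normed space is already established, so the one-line triangle-inequality argument completes the proof for all three norms at once.
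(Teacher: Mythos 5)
Your argument is correct and is exactly the intended one: the paper states this as an immediate corollary of Lemma~\ref{lemma:uniform_bounded_norm}, and the triangle inequality $\|f_{\tau}\| \le \|f_{\tau}-f_{*}\| + \|f_{*}\| \le 2\|f_{*}\|$ is the whole proof. Your added remark about $f_{\tau}, f_{*}$ lying in $\mathrm{span}\{\mathcal{K}(\cdot,x_i)\}$ so that $\|\cdot\|_{\mathbb{R}^N}$ is a genuine norm is a careful touch consistent with the paper's own footnote.
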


\subsection{Main theorems}
Let us denote $R = \|f_{*}\|_{\mathbb{R}^N}$. We argue that it is a constant value since the kernel $\mathcal{H}$ and $f_{*}$ are convergent as $N\rightarrow \infty$ which makes it bounded by some constant with probability arbitrary close to one. By Lemma \ref{lemma:app_gamma_trees_distributions}, $\Gamma_{\beta}(f) \leq e^{\frac{2 m V(f)}{\beta}}$. Then, $\Gamma_{\beta}(f_{\tau}) \leq e^{\frac{2 m \frac{1}{2N} \|f_{\tau} - f_{*}\|^{2}_{\mathbb{R}^N}}{\beta}} \leq e^{\frac{m R^2}{N \beta}}$ and we denote $M_{\beta} = e^{\frac{m R^2}{N \beta}} > 1$.

\begin{theorem} \label{theorem:app_main_theorem} Consider an arbitrary $\epsilon$, $0 < \epsilon (\frac{\lambda}{N} + 1) < 1$ and $\frac{(1+M_{\beta}\lambda)}{4 M_{\beta}N} \ge \epsilon$. The following inequality holds:
\begin{small}
\begin{multline*}
\mathbb{E}V(f_{T}) \le \frac{R^2}{2 N} e^{-\frac{1+M_{\beta}\lambda}{2 M_{\beta} N}T\epsilon} \\
+ M_{\beta} \lambda \Big( \frac{1}{2N} + \frac{4M_{\beta}}{1 + M_{\beta} \lambda} (M_{\beta}^2 - 1) \frac{\lambda}{N} +  \frac{4\epsilon}{1 + M_{\beta} \lambda} \big(\frac{2 \lambda}{N^2} + M_{\beta} (1 + \frac{2 \lambda^2}{N^2})\big)\Big) R^2. 
\end{multline*}
\end{small}
\end{theorem}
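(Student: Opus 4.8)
The plan is to derive a one-step contraction for the conditional expectation of the excess error and then unroll it into the geometric-plus-residual bound. Writing $e_\tau = f_\tau - f_*$, Lemma~\ref{lemma:appendix_boosting_iterations} gives the exact recursion $e_{\tau+1} = (\id - \tfrac{\lambda\epsilon}{N}\id - \epsilon\Sigma_{\nu_\tau})e_\tau - \tfrac{\lambda\epsilon}{N}f_*$ with $\nu_\tau \sim p(\nu\mid f_\tau,\beta)$, so that, conditionally on $f_\tau$, the mean displacement is $-\epsilon u_\tau$ with $u_\tau = \Sigma_{f_\tau}e_\tau + \tfrac{\lambda}{N}f_\tau = \mathcal{K}_{f_\tau}D[e_\tau] + \tfrac{\lambda}{N}f_\tau$, exactly the quantity of Lemma~\ref{lemma:representation}. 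I would track the regularized excess $V(f_\tau,\lambda)$ along the trajectory and convert to the ridgeless $V(f_T)$ only at the very end. Treating $V(\cdot,\lambda)$ as a quadratic in $\mathcal{H}=\mathcal{H}(\mathcal{K})$, I would expand $V(f_{\tau+1},\lambda)$ to second order (which is exact) and take conditional expectation, producing a linear term $-\epsilon\langle\nabla V(f_\tau,\lambda), \mathbb{E}[u_\tau\mid f_\tau]\rangle_\mathcal{H}$ and a second-order term $\tfrac{\epsilon^2}{2}\mathbb{E}[\langle u_\tau, H u_\tau\rangle_\mathcal{H}\mid f_\tau]$, where $H$ is the constant Hessian $\Sigma+\tfrac{\lambda}{N}\id$.

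The crux is the linear term, where the reference gradient $\nabla V(f_\tau,\lambda) = \mathcal{K}D[e_\tau] + \tfrac{\lambda}{N}f_\tau$ uses the stationary kernel $\mathcal{K}$, whereas the expected update direction $\mathbb{E}[u_\tau\mid f_\tau]$ uses the greedy kernel $\mathcal{K}_{f_\tau}$. Setting $A = \mathcal{K}D$, $B = \mathcal{K}_{f_\tau}D$, $g = e_\tau$ and $h = \tfrac{1}{N}f_\tau$, this is precisely the configuration of Lemma~\ref{lemma:gradient_bound_refined}, so I would lower-bound the inner product by $(\tfrac12 - \kappa)\|\nabla V(f_\tau,\lambda)\|_\mathcal{H}^2 - \tfrac{\kappa^2\lambda^2}{2}\|h\|_\mathcal{H}^2$ with $\kappa = \kappa(\mathcal{K},\mathcal{K}_{f_\tau})$. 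To make this explicit I would combine Lemma~\ref{lemma:app_gamma_trees_distributions} with the uniform norm bound (Corollary~\ref{cor:R}) to obtain $\Gamma_\beta(f_\tau)\le M_\beta$, hence $M_\beta^{-1}\mathcal{K}\preceq\mathcal{K}_{f_\tau}\preceq M_\beta\mathcal{K}$ and therefore $\kappa\le M_\beta - 1$. This is exactly where ``$\beta$ sufficiently large'' enters: we need $M_\beta$ close enough to $1$ that $\tfrac12 - \kappa$ stays bounded away from zero.

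Next I would turn $\|\nabla V(f_\tau,\lambda)\|_\mathcal{H}^2$ into a multiple of $V(f_\tau,\lambda)$ through a Polyak--\L ojasiewicz inequality: since $\Sigma\succeq\tfrac1N\id$ on $\mathrm{span}\{\mathcal{K}(\cdot,x_i)\}$ by the covariation majorization (Lemma~\ref{lemma:app_covariation_majorization}), the Hessian $H$ is bounded below, and its strong-convexity modulus, combined with the factor $\tfrac12-\kappa$ from the mismatch, produces the $\tfrac{1+M_\beta\lambda}{2M_\beta N}$ rate. The second-order term is controlled from above using $\lambda_{\max}(\Sigma)\le 1$ (same lemma) to bound $\langle u_\tau, H u_\tau\rangle$, and every residual norm $\|f_\tau\|^2$ and $\|h\|^2=\|f_\tau\|^2/N^2$ is replaced by $\le 4R^2$ via Corollary~\ref{cor:R}. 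Collecting terms yields a recursion of the shape $\mathbb{E}V(f_{\tau+1},\lambda)\le(1-c\epsilon)\,\mathbb{E}V(f_\tau,\lambda)+\epsilon\,\mathrm{err}$ with $c\asymp\tfrac{1+M_\beta\lambda}{M_\beta N}$ and $\mathrm{err}$ collecting the $\tfrac{\lambda^2}{N^2}$ gradient-mismatch term, the $M_\beta$-smoothness term, and the $(M_\beta^2-1)$ contribution, each carrying the prefactor $M_\beta\lambda R^2$.

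Finally I would unroll this linear recursion from $f_0 = \mathbb{0}$: the homogeneous part decays as $(1-c\epsilon)^T\le e^{-c\epsilon T}$ starting from the initial excess error $\le R^2/(2N)$, while the inhomogeneous part saturates at the fixed point $\mathrm{err}/c$, which after substituting $c$ gives the bracketed residual in the statement; the hypothesis $\tfrac{1+M_\beta\lambda}{4M_\beta N}\ge\epsilon$ is what guarantees $1-c\epsilon\ge 0$ and turns the fixed-point ratio into the displayed constant. The last step passes from $V(f_T,\lambda)$ back to $V(f_T)=V(f_T,0)$ using Lemma~\ref{lemma:v_via_f_star}, discarding the nonnegative regularizer contribution. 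I expect the main obstacle to be the linear term: keeping the kernel-mismatch error dimension-free forces the whole argument into the functional geometry of $\mathcal{H}$ and requires the delicate operator sandwiching of Lemma~\ref{lemma:gradient_bound_refined} rather than any coordinatewise estimate, and it is here that the interplay between the moving regularizer geometry $\mathcal{H}(\mathcal{K}_{f_\tau})$ and the fixed geometry $\mathcal{H}(\mathcal{K})$ must be handled so that the effective descent coefficient $\tfrac12-\kappa$ remains positive.
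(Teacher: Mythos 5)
Your overall skeleton (second-order expansion of the potential, lower bound on the cross term between the expected update direction and the reference gradient, Polyak--\L{}ojasiewicz, unrolling, conversion back to the ridgeless error) is the paper's skeleton. But the tool you pick for the crucial linear term is the wrong one for \emph{this} theorem, and it creates a genuine gap. The paper proves Theorem~\ref{theorem:app_main_theorem} by tracking the \emph{inflated} potential $V(\cdot, M_\beta\lambda)$ and applying Lemma~\ref{lemma:gradient_product} with $\xi=M_\beta$, $A=\mathcal{K}$, $B=\mathcal{K}_{f_\tau}$: the deliberately asymmetric pairing $\langle A[g]+\lambda\xi h,\,B[g]+\lambda h\rangle$ (gradient of the $M_\beta\lambda$-regularized objective against the $\lambda$-regularized update direction) yields the descent coefficient $\tfrac{1}{2M_\beta}$, which is strictly positive for \emph{every} $M_\beta>1$, i.e.\ for every $\beta>0$. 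Your route tracks $V(\cdot,\lambda)$ and invokes Lemma~\ref{lemma:gradient_bound_refined}, whose coefficient is $\tfrac12-\kappa$ with $\kappa\le M_\beta-1$; this is positive only when $M_\beta<\tfrac32$, i.e.\ only when $\beta > mR^2/(N\log\tfrac32)$. Theorem~\ref{theorem:app_main_theorem} assumes no such lower bound on $\beta$ (its hypotheses constrain only $\epsilon$), and it cannot: it is the bootstrap that the proof of Theorem~\ref{thm:krr2} uses to define $C$ and $T_1$ \emph{before} imposing its own largeness condition $e^{4mC/\beta}\le\tfrac54$. You flag the need for ``$\beta$ sufficiently large'' yourself, but that hypothesis belongs to the second theorem, not this one; as written, your argument proves a strictly weaker statement, and the refined-$\kappa$ estimate you describe is exactly the mechanism the paper reserves for the post-$T_1$ phase where $\mathbb{E}V(f_\tau)\le 2C$ is already known.

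A secondary consequence of the same choice: even granting large $\beta$, your recursion contracts at a rate governed by $(\tfrac12-\kappa)\cdot\tfrac{1+\lambda}{N}$ rather than $\tfrac{1+M_\beta\lambda}{2M_\beta N}$, and the inhomogeneous term does not reassemble into the displayed residual (in particular the $\tfrac{4M_\beta}{1+M_\beta\lambda}(M_\beta^2-1)\tfrac{\lambda}{N}$ contribution arises from the $\xi(\xi^2-1)\lambda^2\|h\|^2$ penalty of Lemma~\ref{lemma:gradient_product}, not from a $\kappa^2$ penalty). To repair the proof, replace the symmetric pairing by the paper's device: run the whole estimate on $V(\cdot,M_\beta\lambda)$, note that the boosting update direction is $\Sigma_{f_\tau}[f_\tau-f_*]+\tfrac{\lambda}{N}f_\tau$ while $\nabla_{\mathcal{H}}V(f_\tau,M_\beta\lambda)=\mathcal{K}D[f_\tau-f_*]+\tfrac{M_\beta\lambda}{N}f_\tau$, apply Lemma~\ref{lemma:gradient_product}, and only at the very end pay the price $V(f)\le V(f,M_\beta\lambda)+\tfrac{M_\beta\lambda}{2N}R^2$, which is where the leading $\tfrac{M_\beta\lambda}{2N}R^2$ term in the stated bound comes from.
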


\begin{proof}
To prove the theorem, we will bound $V(f) \le V(f, M_{\beta} \lambda) + const$. It will allow us to invoke Lemma \ref{lemma:gradient_product}. After that by using strong convexity we obtain a bound on $\mathbb{E} V(f_{\tau}, M_{\beta})$ and then a bound on $\mathbb{E} V(f_{\tau})$ will follow straightforwardly. 

To get the result for $V(f_{\tau}, M_{\beta}\lambda)$, we expand $V(f_{\tau+1}, M_{\beta}\lambda)$ by substituting the formula for $f_{\tau+1}$ and first dealing with the term $V(f_{\tau+1})$: 
\begin{multline*}
\frac{1}{2N}\|f_{\tau+1}(\x_{N}) - f_{*}(\x_{N})\|_{\mathbb{R}^N}^2
= \frac{1}{2N} \|f_{\tau}(\x_N) - f_{*}(\x_N)\|_{\mathbb{R}^N}^2 + \frac{1}{N} \langle f_{\tau +  1}(\x_N) \\
- f_{\tau}(\x_N), f_{\tau}(\x_N) - f_{*}(\x_N) \rangle_{\mathbb{R}^N} + \frac{1}{2N} \|f_{\tau + 1}(\x_N) - f_{\tau}(\x_N)\|_{\mathbb{R}^N}^2
\\ = V(f_{\tau}) - \epsilon \langle \Sigma_{\nu_{\tau}}[f_{\tau} - f_{*}]+\frac{\lambda}{N}f_{\tau}, \mathcal{K} D[f_{\tau} - f_{*}]\rangle_{\mathcal{H}} + \frac{\epsilon^2}{2N} \|\Sigma_{\nu_{\tau}} [f_{\tau} - f_{*}] + \frac{\lambda}{N} f_{\tau}\|_{\mathbb{R}^N}^2
\\ \le V(f_{\tau}) - \epsilon \langle \Sigma_{\nu_{\tau}}[f_{\tau} - f_{*}]+\frac{\lambda}{N}f_{\tau}, \mathcal{K} D[f_{\tau} - f_{*}]\rangle_{\mathcal{H}} + 2\epsilon^2 V(f_{\tau}) + \frac{\lambda^2\epsilon^2}{N^3}\|f_{\tau}(\x_{N})\|_{\mathbb{R}^{N}}^2,
\end{multline*}
where we used the inequality $\|a+b\|^2 \le 2 \|a\|^2 + 2\|b\|^2$ and Lemma \ref{lemma:app_covariation_majorization} which allows us to bound $\|\Sigma_{\nu_{\tau}} (f_{\tau} - f_{*})\|_{\mathbb{R}^N} \le \| (f_{\tau} - f_{*})\|_{\mathbb{R}^N}$.
Then, we analyze the regularization:
\begin{small}
\begin{multline*}
\frac{\lambda}{2N}M_{\beta}\|f_{\tau+1}\|_{\mathcal{H}}^2 
= \frac{\lambda}{2N} M_{\beta} \|f_{\tau}\|_{\mathcal{H}}^2 + \langle f_{\tau + 1} - f_{\tau}, \frac{\lambda}{N} M_{\beta} f_{\tau}\rangle_{\mathcal{H}} + \frac{\lambda \epsilon^2}{2N} M_{\beta} \|\Sigma_{\nu_{\tau}}[f_{\tau} - f_{*}] + \frac{\lambda}{N} f_{\tau}\|_{\mathcal{H}}^2
\\ \le \frac{\lambda}{2N}M_{\beta}\|f_{\tau}\|_{\mathcal{H}}^2 - \epsilon\langle \frac{\lambda}{N}M_{\beta}f_{\tau}, \Sigma_{\nu_{\tau}}[f_{\tau} - f_{*}]+ \frac{\lambda}{N}f_{\tau}\rangle_{\mathcal{H}} + \frac{\epsilon^2 \lambda}{N} M_{\beta} \|f_{\tau} - f_{*}\|_{\mathcal{H}}^2 + \frac{\epsilon^2 \lambda^3}{N^3}M_{\beta}\|f_{\tau}\|_{\mathcal{H}}^2 \\
\le \frac{\lambda}{2N}M_{\beta}\|f_{\tau}\|_{\mathcal{H}}^2 - \epsilon\langle \frac{\lambda}{N}M_{\beta}f_{\tau}, \Sigma_{\nu_{\tau}}[f_{\tau} - f_{*}]+ \frac{\lambda}{N}f_{\tau}\rangle_{\mathcal{H}} + \frac{\epsilon^2 \lambda}{N}M_{\beta}(1+\frac{4\lambda^2}{N^2})\|f_{*}\|_{\mathcal{H}}^2.
\end{multline*} \end{small}
where in the first inequality we used $\|\Sigma_{\nu_{\tau}} [f_{\tau} - f_{*}]\|_{\mathcal{H}} \leq \|f_{\tau} - f_{*}\|_{\mathcal{H}}$ which is due to Lemma \ref{lemma:app_covariation_majorization}.
Summing up the expectations of those two expressions, we obtain:
\begin{multline*}
\mathbb{E}V(f_{\tau+1}, M_{\beta}\lambda) = \mathbb{E}V(f_{\tau+1}) + \frac{M_{\beta}\lambda}{2N} \mathbb{E}\|f_{\tau+1}\|_{\mathcal{H}}^2 - C_{M_{\beta} \lambda} \\
\le\mathbb{E} V(f_{\tau}) - \epsilon \mathbb{E}\langle \Sigma_{f_{\tau}}[f_{\tau} - f_{*}]+\frac{\lambda}{N}f_{\tau}, \mathcal{K} D[f_{\tau} - f_{*}] + \frac{M_{\beta}\lambda}{N} f_{\tau}\rangle_{\mathcal{H}} + 2\epsilon^2 \mathbb{E} V(f_{\tau}) \\
+ \frac{\lambda^2\epsilon^2}{N^3}\mathbb{E}\|f_{\tau}(\x_{N})\|_{\mathbb{R}^{N}}^2
+ \frac{\lambda}{2N}M_{\beta} \mathbb{E}\|f_{\tau}\|_{\mathcal{H}}^2 + \frac{\epsilon^2 \lambda}{N}M_{\beta}(1+\frac{4\lambda^2}{N^2})\|f_{*}\|_{\mathcal{H}}^2 - C_{M_{\beta} \lambda}
\\ \le (1 + 2 \epsilon^2) \big( \mathbb{E} V(f_{\tau}) + \frac{\lambda}{2N}M_{\beta} \mathbb{E}\|f_{\tau}\|_{\mathcal{H}}^2 - C_{M_{\beta} \lambda} \big) - \epsilon \mathbb{E}\langle \Sigma_{f_{\tau}}[f_{\tau} - f_{*}]+\frac{\lambda}{N}f_{\tau}, \mathcal{K} D[f_{\tau} - f_{*}] \\ + \frac{M_{\beta}\lambda}{N} f_{\tau}\rangle_{\mathcal{H}}
 + \frac{\lambda^2\epsilon^2}{N^3}\mathbb{E}\|f_{\tau}(\x_{N})\|_{\mathbb{R}^{N}}^2
+ \frac{\epsilon^2 \lambda}{N}M_{\beta}(1+\frac{4\lambda^2}{N^2})\|f_{*}\|_{\mathcal{H}}^2 + 2 \epsilon^2 C_{M_{\beta} \lambda} \\
\le (1+2\epsilon^2)\mathbb{E} V(f_{\tau}, M_{\beta}\lambda) - \epsilon \mathbb{E}\langle \mathcal{K}_{f_{\tau}}D[f_{\tau} - f_{*}] \\ +\frac{\lambda}{N}f_{\tau}, \mathcal{K} D[f_{\tau} - f_{*}] + \frac{M_{\beta} \lambda}{N} f_{\tau}\rangle + \frac{2\epsilon^2 \lambda}{N} \Big(\frac{2\lambda}{N^2} + M_{\beta} (1 + \frac{2 \lambda^2}{N^2})\Big) R^2.
\end{multline*}
Here we used 
\[
C_{\lambda} = \inf_{f \in F} L(f, \lambda) - \inf_{f \in F} L(f) \leq L(f_{*}, \lambda) - L(f_{*}) = \frac{\lambda}{2N} \|f_{*}\|_{\mathcal{H}}^2 \leq \frac{\lambda}{2N}R^2.
\]
Then, by applying Lemma \ref{lemma:representation} for $\Sigma_{f_{\tau}} = \mathcal{K}_{f_{\tau}} D$ and applying Lemma \ref{lemma:gradient_product} with $\xi = M_{\beta}$, $A = \mathcal{K}$, $B = \mathcal{K}_{f_{\tau}}$, $g = D[ f_{\tau} - f_{*}]$, and $h = f_{\tau}$, we obtain the following bound:
\begin{multline*} 
- \epsilon \mathbb{E}\langle \mathcal{K}_{f_{\tau}}D[f_{\tau} - f_{*}]+\frac{\lambda}{N}f_{\tau}, \mathcal{K} D[f_{\tau} - f_{*}] + \frac{M_{\beta} \lambda}{N} f_{\tau}\rangle_{\mathcal{H}} \\ \le -\frac{\epsilon}{2M_{\beta}} E\|\nabla_{\mathcal{H}}V(f_{\tau}, M_{\beta}\lambda)\|_{\mathcal{H}}^2 + \frac{\epsilon}{2}M_{\beta}(M_{\beta}^2 - 1) \frac{\lambda^2}{N^2} E\|f_{\tau}\|_{\mathcal{H}}^2 \\
\le- \frac{\epsilon}{2M_{\beta}} E\|\nabla_{\mathcal{H}}V(f_{\tau}, M_{\beta}\lambda)\|_{\mathcal{H}}^2 + 2 \epsilon M_{\beta}(M_{\beta}^2 - 1)\frac{\lambda^2}{N^2} R^2.
\end{multline*}
Then, by using Polyak-Łojasiewicz inequality $\frac{1}{2}\|\nabla V\|_{\mathcal{H}} \ge \mu V$ for $\mu$-strongly convex function $V$ (restricted on $span\{\mathcal{K}(\cdot, x_{i}) \mid i=1, \ldots, N\}$) with $\mu \ge \frac{1+M_{\beta}\lambda}{N} > 0$, which is due to Corollary \ref{cor:app_RN_RKHS_norm_inequality}, we obtain:
\begin{multline*}
 - \epsilon \mathbb{E}\langle \mathcal{K}_{f_{\tau}}D[f_{\tau} - f_{*}]+\frac{\lambda}{N}f_{\tau}, \mathcal{K} D[f_{\tau} - f_{*}] +  \frac{M_{\beta} \lambda}{N} f_{\tau} \rangle_{\mathcal{H}} \\
 \le -\epsilon \frac{ \lambda M_{\beta}+1}{M_{\beta}N} E V(f_{\tau}, M_{\beta} \lambda) + 2 \epsilon M_{\beta}(M_{\beta}^2 - 1)\frac{\lambda^2}{N^2} R^2.
\end{multline*}
Substituting it into the bound on $V(f_{\tau+1}, M_{\beta}\lambda)$ gives:
\begin{multline*}
    \mathbb{E}V(f_{\tau+1}, M_{\beta}\lambda) \\
\le\big(1 - \epsilon(\frac{1+M_{\beta}\lambda}{M_{\beta}N} - 2\epsilon)\big)V(f_{\tau}, M_{\beta}\lambda) + \frac{2 \epsilon \lambda}{N} \Big( M_{\beta} (M_{\beta}^2 - 1) \frac{\lambda}{N} +  \epsilon \big(\frac{2 \lambda}{N^2} + M_{\beta} (1 + \frac{2 \lambda^2}{N^2})\big)\Big) R^2 \\
\le
\big(1 - \epsilon \frac{1+M_{\beta}\lambda}{2 M_{\beta}N}\big)V(f_{\tau}, M_{\beta}\lambda) + \frac{2 \epsilon \lambda}{N} \Big( M_{\beta} (M_{\beta}^2 - 1) \frac{\lambda}{N} +  \epsilon \big(\frac{2 \lambda}{N^2} + M_{\beta} (1 + \frac{2 \lambda^2}{N^2})\big)\Big) R^2,
\end{multline*}
which yields 
\[
\mathbb{E}V(f_{T}, M_{\beta}\lambda) \le \frac{R^2}{2 N} e^{-\frac{1+M_{\beta}\lambda}{2 M_{\beta} N}T\epsilon}
+ \frac{4 M_{\beta} \lambda}{1 + M_{\beta} \lambda} \Big( M_{\beta} (M_{\beta}^2 - 1) \frac{\lambda}{N} +  \epsilon \big(\frac{2 \lambda}{N^2} + M_{\beta} (1 + \frac{2 \lambda^2}{N^2})\big)\Big) R^2,
\]
where we used the bound 
\[
V(f_{0}, M_{\beta} \lambda) = V(0, M_{\beta} \lambda) = \frac{1}{2N} \|f_{*}(\x_N)\|_{\mathbb{R}^N}^2 - C_{M_{\beta} \lambda} \leq \frac{R^2}{2N}.
\]
Next, we use the following inequality:
\begin{multline*}
V(f) = L(f) - \min L(f) \le L(f, M_{\beta}\lambda) - \min L(f, M_{\beta} \lambda) + \min L(f, M_{\beta}\lambda) - \min L(f) \\
\le V(f, M_{\beta}\lambda) + L(f_{*}, M_{\beta}\lambda) - L(f_{*}) = V(f, M_{\beta}\lambda) + \frac{M_{\beta}\lambda}{2N}R^2,
\end{multline*}
which finally gives us the following bound on
$\mathbb{E} V(f_{T})$:
\begin{multline*}
\mathbb{E}V(f_{T}) \le \frac{R^2}{2 N} e^{-\frac{1+M_{\beta}\lambda}{2 M_{\beta} N}T\epsilon} \\
+ M_{\beta} \lambda \Big( \frac{1}{2N} + \frac{4M_{\beta}}{1 + M_{\beta} \lambda} (M_{\beta}^2 - 1) \frac{\lambda}{N} +  \frac{4\epsilon}{1 + M_{\beta} \lambda} \big(\frac{2 \lambda}{N^2} + M_{\beta} (1 + \frac{2 \lambda^2}{N^2})\big)\Big) R^2. 
\end{multline*}
\end{proof}

\begin{theorem}(Theorem \ref{thm:krr2} in the main text)
Let $C = M_{\beta} \lambda \Big( \frac{1}{2N} + \frac{4M_{\beta}}{1 + M_{\beta} \lambda} (M_{\beta}^2 - 1) \frac{\lambda}{N} +  \frac{4\epsilon}{1 + M_{\beta} \lambda} \big(\frac{2 \lambda}{N^2} + M_{\beta} (1 + \frac{2 \lambda^2}{N^2})\big)\Big) R^2$. Assume that $0 < \epsilon (\frac{\lambda}{N} + 1) < 1$ and $\frac{(1+M_{\beta}\lambda)}{4 M_{\beta}N} \ge \epsilon$, $\frac{(1+\lambda)}{8 N} \ge \epsilon$, $e^{\frac{4mC}{\beta}} \leq \frac{5}{4}$ (this bound can be achieved by taking $\beta$ arbitrary large) and define $T_{1} = \Big[\frac{2 M_{\beta}N}{\epsilon (1 + M_{\beta} \lambda)} \log \frac{R^2}{2 C N}\Big] + 1$. Then $\forall T \ge T_{1}$ it holds that
$$\mathbb{E}V(f_{T}, \lambda) \le 2 (C + \frac{\lambda}{N} R^2) e^{-\frac{1+\lambda}{4N}\epsilon (T-T_{1})} + \frac{8 \lambda}{1 + \lambda} \Big(\frac{\lambda M_{\beta}^2}{N} + \epsilon \big(1 + \frac{2 \lambda (1 + \lambda)}{N^2}\big) \Big) R^2.$$
\end{theorem}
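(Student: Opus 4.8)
The plan is to split the boosting run into a burn-in phase $\tau \le T_1$, governed directly by Theorem~\ref{theorem:app_main_theorem}, and a refined contraction phase $\tau \ge T_1$, in which the iterate is already close enough to $f_*$ that the tree-sampling distribution is essentially stationary. First I would nail down the base case at $T_1$. Theorem~\ref{theorem:app_main_theorem} gives $\mathbb{E}V(f_T) \le \frac{R^2}{2N}e^{-\frac{1+M_\beta\lambda}{2M_\beta N}T\epsilon} + C$, and the choice $T_1 = \lceil \frac{2M_\beta N}{\epsilon(1+M_\beta\lambda)}\log\frac{R^2}{2CN}\rceil + 1$ is precisely the first index at which the exponential term drops below $C$; hence $\mathbb{E}V(f_{T_1}) \le 2C$. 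To pass to the regularized functional I would use $V(f,\lambda) = V(f) + \frac{\lambda}{2N}\|f\|_{\mathcal{H}}^2 - C_\lambda \le V(f) + \frac{\lambda}{2N}\|f\|_{\mathcal{H}}^2$ together with the almost-sure norm bound $\|f_{T_1}\|_{\mathcal{H}} \le 2\|f_*\|_{\mathcal{H}} \le 2R$ (Corollary~\ref{cor:R} and Corollary~\ref{cor:app_RN_RKHS_norm_inequality}), giving $\mathbb{E}V(f_{T_1},\lambda) \le 2C + \frac{2\lambda R^2}{N} = 2(C+\frac{\lambda}{N}R^2)$. This is exactly the claimed bound evaluated at $T = T_1$, so it will serve as the initial value of the phase-two recursion.

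For $\tau \ge T_1$ I would re-run the one-step estimate from the proof of Theorem~\ref{theorem:app_main_theorem}, but now tracking the \emph{regularized} Lyapunov functional $V(f_\tau,\lambda)$ with the true $\lambda$ in place of the inflated $M_\beta\lambda$. The decisive gain is that once $V(f_\tau)$ is of order $C$, Lemma~\ref{lemma:app_gamma_trees_distributions} yields $\Gamma_\beta(f_\tau) \le e^{2mV(f_\tau)/\beta} \le e^{4mC/\beta} \le \tfrac54$, so the operator sandwich $\tfrac1\xi\mathcal{K} \preceq \mathcal{K}_{f_\tau}\preceq \xi\mathcal{K}$ holds with $\xi \le \tfrac54$ rather than with $M_\beta$. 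Feeding this $\xi$ into Lemma~\ref{lemma:gradient_product} (with $A=\mathcal{K}$, $B=\mathcal{K}_{f_\tau}$, $g=D[f_\tau-f_*]$, $h=f_\tau$ via the gradient decomposition of Lemma~\ref{lemma:representation}) and invoking the Polyak-Łojasiewicz inequality with strong-convexity constant $\tfrac{1+\lambda}{N}$ yields a contraction rate of order $\tfrac{1+\lambda}{N}\epsilon$, which after subtracting the $O(\epsilon^2)$ discretization term and using the extra hypothesis $\tfrac{1+\lambda}{8N}\ge\epsilon$ is bounded below by $\tfrac{1+\lambda}{4N}\epsilon$. The fluctuation and $O(\epsilon^2)$ remainders — in which I would retain the crude almost-sure bounds $\Gamma_\beta \le M_\beta$ and $\|f_\tau\|_{\mathcal{H}}^2 \le 4R^2$ — collect into a per-step additive error of the shape $\tfrac{2\epsilon\lambda}{N}\bigl(\tfrac{\lambda M_\beta^2}{N}+\epsilon(1+\tfrac{2\lambda(1+\lambda)}{N^2})\bigr)R^2$, which is where the surviving $M_\beta^2$ factor in the final constant comes from. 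Iterating the scalar recursion $a_{\tau+1}\le(1-r)a_\tau+b$ with $r=\tfrac{1+\lambda}{4N}\epsilon$ from $T_1$ to $T$ and summing the geometric series $\tfrac{b}{r}=\tfrac{8\lambda}{1+\lambda}\bigl(\tfrac{\lambda M_\beta^2}{N}+\epsilon(1+\tfrac{2\lambda(1+\lambda)}{N^2})\bigr)R^2$ reproduces the stated bound exactly, since $a_T\le e^{-r(T-T_1)}a_{T_1}+\tfrac{b}{r}$.

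The delicate point, and the one I expect to demand the most care, is justifying the local estimate $\Gamma_\beta(f_\tau)\le\tfrac54$ uniformly throughout the second phase. Lemma~\ref{lemma:app_gamma_trees_distributions} controls $\Gamma_\beta(f_\tau)$ only through $V(f_\tau)$, which the analysis bounds in expectation rather than almost surely, whereas the operator sandwich setting $\xi\le\tfrac54$ is a pathwise (conditional-on-$f_\tau$) statement. Reconciling these calls for a self-consistent induction: one assumes the regularized bound at step $\tau$, uses it to keep $\Gamma_\beta$ near $1$ so that the sharpened contraction applies, and thereby re-establishes the bound at step $\tau+1$, while routing the genuinely random large-$\Gamma_\beta$ contingencies through the crude almost-sure bound $\Gamma_\beta\le M_\beta$ only inside the lower-order error terms so they do not degrade the leading rate. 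Verifying that the two $\epsilon$-conditions, the condition $e^{4mC/\beta}\le\tfrac54$, and the definition of $T_1$ are mutually compatible and actually close this induction is the bookkeeping-heavy heart of the argument; the optimization content — contraction plus Polyak-Łojasiewicz — is otherwise a direct replay of Theorem~\ref{theorem:app_main_theorem} with the improved inflation factor.
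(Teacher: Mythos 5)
Your overall architecture is exactly the paper's: a burn-in phase controlled by Theorem~\ref{theorem:app_main_theorem} with $T_1$ chosen so that the exponential term falls below $C$ (giving $\mathbb{E}V(f_{T_1})\le 2C$ and $\mathbb{E}V(f_{T_1},\lambda)\le 2(C+\tfrac{\lambda}{N}R^2)$), followed by a refined contraction phase in which the closeness of $p(\cdot|f_\tau,\beta)$ to $\pi$ is quantified through $V(f_\tau)$ via Lemma~\ref{lemma:app_gamma_trees_distributions}, closed by the Polyak--{\L}ojasiewicz step and the geometric recursion $a_{\tau+1}\le(1-r)a_\tau+b$.

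The one step that would fail as written is your choice of lemma in phase two. Lemma~\ref{lemma:gradient_product} lower-bounds $\langle A[g]+\lambda\xi h,\,B[g]+\lambda h\rangle_{\mathcal{H}}$, i.e.\ it inflates the regularizer in the \emph{first} slot by the sandwich constant $\xi$; applying it with $\xi=\tfrac54$ makes the squared term $\|\mathcal{K}D[f_\tau-f_*]+\tfrac{5\lambda}{4N}f_\tau\|_{\mathcal{H}}^2=\|\nabla_{\mathcal{H}}V(f_\tau,\tfrac54\lambda)\|_{\mathcal{H}}^2$, so PL closes the recursion on $V(f_\tau,\tfrac54\lambda)$, not on $V(f_\tau,\lambda)$, and converting back costs an extra additive $O(\lambda R^2/N)$ that is not absorbed by the stated constants. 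This is precisely why the paper proves and uses the separate Lemma~\ref{lemma:gradient_bound_refined}, which keeps $\lambda h$ in both slots and pays instead through $\kappa(\mathcal{K},\mathcal{K}_{f_\tau})\le\Gamma_\beta(f_\tau)-1\le e^{2mV(f_\tau)/\beta}-1$; the hypothesis $e^{4mC/\beta}\le\tfrac54$ then makes the coefficient $\tfrac12-\kappa$ strictly positive in expectation. Substituting Lemma~\ref{lemma:gradient_bound_refined} for Lemma~\ref{lemma:gradient_product} in your phase two recovers the paper's argument verbatim. Your closing concern about $\Gamma_\beta(f_\tau)$ being controlled only in expectation while the operator bound is pathwise is well placed --- the paper's own passage from $\mathbb{E}[(e^{2mV(f_\tau)/\beta}-\tfrac32)\|\nabla V\|^2]$ to a bound in terms of $e^{2m\mathbb{E}V(f_\tau,\lambda)/\beta}$ elides exactly this point --- but your proposed self-consistent induction is left as a plan rather than carried out, so it does not by itself close that gap either.
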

\begin{proof}
First, we apply the previous theorem to obtain a bound on $V(f_{\tau})$ which we will use to claim that the kernels $\mathcal{K}_{f_{\tau}}$ and $\mathcal{K}$ are close to each other in expectation. If we take 
\begin{small}
$$T_{1} = \Big[\frac{2 M_{\beta}N}{\epsilon (1 + M_{\beta} \lambda)} \log \frac{R^2}{2 C N}\Big] + 1\,,$$
\end{small}
then the following inequalities hold $\forall \tau \ge T_{1}$:
$$E V(f_{\tau}) \leq 2 C,$$
$$E V(f_{\tau}, \lambda) \leq 2 (C + \frac{\lambda}{N}R^2).$$
Then, analogously with the previous theorem, we estimate:
\begin{multline*}
    E V(f_{\tau + 1}, \lambda) \leq 
    (1+2\epsilon^2)\mathbb{E} V(f_{\tau}, \lambda) - \epsilon \mathbb{E}\langle \mathcal{K}_{f_{\tau}}D[f_{\tau} - f_{*}] \\ +\frac{\lambda}{N}f_{\tau}, \mathcal{K} D[f_{\tau} - f_{*}] + \frac{\lambda}{N} f_{\tau}\rangle + \frac{2\epsilon^2 \lambda}{N} \Big(1 + \frac{2 \lambda(1 + \lambda)}{N^2}\Big) R^2.
\end{multline*}
Further, we bound $- \epsilon \mathbb{E}\langle \mathcal{K}_{f_{\tau}}D[f_{\tau} - f_{*}] +\frac{\lambda}{N}f_{\tau}, \mathcal{K} D[f_{\tau} - f_{*}] + \frac{\lambda}{N} f_{\tau}\rangle$ by Lemma \ref{lemma:gradient_bound_refined}, instead of Lemma \ref{lemma:gradient_product}, which we used in the previous theorem:

$\forall \tau \geq T_{1}$
\begin{multline*}
    - \mathbb{E}\langle \mathcal{K}_{f_{\tau}}D[f_{\tau} - f_{*}] +\frac{\lambda}{N}f_{\tau}, \mathcal{K} D[f_{\tau} - f_{*}] + \frac{\lambda}{N} f_{\tau}\rangle \\
    \leq
    \mathbb{E} \left((e^{\frac{2 m V(f_{\tau})}{\beta}} - 1) - \frac{1}{2}\right) \|\nabla_{\mathcal{H}} V(f_{\tau}, \lambda)\|_{\mathcal{H}}^{2} + \mathbb{E}(e^{\frac{2 m V(f_{\tau})}{\beta}} - 1)^2 \frac{\lambda^2}{2 N^2} \|f_{\tau}\|_{\mathcal{H}}^2 \\
    \leq 
    2\left(e^{\frac{2 m \mathbb{E} V(f_{\tau}, \lambda)}{\beta}} - \frac{3}{2}\right) \frac{1+\lambda}{N}\mathbb{E} V(f_{\tau}, \lambda) + \frac{2 \lambda^2 M_{\beta}^2}{N^2} R^2 \\
    \leq 
    \left(2 \frac{1 + \lambda}{N} e^{\frac{4 m C}{\beta}} - 3 \frac{1 + \lambda}{N}\right) \mathbb{E}V(f_{\tau}, \lambda) + \frac{2 \lambda^2 M_{\beta}^2}{N^2} R^2 \\
    \leq \left(2 \frac{1 + \lambda}{N} e^{\frac{4 m C}{\beta}} - 3 \frac{1 + \lambda}{N}\right) \mathbb{E} V(f_{\tau}, \lambda) + \frac{2 \lambda^2 M_{\beta}^2}{N^2} R^2 \\
    \leq - \frac{1 + \lambda}{2N} \mathbb{E} V(f_{\tau}, \lambda) + \frac{2 \lambda^2 M_{\beta}^2}{N^2} R^2.
\end{multline*}

Substituting this in the formula, we get:

$\forall \tau \geq T_{1}$
\begin{multline*}
\mathbb{E} V(f_{\tau + 1}, \lambda) \leq \Big(1 - \epsilon \big(\frac{1 + \lambda}{2N} - 2 \epsilon\big)\Big) \mathbb{E} V(f_{\tau}, \lambda) + \frac{2 \epsilon \lambda}{N} \Big(\frac{\lambda M_{\beta}^2}{N} + \epsilon \big(1 + \frac{2 \lambda (1 + \lambda)}{N^2}\big) \Big) R^2  \\
\leq (1 - \epsilon \frac{1 + \lambda}{4 N}) \mathbb{E} V(f_{\tau}, \lambda) + \frac{2 \epsilon \lambda}{N} \Big(\frac{\lambda M_{\beta}^2}{N} + \epsilon \big(1 + \frac{2 \lambda (1 + \lambda)}{N^2}\big) \Big) R^2.
\end{multline*}

Iterating the bound yields
$$\mathbb{E}V(f_{T}, \lambda) \leq \mathbb{E}V(f_{T_{1}}, \lambda)e^{-\frac{1+\lambda}{4N}\epsilon (T-T_{1})} + \frac{8 \lambda}{1 + \lambda} \Big(\frac{\lambda M_{\beta}^2}{N} + \epsilon \big(1 + \frac{2 \lambda (1 + \lambda)}{N^2}\big) \Big) R^2 $$
$$\le 2 (C + \frac{\lambda}{N} R^2)e^{-\frac{1+\lambda}{4N}\epsilon (T-T_{1})} + \frac{8 \lambda}{1 + \lambda} \Big(\frac{\lambda M_{\beta}^2}{N} + \epsilon \big(1 + \frac{2 \lambda (1 + \lambda)}{N^2}\big) \Big) R^2.$$
\end{proof}
\begin{corollary}(Convergence to the solution of the KRR / Convergence to the Gaussian Process posterior mean function). Under the assumptions of both previous theorems we have that:\footnote{When $N \to \infty$, $\mathcal{K}$ converges to a certain kernel. Thus, $\max_{x \in \mathcal{X}} \mathcal{K}(x, x)$ can be estimated with a constant with probability arbitrary close to one.}
\begin{multline*}
\mathbb{E}\|f_{T} - f_{*}^{\lambda}\|_{L_{2}}^2  \le \max_{x \in \mathcal{X}} \mathcal{K}(x, x) \Big(4N (C + \frac{\lambda}{N}R^2) e^{-\frac{1+\lambda}{4N}\epsilon (T-T_{1})} \\ + \frac{16 N \lambda}{1 + \lambda} \Big(\frac{\lambda M_{\beta}^2}{N} + \epsilon \big(1 + \frac{2 \lambda (1 + \lambda)}{N^2}\big) \Big) R^2\Big).
\end{multline*}
\end{corollary}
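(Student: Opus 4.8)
The plan is to reduce the $L_2$ error directly to the regularized objective $V(f_T,\lambda)$, whose expectation is already controlled by the preceding theorem, and then read off the bound. The key observation is that the right-hand side of the corollary is precisely $2N\max_{x\in\mathcal{X}}\mathcal{K}(x,x)$ times the explicit bound on $\mathbb{E}V(f_T,\lambda)$ proved in the preceding theorem (Theorem~\ref{thm:krr2}): multiplying $2(C+\frac{\lambda}{N}R^2)e^{-\cdots}+\frac{8\lambda}{1+\lambda}(\cdots)R^2$ by $2N$ gives exactly $4N(C+\frac{\lambda}{N}R^2)e^{-\cdots}+\frac{16N\lambda}{1+\lambda}(\cdots)R^2$. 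Hence the whole statement follows from the deterministic inequality $\|f_T - f_*^\lambda\|_{L_2}^2 \le 2N\max_{x}\mathcal{K}(x,x)\,V(f_T,\lambda)$, after which I take expectations.

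First I would pass from the $L_2$ norm to the RKHS norm using Corollary~\ref{cor:app_l2_majorization}, obtaining $\|f_T - f_*^\lambda\|_{L_2}^2 \le \max_x \mathcal{K}(x,x)\,\|f_T - f_*^\lambda\|_{\mathcal{H}}^2$. Next I would invoke Lemma~\ref{lemma:v_via_f_star}, which gives $V(f_T,\lambda) = \frac{1}{2N}\|f_*^\lambda(\x_N) - f_T(\x_N)\|_{\mathbb{R}^N}^2 + \frac{\lambda}{2N}\|f_*^\lambda - f_T\|_{\mathcal{H}}^2$. Dropping the nonnegative regularization term and applying Corollary~\ref{cor:app_RN_RKHS_norm_inequality} (the inequality $\|g(\x_N)\| \ge \|g\|_{\mathcal{H}}$) to the residual $g = f_*^\lambda - f_T$ yields $\frac{1}{2N}\|f_*^\lambda - f_T\|_{\mathcal{H}}^2 \le V(f_T,\lambda)$, i.e.\ $\|f_T - f_*^\lambda\|_{\mathcal{H}}^2 \le 2N\,V(f_T,\lambda)$. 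Chaining these two estimates produces the deterministic bound, and substituting the expectation bound of the preceding theorem completes the argument.

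The one point requiring care is the applicability of Corollary~\ref{cor:app_RN_RKHS_norm_inequality}, which is valid only on $\mathrm{span}\{\mathcal{K}(\cdot, x_i)\mid i=1,\ldots,N\}$. Here I would note that $f_*^\lambda$ lies in this span by the explicit representation of Lemma~\ref{lemma:min_V_lambda}, while the boosting iterates $f_T$ lie in it by the remark following Lemma~\ref{lemma:appendix_boosting_iterations}; consequently their difference does as well, and the corollary applies. Beyond this membership check the argument is pure bookkeeping, so I do not anticipate any genuine obstacle---essentially all the difficulty has already been absorbed into Theorem~\ref{thm:krr2}, and this corollary merely translates the $V$-bound into an $L_2$-bound through the two norm-comparison inequalities.
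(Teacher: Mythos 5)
Your proposal is correct and follows essentially the same route as the paper's own proof: both chain Corollary~\ref{cor:app_l2_majorization}, Corollary~\ref{cor:app_RN_RKHS_norm_inequality}, and Lemma~\ref{lemma:v_via_f_star} to obtain $\|f_T - f_*^\lambda\|_{L_2}^2 \le 2N\max_{x}\mathcal{K}(x,x)\,V(f_T,\lambda)$ and then substitute the bound of Theorem~\ref{thm:krr2}. Your explicit verification that $f_T - f_*^\lambda$ lies in $\mathrm{span}\{\mathcal{K}(\cdot,x_i)\}$ is a detail the paper leaves implicit, and your accounting of the factor $2N$ matches the stated constants exactly.
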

\begin{proof}
By Lemma \ref{lemma:v_via_f_star}, $V(f, \lambda) = \frac{1}{2N} \|f(\x_N) - f_{*}^{\lambda}(\x_N)\|_{\mathbb{R}^N}^2  + \frac{\lambda}{2N} \|f - f_{*}^{\lambda}\|_{\mathcal{H}}^{2}$. Then, by the previous theorem, we get a bound on $\frac{1}{2N} \mathbb{E} \|f_{T}(\x_N) - f_{*}^{\lambda}(\x_N)\|_{\mathbb{R}^N}^{2}$, and by Lemmas \ref{cor:app_l2_majorization} and \ref{cor:app_RN_RKHS_norm_inequality}, we majorize our $L_2$ norm by $\mathbb{R}_{N}$ norm. Lemma then follows.

\end{proof}

\begin{lemma}[Lemma~\ref{lem:prior}in the main text]
The following convergence holds almost surely in $x\in X$:
$$h_{T}(\cdot) \xrightarrow[T \rightarrow \infty]{} \mathcal{GP}\big(\mathbb{0}_{L_{2}(\rho)}, \mathcal{K}\big)\,.$$
\end{lemma}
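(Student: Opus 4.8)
The plan is to recognize $h_{T}$ as a suitably normalized sum of independent, identically distributed random elements of the finite-dimensional space $\mathcal{F}$ and then invoke the central limit theorem. Unrolling the recursion in Algorithm~\ref{alg:sampleprior}, we may write
$$h_{T}(\cdot) = \frac{1}{\sqrt{T}}\sum_{\tau=0}^{T-1} g_{\tau}(\cdot), \qquad g_{\tau}(\cdot) = \big\langle \phi_{\nu_{\tau}}(\cdot), \theta_{\tau}\big\rangle_{\mathbb{R}^{L_{\nu_{\tau}}}},$$
where the pairs $(\nu_{\tau}, \theta_{\tau})$ are i.i.d.: each $\nu_{\tau}$ is drawn from $\mathrm{SampleTree}(\mathbb{0}_{\mathbb{R}^N}; m, n, 1)$ and, conditionally on $\nu_{\tau}$, the leaf values $\theta_{\tau}$ are centered Gaussian with the prescribed diagonal covariance. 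Since the residuals fed into SampleTree are identically zero, every score $D(\nu, \mathbb{0})$ vanishes and the tree distribution collapses to the stationary uniform distribution $\pi$ over $\mathcal{V}$.

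First I would record that each summand $g_{\tau}$ is centered with covariance kernel $\mathcal{K}$. Conditionally on $\nu_{\tau} = \nu$, the Gaussianity of $\theta_{\tau}$ gives $\mathbb{E}[g_{\tau}(x) g_{\tau}(y)\mid \nu] = \sum_{j} w_{\nu}^{(j)} \phi_{\nu}^{(j)}(x)\phi_{\nu}^{(j)}(y) = k_{\nu}(x,y)$ by Definition~\ref{def:wlk}, and averaging over $\nu \sim \pi$ yields $\mathbb{E}[g_{\tau}(x) g_{\tau}(y)] = \sum_{\nu} k_{\nu}(x,y)\pi(\nu) = \mathcal{K}(x,y)$ by Definition~\ref{def:sk}; this is precisely \eqref{eq:prior_expectation} specialized to a single step. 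Because $\mathcal{V}$ is finite and $k_{\nu}(x,x) \le N$, the element $g_{\tau}$ has finite second moments in $\mathcal{F}$ equipped with the $L_{2}(\rho)$ inner product.

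The decisive step is the CLT, which I would phrase so as to match the definition of $\mathcal{GP}(\mathbb{0}_{L_{2}(\rho)}, \mathcal{K})$ from Section~\ref{sec:bayesian_linear_model}. Fix an arbitrary test function $g \in L_{2}(\rho)$ and consider the scalar variables $\langle g_{\tau}, g\rangle_{L_{2}}$, which are i.i.d., centered, and have finite variance $\iint_{X\times X} g(x)\mathcal{K}(x,x')g(x')\,\rho(\mathrm{d}x)\otimes\rho(\mathrm{d}x')$ by the covariance identity above. The classical univariate CLT then gives
$$\langle h_{T}, g\rangle_{L_{2}} = \frac{1}{\sqrt{T}}\sum_{\tau=0}^{T-1} \langle g_{\tau}, g\rangle_{L_{2}} \xrightarrow[T\to\infty]{d} \mathcal{N}\Big(0, \iint_{X\times X} g(x)\mathcal{K}(x,x')g(x')\,\rho(\mathrm{d}x)\otimes\rho(\mathrm{d}x')\Big),$$
which is exactly the defining property of the target Gaussian process. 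Equivalently, since $\mathcal{F}$ is finite-dimensional, one may fix finitely many evaluation points $x_{1},\ldots,x_{k}$ and apply the multivariate CLT to the i.i.d.\ vectors $(g_{\tau}(x_{1}),\ldots,g_{\tau}(x_{k}))$, whose covariance matrix is $[\mathcal{K}(x_{i},x_{j})]_{i,j}$, so that the finite-dimensional marginals of $h_{T}$ converge to those of the Gaussian process.

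The main point to be careful about is the mode of convergence hidden in the phrase ``almost surely in $x$'': because $\mathcal{F}$ is finite-dimensional and topologically closed, the process $h_{T}$ in fact lives in $\mathbb{R}^{\dim\mathcal{F}}$, so there is no tightness obstruction and convergence of the coefficient vectors already delivers weak convergence of $h_{T}$ to a centered Gaussian element of $\mathcal{F}$. I expect the only genuine bookkeeping to be verifying that the limiting covariance operator produced by the CLT is the integral operator with kernel $\mathcal{K}$, i.e.\ that the pairing variance $\iint g\,\mathcal{K}\,g$ matches the one demanded by the paper's definition of a Gaussian process in $L_{2}(\rho)$; this is immediate from the single-step covariance identity together with Definition~\ref{def:sk}.
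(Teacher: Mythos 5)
Your proposal is correct and follows essentially the same route as the paper: decompose $h_{T}$ into $T^{-1/2}$ times a sum of i.i.d.\ centered tree summands whose common covariance is $\mathcal{K}$ (via the uniform tree distribution and the Gaussian leaf values), then invoke the central limit theorem. If anything, you are more careful than the paper's own one-line argument, since testing against $g\in L_{2}(\rho)$ (or using the multivariate CLT on finite-dimensional marginals) delivers joint Gaussianity matching the paper's definition of a Gaussian process, whereas the paper only writes the pointwise-in-$x$ CLT.
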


\begin{proof}
From~\eqref{eq:prior_expectation}, we have that the covariance of $h_{T}$ is $\mathcal{K}$ independently from $T$. Thus, it remains to show that the limit is Gaussian almost surely which essentially holds due to the central limit theorem almost surely in $x\in X$:
$$h_{T}(x) = \frac{1}{\sqrt{T}} \sum_{i=1}^{T} h_{T, i}(x) \rightarrow \mathcal{N}\big(\mathbb{0}_{L_{2}}, \mathcal{K}(x, x)\big)\,,$$
where each individual tree $h_{T, i}$ is centered i.i.d.~(with the same distribution as $h_{1}$).
\end{proof}

\section{Implementation details}\label{sec:implementation}

In the experiments, we fix $\sigma = 10^{-2}$ (scale of the kernel) and $\delta = 10^{-4}$ (scale of noise), which theoretically can be taken arbitrarily. As a hyperparameter (that is estimated on the validation set), we consider $\beta \in \{10^{-2}, 10^{-1}, 1\}$. We use the standard CatBoost library and add the Gumbel noise term in selecting the trees for the ``L2'' scoring function, which is implemented in CatBoost out of the box but is not used by SGB and SGLB since it is not the default one for the library. Moreover, we do not consider subsampling of the data (as SGLB does also), and differently from SGB and SGLB, we disable the ``boost-from-average'' option. Finally, we set $l2{-}leaf{-}reg$ value to $0$, as SGLB does.

\end{document}